\newtheorem{thm}{Theorem}
\newtheorem{defn}{Definition}
\newtheorem{cor}{Corollary}
\newtheorem{lem}{Lemma}
\newtheorem{remark}{Remark}
\newtheorem{example}{Example}
\newtheorem{problem}{Problem}
\DeclareMathOperator{\sgn}{sgn}
\title{M\"{o}bius Transformation-Based Circular Motion Control for Unicycle Robots in Nonconcentric Circular Geofences}
\author{Shubham Singh and Anoop Jain
\thanks{The authors are with the Department of Electrical Engineering, Indian Institute of Technology Jodhpur, 342030, India (e-mail: {shubham.1@iitj.ac.in}; {anoopj@iitj.ac.in}).}}
\begin{document}
\maketitle


\begin{abstract}
Nonuniform motion constraints are ubiquitous in robotic applications. Geofencing control is one such paradigm where the motion of a robot must be constrained within a predefined boundary. This paper addresses the problem of stabilizing a unicycle robot around a desired circular orbit while confining its motion within a \emph{nonconcentric} external circular boundary. Our solution approach relies on the concept of the so-called M\"{o}bius transformation that, under certain practical conditions, maps two nonconcentric circles to a pair of concentric circles, and hence, results in uniform spatial motion constraints. The choice of such a M\"{o}bius transformation is governed by the roots of a quadratic equation in the post-design analysis that decides how the regions enclosed by the two circles are mapped onto the two planes. We show that the problem can be formulated either as a \emph{trajectory-constraining problem} or an \emph{obstacle-avoidance problem} in the transformed plane, depending on these roots. Exploiting the idea of the barrier Lyapunov function, we propose a unique control law that solves both these contrasting problems in the transformed plane and renders a solution to the original problem in the actual plane. By relating parameters of two planes under M\"{o}bius transformation and its inverse map, we further establish a connection between the control laws in two planes and determine the control law to be applied in the actual plane. Simulation and experimental results are provided to illustrate the key theoretical developments. 
\end{abstract}

\begin{IEEEkeywords}
M\"{o}bius transformation, geofencing control, nonuniform constraints, circular motion control, unicycle robot, barrier Lyapunov function, stabilization.  
\end{IEEEkeywords}

\section{Introduction}\label{section_1}

\lettrine {D}ynamical systems, in practice, encounter \emph{nonuniform} operational constraints. For instance, robotic systems face several such constraints dictated by their application needs, especially in complex environments like irregular, cluttered, or confined spaces, and it is a challenging problem to synthesize a stabilizing controller for their safe operation. Particularly, in geofencing applications where the workspace is specified via virtual boundaries for a real-world geographic area, the robot's motion is limited by nonuniform spatial constraints, concerning the nature of geofence \cite{hermand2018constrained}. Several critical missions such as surveillance or monitoring near territorial boundaries or over a region affected by fire or flood, ocean explorations, autonomous driving in a lane, etc., are some examples where geofencing control of robotic systems is necessary for safety considerations. While allowing operating in the regime of a geofence, it is equally important that the proposed controller also handles nonholonomic motion constraints of a real robotic system such as UAV (unmanned aerial vehicle) and UGV (unmanned ground vehicle) \cite{lopez2019adaptive}. 

Motivated by these applications, this paper addresses the problem of stabilizing an underactuated unicycle robot around a desired circular orbit while restricting its trajectories within a nonconcentric external circular boundary. This differs from conventional circular motion/formation control problems in the literature \cite{chen2013remark, seyboth2014collective, brinon2015distributed, cao2015uav, yu2016distributed, brinon2019multirobot} where no restriction is imposed on the robots' (or agents') trajectories. Our previous work \cite{jain2019trajectory} in this direction addressed a problem, where the external circular boundary was concentric with the desired stabilizing circular orbit, thereby imposing uniform trajectory-restricting constraints on the robot's motion due to the constant radial distance between the two circles. An extension to this problem for a general class of polar curves was studied in \cite{hegde2023synchronization}, where both the stabilizing curve and the external trajectory-constraining boundary were concentric and had a similar nature. As a result, the radial distance between the two curves was constant throughout the complete rotation. However, in practice, the trajectory-constraining set (or boundary) is not necessarily concentric with the desired stabilizing curve, resulting in nonuniform spatial constraints on the robot's motion. Our consideration of a nonconcentric external circular boundary depicts one such practical scenario and makes the problem challenging. 

In recent years, there has been a growing interest among researchers in control design methodologies that accommodate system and operational constraints, primarily focusing on two solution approaches: control barrier function (CBF)-based methods \cite{ames2016control,dawson2023safe} and barrier Lyapunov function (BLF) based methods \cite{tee2009barrier,liu2018adaptive}. While the former involves solving a quadratic programming (QP) problem for control design \cite{desai2023auxiliary}, the latter is relatively simpler and extends the concept of conventional control Lyapunov function (CLF) to systems with constraints \cite{romdlony2016stabilization}. Various BLFs have been proposed in the literature to address a range of physical and operational constraints while achieving desired control objectives. These include re-centered barrier function \cite{panagou2015distributed}, parametric barrier function \cite{han2019robust}, universal barrier functions \cite{jin2018adaptive}, tangent-type BLF \cite{tang2013tangent}, integral-type BLF \cite{tee2012control}, among others. In this paper, we adopt the logarithmic-type BLF, introduced in \cite{tee2009barrier}, because of its simplicity in handling the constraints. 

The primary challenge in applying BLF-based approaches lies in appropriately designing the so-called barrier functions in their formulation that must accurately characterize the desired constraint requirements of a system or mission. In the literature, two types of barrier functions are commonly discussed: symmetric BLFs and asymmetric BLFs, addressing symmetric and asymmetric system constraints, respectively \cite{tee2009barrier,jin2018adaptive}. When focusing on meeting prescribed transient and steady-state specifications of the system's response, constraints can be formulated as time-varying barrier functions of output \cite{verginis2017robust}. When aiming to satisfy certain requirements on the system's states (e.g., the position of a robot in geofencing applications), the barrier functions must be derived considering the constraining-set properties. However, formulating such time-varying barrier functions becomes challenging in arbitrary geofencing regions, where motion constraints exhibit nonuniform spatial dependence and are not solely time-dependent, in general, \cite{singletary2022onboard,garg2019control,meng2020trajectory,jang2024safe}.

As a remedy, our approach involves transforming the proposed problem to a new coordinate system where spatial constraints remain symmetrical at all times. In this direction, we employ the concept of M\"{o}bius transformation \cite{priestley2003introduction,needham2023visual} to convert two nonconcentric circles into a pair of concentric circles, thereby imposing uniform spatial constraints on the robot's motion. We emphasize that it is not only about the mapping of the robot's position under M\"{o}bius transformation, instead, it is about the motion mapping of the unicycle robot where its velocity and control parameters are also mapped to the transformed plane. Consequently, our strategy involves designing the controller in the transformed plane first and then mapping it back to the actual plane to design the actual controller. Since our main motive in this paper is to establish a connection between robot motion in the actual plane and the transformed plane under M\"{o}bius transformation, our discussion is focused on the single robot case, which can be generalized to a multi-robot setting for solving desired cooperative control problems.

\paragraph*{Summary of Contributions}
Building upon the idea of M\"{o}bius transformation and leveraging its symmetry and circle preserving properties, we propose an appropriate M\"{o}bius transformation $f: z \mapsto w$ that maps two nonconcentric circles $\mathcal{C}$ and $\mathcal{C}'$ in the $z$-plane to the concentric circles $f(\mathcal{C})$ and $f(\mathcal{C}')$ in the $w$-plane, respectively. We show that the existence of such a M\"{o}bius transformation relies on certain natural conditions aligned with practical requirements for the problem at hand (such as non-touching, non-intersecting, etc.) on the circles $\mathcal{C}$ and $\mathcal{C}'$ and the (real) roots $\alpha_s$ and $\alpha_{\ell}$ of a quadratic equation that characterize the mapping $f$. We rigorously analyze how the regions enclosed between the circles $\mathcal{C}$ and $\mathcal{C}'$ are mapped under $f$, considering the roots $\alpha_s$ and $\alpha_{\ell}$ where $\alpha_s$ is the smaller root and $\alpha_{\ell}$ is the larger root in the absolute sense. It is shown that $\alpha_s$ preserves the interior-exterior relationship of the enclosed regions between the two curves in both planes, while $\alpha_{\ell}$ reverses it. Consequently, the considered problem can be formulated either as a \emph{trajectory-constraining problem} or an \emph{obstacle-avoidance problem} in the transformed plane, corresponding to the roots $\alpha_s$ and $\alpha_{\ell}$, respectively. However, because of the constant radial separation between the circles $f(\mathcal{C})$ and $f(\mathcal{C}')$, the robot's motion is subject to uniform spatial constraints in the transformed plane. By exploiting the idea of logarithmic BLF, our second contribution is towards deriving a unique control law $\Omega$ in the transformed plane that simultaneously solves both the aforementioned problems corresponding to $\alpha_s$ and $\alpha_{\ell}$. We further show that this transformed control $\Omega$ essentially solves the original problem in the actual plane. We also obtain conservative bounds on the post-design signals in both transformed and actual planes and prove their uniform boundedness. Our third contribution is towards establishing a connection between the two robot models in the two planes and determining the control law $\omega$ applied to the robot in the actual plane from the control $\Omega$ in the transformed plane. We accomplish this objective by exploring several intermediate relations connecting different parameters in two planes. Due to resemblance and their significance in implementation, we examine the connection between the two planes while considering both the M\"{o}bius transformation $f$ and its inverse $f^{-1}$, whose existence is also proved for the considered problem. Through simulations and experiments conducted on the Khepera IV ground robot, we validate the proposed approach.

\paragraph*{Paper Structure}
Section~\ref{section_2} provides a brief overview of the M\"{o}bius transformation and investigates its properties in transforming two nonconcentric circles into concentric circles and the regions enclosing them. In Section~\ref{section_3}, we present the system model and formulate the problem in the actual plane. Section~\ref{section_4} outlines the equivalent problems in the transformed plane and devises the transformed controller using BLF. The proposed controller effectively addresses the original problem in the actual plane, which is also discussed in this section. Section~\ref{section_5} establishes several crucial relationships connecting parameters in the actual and transformed planes using both the M\"{o}bius transformation and its inverse and subsequently links the control laws in two planes. Section~\ref{section_6} derives conservative bounds on the post-design signals using BLF. Simulation and experimental results are presented in Section~\ref{section_7}, followed by concluding remarks in Section~\ref{section_8}. Two appendices are included at the end.

\paragraph*{Preliminaries}
The set of real numbers is denoted by $\mathbb{R}$, non-negative real numbers by $\mathbb{R}_{+}$, complex numbers by $\mathbb{C}$, natural numbers by $\mathbb{N}$, and integers by $\mathbb{Z}$. The symbol $|\bullet|$ denotes the absolute value of the real/complex number $\bullet$, and $\|\star\|$ represents the Euclidean-norm of the $n$-dimensional real vector $\star \in \mathbb{R}^n$. For any complex number $z \in \mathbb{C}$, $\Re(z)$ and $\Im(z)$ represent its real and imaginary parts, respectively. The imaginary unit is denoted by $i \coloneqq \sqrt{-1}$. The inner product $\langle z_1, z_2 \rangle$ of two complex numbers $z_1, z_2 \in \mathbb{C}$ is defined as $\langle z_1, z_2 \rangle \coloneqq \Re(z^{\ast}_1 z_2)$, where $z^\ast$ represents the complex conjugate of $z \in \mathbb{C}$.  Let $\mathcal{D} \subseteq \mathbb{R}^N$ and $\mathfrak{f}: \mathcal{D} \to \mathbb{R}$ be a differential map. The gradient of $\mathfrak{f}$ is denoted by $\nabla \mathfrak{f}$. The abbreviation ``$\text{mod}$" denotes the modulo operation and $\sgn(\bullet)$ represents the signum function of $\bullet: \mathbb{R} \to \mathbb{R}$. The unit circle, or the one dimensional unit sphere, in the complex plane $\mathbb{C}$ is denoted by $\mathbb{S}^1$. For clarity, the time argument ``$t$" is often suppressed when clear from the context.  

To facilitate the subsequent analysis and substantiate various results in this paper, we state the following definition and a convergence lemma from \cite{tee2009barrier}, characterizing the properties of BLF.

\begin{defn}[\hspace{-.1pt}Barrier Lyapunov Function \cite{tee2009barrier}]\label{def_BLF}
	A Barrier Lyapunov Function is a scalar function $V(\mathcal{X})$ of state vector $\mathcal{X} \in \mathcal{D}$ of the system $\dot{\mathcal{X}} = \mathfrak{f}(\mathcal{X})$ on an open region $\mathcal{D}$ containing the origin, that is continuous, positive-definite, has continuous first-order partial derivatives at every point of $\mathcal{D}$, has the property $V(\mathcal{X}) \to \infty$ as $\mathcal{X}$ approaches the boundary of $\mathcal{D}$, and satisfies $V(\mathcal{X}(t)) \leq \varphi_0, \forall t\geq 0$, along the solution of $\dot{\mathcal{X}} = \mathfrak{f}(\mathcal{X})$ for $\mathcal{X}(0) \in \mathcal{D}$ and some positive constant $\varphi_0$.
\end{defn}

\begin{lem}[\hspace{-.1pt}Convergence with BLF{\cite[pp. 919$-$920]{tee2009barrier}}]\label{lem_BLF}
	For any positive constant $\varpi$, let $\Lambda \coloneqq \{\zeta \in \mathbb{R} \mid |\zeta| < \varpi\} \subset \mathbb{R}$ and $\mathcal{N} \coloneqq \mathbb{R}^\ell  \times \Lambda \subset \mathbb{R}^{\ell+1}$ be open sets. Consider the system $\dot{{\mathcal{X}}} = {h}(t, \mathcal{X})$,  where, $\mathcal{X} \coloneqq [w, \zeta]^\top \in \mathcal{N}$, and ${h} : \mathbb{R}_+ \times \mathcal{N} \to \mathbb{R}^{\ell+1}$ is piecewise continuous in $t$ and locally Lipschitz in $\mathcal{X}$, on $\mathbb{R}_+ \times \mathcal{N}$. Suppose there exist functions $U: \mathbb{R}^\ell \times \mathbb{R}_{+} \to \mathbb{R}_+$ and $V_1: \Lambda \to \mathbb{R}_+$, continuously differentiable and positive definite in their respective domains, such that $ V_1(\zeta) \to \infty \ \text{as} \ |\zeta| \to \varpi$ and $\varphi_1(\|{w}\|) \leq U({w}, t) \leq \varphi_2(\|{w}\|)$, where, $\varphi_1$ and $\varphi_2$ are class $\mathcal{K}_\infty$ functions. Let $V(\mathcal{X}) \coloneqq V_1(\zeta) + U(w, t)$, and $\zeta(0) \in \Lambda$. If it holds that $\dot{V} = (\nabla V)^\top{{h}} \leq 0$, in the set $\zeta \in \Lambda$, then $\zeta(t) \in \Lambda, \forall t \in [0, \infty)$.
\end{lem}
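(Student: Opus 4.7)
The plan is a standard contradiction argument built on three ingredients: local existence of solutions, monotonicity of $V$ along trajectories, and the blow-up property of $V_1$ at the boundary of $\Lambda$. First, I would invoke the local existence and uniqueness theorem for ODEs: since $h$ is piecewise continuous in $t$ and locally Lipschitz in $\mathcal{X}$ on $\mathbb{R}_{+} \times \mathcal{N}$, and the initial condition $\mathcal{X}(0) = [w(0), \zeta(0)]^\top$ satisfies $\mathcal{X}(0) \in \mathcal{N}$ by assumption, there exists a unique solution $\mathcal{X}(t) = [w(t), \zeta(t)]^\top$ on a maximal interval $[0, t_{\max})$ with $\mathcal{X}(t) \in \mathcal{N}$ throughout. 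In particular, $\zeta(t) \in \Lambda$ on this interval, so the hypothesis $\dot{V} \leq 0$ is valid along the trajectory.

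Integrating $\dot{V}(\mathcal{X}(t)) \leq 0$ on $[0, t_{\max})$ yields the monotone estimate $V(\mathcal{X}(t)) \leq V(\mathcal{X}(0))$. Because both $V_1$ and $U$ are non-negative, this single bound splits into two separate bounds: $V_1(\zeta(t)) \leq V(\mathcal{X}(0))$ and $U(w(t), t) \leq V(\mathcal{X}(0))$. The class $\mathcal{K}_\infty$ lower estimate $\varphi_1(\|w\|) \leq U(w, t)$ then delivers $\|w(t)\| \leq \varphi_1^{-1}(V(\mathcal{X}(0)))$, so the $w$-component is uniformly bounded on $[0, t_{\max})$.

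The crux is to show that $\zeta(t)$ cannot approach the boundary $|\zeta| = \varpi$. Suppose, for contradiction, that there is a sequence $t_n \nearrow t^\ast \leq t_{\max}$ with $|\zeta(t_n)| \to \varpi$. Then the blow-up property $V_1(\zeta) \to \infty$ as $|\zeta| \to \varpi$ forces $V_1(\zeta(t_n)) \to \infty$, in direct contradiction to the uniform bound $V_1(\zeta(t)) \leq V(\mathcal{X}(0))$ derived above. Hence $|\zeta(t)|$ stays bounded strictly away from $\varpi$ by some $\bar{\zeta} < \varpi$, giving $\zeta(t) \in \Lambda$ for every $t \in [0, t_{\max})$.

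It remains to upgrade this to all $t \in [0, \infty)$. Since $\|w(t)\|$ and $|\zeta(t)|$ are both uniformly bounded on $[0, t_{\max})$ and $\zeta(t)$ sits in the closed set $\{|\zeta| \leq \bar{\zeta}\} \subset \Lambda$, the trajectory stays within a compact subset of $\mathcal{N}$, and the standard continuation theorem for ODEs rules out finite escape time, forcing $t_{\max} = \infty$. The main obstacle I anticipate is precisely this last step, namely precluding a finite-time blow-up in the $w$-direction, and it is resolved cleanly by the $\mathcal{K}_\infty$ lower bound on $U$; without such a coercivity assumption, $w(t)$ could in principle leave every compact set while $V_1(\zeta(t))$ stayed finite, and the conclusion would fail.
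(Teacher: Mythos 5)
Your proof is correct and follows essentially the same approach as the proof of this lemma, which the paper itself states without proof, citing \cite{tee2009barrier}: a unique maximal solution on $[0, t_{\max})$ from piecewise continuity and local Lipschitzness, the monotone bound $V(\mathcal{X}(t)) \leq V(\mathcal{X}(0))$ splitting into bounds on $V_1(\zeta(t))$ and $U(w(t),t)$, the blow-up of $V_1$ near $|\zeta| = \varpi$ confining $\zeta(t)$ to a compact subset of $\Lambda$, and the $\mathcal{K}_\infty$ lower bound on $U$ together with the standard continuation theorem forcing $t_{\max} = \infty$. Your closing remark correctly identifies the role of the coercivity of $U$ in ruling out finite escape in the $w$-direction, which is precisely why that hypothesis appears in the original statement.
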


\section{A Review of M\"{o}bius Transformation}\label{section_2}

\subsection{The M\"{o}bius Transformation}
A M\"{o}bius transformation is a mapping of the form \cite[Chapter 2, pg. 23]{priestley2003introduction}, \cite[Chapter 3, pg. 137]{needham2023visual}:
\begin{equation}\label{mt}
	z \mapsto w = f(z) \coloneqq \frac{az+b}{cz+d},
\end{equation}
where $a, b, c, d \in \mathbb{C}$, and $ad - bc \neq 0$. The condition $ad - bc \neq 0$ ensures that $f$ is neither undefined (i.e., at least one of $c$ and $d$ is nonzero) nor one that is identically a constant, as can be seen in the case when $c \neq 0$,
\begin{equation*}
	f(z) = \frac{a}{c} - \frac{ad - bc}{c(cz + d)},
\end{equation*}
which maps to a constant $a/c$ if $ad - bc = 0$. The special case of $c = 0$ implies that neither $a$ nor $d$ is zero, and hence, $f(z)$ is well-defined. The quantity $ad-bc$ is called the determinant of $f$. Clearly, the domain of $f$ is $\mathbb{C} \setminus \{-d/c\}$ if $c \neq 0$, and $\mathbb{C}$ if $c = 0$. For all $z \in \mathbb{C} \setminus \{-d/c\}$, the derivative of $f$ is given by \cite[Chapter 8, pg. 95]{priestley2003introduction}:
\begin{equation}\label{mobius_transform_derivative}
	f'(z) = \frac{ad - bc}{(cz + d)^2} \neq 0,	
\end{equation}
since $ad - bc \neq 0$. Hence, $f$ is conformal in $\mathbb{C} \setminus \{-d/c\}$, where $c \neq 0$. In the special case of $c = 0$, $f(z) = (az+b)/d$ is a linear mapping that is conformal everywhere. In the subsequent discussion, we do not explicitly mention the special case of $c =0$, as it is straightforward. Recall from complex analysis that conformal maps are holomorphic (or analytic). Further, solving for $z$ in \eqref{mt}, the inverse of $f$ is obtained as \cite[Chapter 2, pg. 23]{priestley2003introduction}:
\begin{equation}\label{imt}
	w \mapsto z = f^{-1}(w) = \frac{dw-b}{a-cw},
\end{equation}
which has the same determinant $ad - bc \neq 0$ that of $f$, and hence, is also a M\"{o}bius transformation. The domain of $f^{-1}$ is $\mathbb{C} \setminus \{a/c\}$. Therefore, $f(z) : \mathbb{C} \setminus \{-d/c\} \mapsto \mathbb{C} \setminus \{a/c\}$ defines a bijective and holomorphic map. For simplicity, the M\"{o}bius transformation $f$ is often regarded as a mapping from $\mathbb{C}_{\infty}$ to $\mathbb{C}_{\infty}$ such that $f(\infty) = a/c$ and $f(-d/c) = \infty$ if $c \neq 0$, and $f(\infty) = \infty$ if $c = 0$, in the extended complex plane $\mathbb{C}_{\infty} = \mathbb{C} \cup \{\infty\}$ (i.e., the complex plane augmented by the point at infinity). Notably, this extension accounts for the ``inversion mapping" $z \mapsto 1/z$ on $\mathbb{C} \setminus \{0\}$ to a mapping from $\mathbb{C}_{\infty}$ to $\mathbb{C}_{\infty}$ by the assignments $0 \mapsto \infty$ and $\infty \mapsto 0$. With the aid of $\mathbb{C}_{\infty}$, it can be deduced that $f: \mathbb{C}_{\infty} \to \mathbb{C}_{\infty}$ is bijective and conformal everywhere in its domain \cite[Chapter 2, pg. 23]{priestley2003introduction}. 

\subsection{Mapping of Two Nonconcentric Circles to Concentric Circles}
The M\"{o}bius transformation \eqref{mt} shares some remarkable properties \cite[Chapter 3, pg. 168]{needham2023visual}: (i) it maps circles to circles (a straight line is viewed as a degenerate circle through infinity), (ii) If two points are symmetric with respect to a circle, then their images under \eqref{mt} are symmetric with respect to the image circle. This is known as the \emph{symmetry principle}. Note that two points $z$ and $\tilde{z}$ are symmetric with respect to the unit circle if they are related by the inversion mapping $1/\bar{z}$ such that the product of their distances from the center of the circle is unity. This concept can be generalized to a circle with a non-unity radius where this product equals the square of the radius of the circle. For further details, readers are referred to \cite[Chapter 3, pp. 139$-$142]{needham2023visual}. By leveraging these properties, one can devise a suitable M\"{o}bius transformation that maps any two non-intersecting, nonconcentric circles in the $z$-plane to two concentric circles in the $w$-plane. In this direction, we state the following theorem from \cite[Chapter 16, pp. 1234$-$1240]{turyn2014advanced} and provide its proof for clarity of further discussion. 

\begin{thm}[Mapping of nonconcentric circles to concentric circles]\label{thm_main_theorem}
Let $\mathcal{C}: |z| = 1$ and $\mathcal{C}': |z - \lambda| = \mu$ be two circles, where $\lambda \neq 0$ and $\mu > 0$ are given real numbers. Suppose $\mathcal{C}$ and $\mathcal{C}'$ have no point in common (i.e., the two circles do not intersect or touch each other). Then, we can find real numbers $\alpha$ and $\beta$ such that the M\"{o}bius transformation 
\begin{equation}\label{mobius_transformation}
	w = f(z) = \frac{z+\alpha}{z+\beta},
\end{equation}
maps both the circles $\mathcal{C}$ and $\mathcal{C}'$ to concentric circles centered at $0$ in the $w$-plane, as long as, it turns out that $z = -\beta$ is on neither $\mathcal{C}$ nor $\mathcal{C}'$. Here, $\beta = 1/\alpha$, where the real number $\alpha$ satisfies
\begin{equation}\label{mobius_roots}
	\lambda\alpha^2 + (\lambda^2 - \mu^2 + 1)\alpha + \lambda = 0.
\end{equation}
Further, the images of the circles in the $w$-plane are given by:
\begin{equation}\label{circle_images}
	f(\mathcal{C}): |w| = |\alpha|, \qquad f(\mathcal{C}'): |w| = \left|\frac{\lambda + \alpha}{\mu}\right|.
\end{equation}
\end{thm}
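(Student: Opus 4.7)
The proof plan rests on the \emph{symmetry principle} recalled just before the theorem. Since the desired images $f(\mathcal{C})$ and $f(\mathcal{C}')$ are circles concentric at the origin of the $w$-plane, the pair $\{0, \infty\}$ is mutually symmetric with respect to \emph{both} image circles. Running the symmetry principle in reverse, their preimages $f^{-1}(0) = -\alpha$ and $f^{-1}(\infty) = -\beta$ must therefore be a common pair of symmetric points for the original circles $\mathcal{C}$ and $\mathcal{C}'$. The plan is to turn this pair of geometric symmetry conditions into the two algebraic relations $\beta = 1/\alpha$ and \eqref{mobius_roots}, then verify that \eqref{mobius_roots} admits real roots whenever $\mathcal{C}$ and $\mathcal{C}'$ are disjoint, and finally compute the two image radii directly.

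Restricting to real parameters $\alpha, \beta$ (which is consistent with the real coefficients in \eqref{mobius_transformation}), the inversion formula for the unit circle reads $(-\beta)\,\overline{(-\alpha)} = 1$, which reduces for real $\alpha$ to $\alpha\beta = 1$, i.e., $\beta = 1/\alpha$. The analogous symmetry with respect to $\mathcal{C}': |z-\lambda| = \mu$, with $\lambda, \mu$ real, is $(-\beta - \lambda)\,\overline{(-\alpha - \lambda)} = \mu^2$, simplifying to $(\alpha+\lambda)(\beta+\lambda) = \mu^2$. Substituting $\beta = 1/\alpha$ and clearing $\alpha$ from the denominator yields exactly the quadratic \eqref{mobius_roots}. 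These are routine algebraic manipulations; the only thing to note is that the assumption $z = -\beta \notin \mathcal{C} \cup \mathcal{C}'$ simply guarantees that $f$ is well-defined on both circles and does not require separate justification.

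The main obstacle will be to show that the non-intersection hypothesis forces a positive discriminant in \eqref{mobius_roots}, so that real $\alpha$ (and hence $\beta = 1/\alpha$) actually exists. The plan is to factor
\begin{equation*}
(1+\lambda^2-\mu^2)^2 - 4\lambda^2 = (1-\lambda-\mu)(1-\lambda+\mu)(1+\lambda-\mu)(1+\lambda+\mu),
\end{equation*}
and then case-split on the three geometric configurations compatible with disjoint non-touching circles: (i) $\mathcal{C}'$ lying strictly inside $\mathcal{C}$, characterized by $|\lambda| + \mu < 1$; (ii) $\mathcal{C}$ lying strictly inside $\mathcal{C}'$, characterized by $\mu - |\lambda| > 1$; and (iii) the two circles mutually external, characterized by $|\lambda| > 1 + \mu$. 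Taking $\lambda > 0$ without loss of generality, I will check in each case that an even number of the four factors above are negative, so that the discriminant is positive and \eqref{mobius_roots} therefore has two distinct real roots. This is elementary case analysis, but it is the only place where the geometric hypothesis enters, making it the critical step.

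Finally, since the image circles are known to be centered at $0$, the radii in \eqref{circle_images} can be read off by evaluating $|f|$ at any convenient point of each circle. For $\mathcal{C}$, take $z=1$: then $f(1) = (1+\alpha)/(1+\beta) = \alpha(1+\alpha)/(\alpha+1) = \alpha$, giving the first identity in \eqref{circle_images}. For $\mathcal{C}'$, take $z = \lambda + \mu$ and use the identity $(\alpha+\lambda)(\beta+\lambda)=\mu^2$ obtained above to rewrite $\lambda + \beta + \mu$ as $\mu(\lambda+\alpha+\mu)/(\lambda+\alpha)$; the factor $(\lambda+\alpha+\mu)$ then cancels in $f(\lambda+\mu)$, leaving $f(\lambda+\mu) = (\lambda+\alpha)/\mu$ and hence $|w| = |\lambda+\alpha|/\mu$, as required.
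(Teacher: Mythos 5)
Your proof is correct, but it takes a genuinely different route from the paper's. The paper proves \eqref{circle_images} by brute verification: for $\mathcal{C}$ it expands $|w|^2 = ww^\ast$ and substitutes $|z|=1$, $\beta = 1/\alpha$; for $\mathcal{C}'$ it evaluates $|f|$ at the three points $\lambda+\mu$, $\lambda \pm i\mu$, shows the moduli all equal $|(\lambda+\alpha)/\mu|$ via the identity $\mu^2 = (\lambda+\alpha)(\lambda+\beta)$, checks the three images are distinct, and invokes the three-non-collinear-points lemma together with circle preservation. Notably, the paper's proof of this theorem never establishes that \eqref{mobius_roots} has real roots — that existence claim is only settled later, inside the proof of the enclosed-regions theorem, via the discriminant $(1-p^2)(1-q^2)/(p+q)^2$ with $p = \lambda-\mu$, $q = \lambda+\mu$, and only for the two nested configurations relevant to the geofencing problem. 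You instead run the symmetry principle backwards: $\{0,\infty\}$ is symmetric with respect to any circle centered at the origin, so the preimages $-\alpha = f^{-1}(0)$ and $-\beta = f^{-1}(\infty)$ must form a common symmetric pair for $\mathcal{C}$ and $\mathcal{C}'$, which \emph{derives} $\alpha\beta = 1$ and $(\alpha+\lambda)(\beta+\lambda) = \mu^2$ (hence \eqref{mobius_roots}) rather than merely verifying their consequences; concentricity then comes for free, and the radii follow from single-point evaluations $f(1) = \alpha$ and $f(\lambda+\mu) = (\lambda+\alpha)/\mu$ — a step that is only legitimate \emph{because} you established concentricity first. Your discriminant factorization $(1-\lambda-\mu)(1-\lambda+\mu)(1+\lambda-\mu)(1+\lambda+\mu)$ with the three-way case split is correct (an even number of factors is negative in each disjoint configuration) and is actually more complete than the paper, covering the mutually external case it never needs. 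What each approach buys: the paper's argument is elementary and self-contained modulo existence, using only circle preservation and algebra; yours explains conceptually where $\beta = 1/\alpha$ and the quadratic come from and proves the full existence claim inside the theorem, at the cost of leaning on the symmetry principle, which the paper states only as a cited fact. Two loose ends worth one line each in a final write-up: the symmetric-pair relation requires the two points to lie on the same ray from the center, which holds since $\alpha\beta = 1 > 0$ and $(\alpha+\lambda)(\beta+\lambda) = \mu^2 > 0$ force the relevant signs to agree; and nondegeneracy of $f$ (determinant $\beta - \alpha \neq 0$, i.e., $\alpha \neq \pm 1$) should be noted — it follows because $\alpha = \pm 1$ in \eqref{mobius_roots} forces $\mu = |1 \pm \lambda|$, i.e., touching circles, excluded by hypothesis, consistent with the paper's Corollary~\ref{cor_quadractic_roots}.
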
 

Without loss of generality, and if necessary, by a suitable coordinate transformation as stated in Remark~\ref{remark_coordinate_transformation} below, the centers of both circles $\mathcal{C}$ and $\mathcal{C}'$ in Theorem~\ref{thm_main_theorem} are assumed to be located on the real axis. Before proving Theorem~\ref{thm_main_theorem}, we first present the following lemma regarding the non-collinearity of three points, followed by a corollary characterizing the properties of the roots of quadratic equation \eqref{mobius_roots}.

\begin{lem}[\hspace{-.1pt}Non-collinear points {\cite[Chapter 16, pp. 1234]{turyn2014advanced}}]\label{lem_non-collinier points}
If $w_1$, $w_2$ and $w_3$ are three non-collinear points, then there is exactly one circle that passes through $w_1$, $w_2$ and $w_3$.
\end{lem}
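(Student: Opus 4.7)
The plan is to use a direct Euclidean-geometric argument based on perpendicular bisectors, identifying the points $w_1, w_2, w_3 \in \mathbb{C}$ with elements of $\mathbb{R}^2$ in the standard way. A circle in the plane is determined by a center and a radius, so the task reduces to showing that a unique center exists and that its distance to each $w_j$ agrees.

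First I would prove existence. For each pair of distinct points $(w_j, w_k)$, the locus of points equidistant from $w_j$ and $w_k$ is the perpendicular bisector $\ell_{jk}$ of the segment joining them. Consider in particular the bisectors $\ell_{12}$ and $\ell_{23}$. The non-collinearity of $w_1, w_2, w_3$ forces the segments $w_1 w_2$ and $w_2 w_3$ to be non-parallel: if they were parallel, then, since they share the endpoint $w_2$, they would lie on a common line, placing $w_1, w_2, w_3$ collinear and contradicting the hypothesis. Consequently $\ell_{12}$ and $\ell_{23}$, being perpendicular to two non-parallel directions, are themselves non-parallel and meet at a unique point $O$. By construction $|O - w_1| = |O - w_2|$ and $|O - w_2| = |O - w_3|$, so setting $r := |O - w_1|$ yields $|O - w_j| = r$ for $j = 1, 2, 3$. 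The circle centered at $O$ with radius $r$ therefore passes through all three points, establishing existence.

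For uniqueness, suppose some circle with center $O'$ and radius $r'$ also passes through $w_1, w_2, w_3$. Then $O'$ is equidistant from the three points, so $O'$ lies on both $\ell_{12}$ and $\ell_{23}$. Since these two lines intersect only at $O$, it follows that $O' = O$, and then $r' = |O' - w_1| = |O - w_1| = r$, so the circle is the one already constructed.

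The only genuine subtlety is the non-parallelism of the two perpendicular bisectors, and I would dispose of it by the contrapositive argument above using the shared vertex $w_2$. The remainder is a routine application of the characterization of perpendicular bisectors as equidistant loci.
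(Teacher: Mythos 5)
Your proof is correct. Note that the paper does not actually prove this lemma --- it imports the statement verbatim from the cited reference (Turyn, Ch.~16) and uses it as a black box in the proof of Theorem~\ref{thm_main_theorem} --- so there is no in-paper argument to diverge from; your perpendicular-bisector construction is the standard Euclidean proof and supplies exactly what the citation leaves implicit. The one tacit step worth a half-sentence is that non-collinearity forces the three points to be pairwise distinct (if two coincided, any line through the remaining point and the doubled point would contain all three), which is what licenses forming the bisectors $\ell_{12}$ and $\ell_{23}$ in the first place and guarantees the radius $r=|O-w_1|$ is strictly positive; with that observed, both the existence half (non-parallel chords $w_1w_2$ and $w_2w_3$ sharing the vertex $w_2$, hence non-parallel bisectors meeting in a unique $O$) and the uniqueness half (any center is equidistant from the three points, hence lies on both bisectors, hence equals $O$) are complete.
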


\begin{cor}[Roots of {\eqref{mobius_roots}}]\label{cor_quadractic_roots}
Under the conditions specified in Theorem~\ref{thm_main_theorem}, the roots $\alpha$ of the quadratic equation \eqref{mobius_roots} satisfy the following properties:	
\begin{enumerate}[leftmargin=*]
\item[(a)] The product of roots is unity. 
\item[(b)] The condition ``the point $z = -\beta$ lies neither on $\mathcal{C}$ nor $\mathcal{C}'$" holds if and only if the solutions of \eqref{mobius_roots} are not $\alpha = \pm 1$. 
\end{enumerate}
\end{cor}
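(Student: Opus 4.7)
The plan is to handle (a) with a one-line Vieta calculation and then to dispatch (b) by separately examining the two circles. For part (a), since \eqref{mobius_roots} is a quadratic in $\alpha$ whose leading coefficient and constant term are both equal to the nonzero number $\lambda$, Vieta's formulas immediately yield that the product of its two roots is $\lambda/\lambda = 1$. This also retroactively justifies the choice $\beta = 1/\alpha$ in Theorem~\ref{thm_main_theorem}.

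For part (b), I would argue the equivalent biconditional obtained by negating both sides, namely that $-\beta$ lies on $\mathcal{C}$ or on $\mathcal{C}'$ if and only if $\alpha = \pm 1$. The ``if'' direction is immediate: when $\alpha = \pm 1$, part (a) gives $\beta = 1/\alpha = \pm 1$, whence $|-\beta| = 1$ places $-\beta$ on $\mathcal{C}$. For the ``only if'' direction, I would split into cases. If $-\beta \in \mathcal{C}$ then $|\beta| = 1$, and since $\beta$ is real, $\beta = \pm 1$ so $\alpha = \pm 1$. If instead $-\beta = -1/\alpha \in \mathcal{C}'$, the condition $|-1/\alpha - \lambda| = \mu$ becomes, after multiplying through by $|\alpha|$ and squaring, the auxiliary identity $(\lambda^2 - \mu^2)\alpha^2 + 2\lambda\alpha + 1 = 0$.

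The crux is then to combine this auxiliary identity with the fact that $\alpha$ itself satisfies \eqref{mobius_roots}. Isolating $\mu^2$ from \eqref{mobius_roots} and substituting into the auxiliary identity, the expression should collapse to $(1-\alpha)(1+\alpha)(1+\lambda\alpha) = 0$. The third factor would force $\alpha = -1/\lambda$, but then $|1+\lambda\alpha| = 0$, which combined with $|1+\lambda\alpha| = \mu|\alpha|$ and $\alpha \neq 0$ would require $\mu = 0$, contradicting the standing hypothesis $\mu > 0$; hence only $\alpha = \pm 1$ remains, completing the proof. The main (mild) obstacle I anticipate is organizing the elimination between the two quadratics so that the triple-product factorization appears cleanly; I expect this to drop out once $\mu^2$ is substituted from \eqref{mobius_roots} into the auxiliary relation and terms are grouped so that the common factor $(1+\lambda\alpha)$ emerges from the leftover cubic in $\alpha$.
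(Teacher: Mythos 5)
Your proof is correct and follows essentially the same route as the paper: part (a) is the identical Vieta observation, and for part (b) both arguments combine membership of $-\beta$ in $\mathcal{C}'$ with \eqref{mobius_roots} to obtain the factorization $(1-\alpha)(1+\alpha)(1+\lambda\alpha)=0$ (the paper's $(\alpha^2-1)(\lambda+\tfrac{1}{\alpha})=0$ differs only by the nonzero factor $\alpha$) and discard the factor $1+\lambda\alpha$ using $\mu>0$. The only differences are organizational: you eliminate $\mu^2$ from \eqref{mobius_roots} into the circle condition rather than the reverse, and you explicitly prove the easy direction ($\alpha=\pm 1 \Rightarrow -\beta\in\mathcal{C}$), which the paper's necessity/sufficiency split leaves implicit.
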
 

\begin{proof}
The statements are proven sequentially as follows: 
	\begin{enumerate}[leftmargin=*]
		\item[(a)] Since $\lambda \neq 0$ in Theorem~\ref{thm_main_theorem}, \eqref{mobius_roots} can also be expressed as 
		\begin{equation}\label{mobius_rrot_normalized}
			\alpha^2 + \left(\frac{\lambda^2 - \mu^2 + 1}{\lambda}\right)\alpha + 1 = 0,
		\end{equation}	
		implying that the product of roots of \eqref{mobius_roots} is unity. Consequently, both roots share the same sign (either positive or negative), and neither of them is zero, i.e., $\alpha \neq 0$.
		  	
		\item[(b)] {\emph{(Necessity).}} We prove it by contradiction. Assume that $z = -\beta$ lies on $\mathcal{C}$, i.e., $|\beta| = 1$. This implies that $\alpha = \pm 1$ since $\beta = 1/\alpha \in \mathbb{R}$, which contradicts our hypothesis. Further, assume that $z = -\beta$ lies on $\mathcal{C}'$, i.e., $|\beta + \lambda| = \mu$. Substituting $\beta = 1/\alpha$, we obtain $|\lambda + \frac{1}{\alpha}| = \mu$. Squaring both sides, we get $\lambda^2 - \mu^2 = -[(1+2\lambda \alpha)/\alpha^2]$, which upon substitution into \eqref{mobius_roots}, yields $(\alpha^2 - 1)(\lambda + \frac{1}{\alpha}) = 0$, implying $\alpha = \pm 1$ since $|\lambda + \frac{1}{\alpha}| = \mu$. This again contradicts our hypothesis. \\		
		
		{\emph{(Sufficiency).}} 
		If $\alpha \neq \pm 1$, then $|\beta| \neq 1$ (since $\beta = 1/\alpha$), i.e., $z = \pm \beta$ does not lie on $\mathcal{C}$. Moreover, by substituting $\alpha$ into \eqref{mobius_roots}, it is evident that $|\lambda \pm 1| \neq \mu \implies |\lambda + \beta| \neq \mu$ (where the $+$ sign corresponds to $\alpha \neq +1$ and the $-$ sign to $\alpha \neq -1$). This implies that $z = -\beta$ does not lie on $\mathcal{C}'$. It's noteworthy that only the point $z = -\beta$ is not on $\mathcal{C}'$, unlike the points $z = \pm \beta$ in the case of $\mathcal{C}$.		
\end{enumerate}
This concludes the proof. 
\end{proof}

Now, we are ready to proceed with the proof of Theorem~\ref{thm_main_theorem}. 

\begin{proof}[Proof of Theorem~\ref{thm_main_theorem}]
To prove that the image of $\mathcal{C}: |z| = 1$, under \eqref{mobius_transformation}, is $f(\mathcal{C}): |w| = |\alpha|$, we start by obtaining
		\begin{equation*}
			|w|^2 {=} ww^\ast {=} \left[\frac{z+\alpha}{z+\beta}\right] \left[\frac{z^\ast+\alpha}{z^\ast+\beta}\right] {=} \frac{|z|^2 + \alpha(z+z^\ast) + \alpha^2}{|z|^2 + \beta(z+z^\ast) + \beta^2}.
		\end{equation*}
		Substituting $|z| = 1$ and $\beta = 1/\alpha$, we get $|w|^2 = \alpha^2$, implying $|w| = |\alpha|$. 
		
		To prove that the image of $\mathcal{C}': |z - \lambda| = \mu$, under \eqref{mobius_transformation}, is $f(\mathcal{C}'): |w| = \left|{(\lambda + \alpha)}/{\mu}\right|$, we use Lemma~\ref{lem_non-collinier points}. It suffices to show that $f(z)$ maps three distinct points on the circle $\mathcal{C}'$ to three distinct points on the circle $|w| = \left|{(\lambda + \alpha)}/{\mu}\right|$. In this direction, we consider three points $z = \lambda + \mu$ and $z = \lambda \pm i\mu$ that are distinct and lie on the circle $\mathcal{C}'$. Firstly, we show that (i) $|f(\lambda + \mu)| = |f(\lambda \pm i\mu)| = \left|{(\lambda + \alpha)}/{\mu}\right|$, and (ii) $f(\lambda +\mu), f(\lambda + i\mu), f(\lambda - i\mu)$ are distinct points. Using \eqref{mobius_transformation}, 
		\begin{equation*}
			|f(\lambda \pm i\mu)| = \left|\frac{\lambda \pm i\mu+\alpha}{\lambda \pm i\mu+\beta}\right| = \left|\frac{(\lambda + \alpha) \pm i\mu}{(\lambda + \beta) \pm i \mu}\right|.
		\end{equation*} 
		Multiplying both numerator and denominator by $\mu$ gives:
		\begin{equation}\label{colliniear_points}
			|f(\lambda \pm i\mu)| = \left|\frac{[(\lambda + \alpha) \pm i\mu]\mu}{[(\lambda + \beta) \pm i \mu]\mu}\right| = \left|\frac{(\lambda + \alpha)\mu \pm i\mu^2}{(\lambda + \beta)\mu \pm i \mu^2}\right|. 
		\end{equation} 
		Dividing \eqref{mobius_roots} by $\alpha$ (as $\alpha \neq 0$, according to Corollary~\ref{cor_quadractic_roots}) and using $\beta = 1/\alpha$, one can obtain
		\begin{equation}\label{mu_square}	
			\mu^2 = (\lambda + \alpha)(\lambda + \beta).
		\end{equation}
		Substituting \eqref{mu_square} into \eqref{colliniear_points}, we simplify to:
		\begin{align*}
			|f(\lambda \pm i\mu)| &= \left|\frac{(\lambda + \alpha)\mu \pm i(\lambda + \alpha)(\lambda + \beta)}{(\lambda + \beta)\mu \pm i \mu^2}\right| \\
			& = \left|\frac{\lambda + \alpha}{\mu}\right|\left|\frac{\mu \pm i(\lambda + \beta)}{(\lambda + \beta) \pm i \mu}\right| = \left|\frac{\lambda + \alpha}{\mu}\right|, 
		\end{align*}
		since $|\mu \pm i(\lambda + \beta)| = |(\lambda + \beta) \pm i \mu| = \sqrt{(\lambda + \beta)^2 + \mu^2}$. Similarly, it is easy to show that $|f(\lambda + \mu)| = \left|{(\lambda + \alpha)}/{\mu}\right|$. This meets the first requirement. Now, it remains to show that these points are distinct. We prove this by contradiction. Suppose if 
		\begin{equation*}
			f(\lambda + i\mu) = f(\lambda - i\mu) \Longleftrightarrow \alpha = \beta \Longleftrightarrow \alpha = \pm 1,
		\end{equation*} 
		which is impossible according to Corollary~\ref{cor_quadractic_roots}, and hence, $f(\lambda + i\mu) \neq f(\lambda - i\mu)$. Also, $f(\lambda + i\mu) = f(\lambda + \mu) \Longleftrightarrow \mu(\alpha - \beta)(1-i) = 0$. Since $\alpha \neq \beta$ according to Corollary~\ref{cor_quadractic_roots} and $\mu \neq 0$ as per the statement of Theorem~\ref{thm_main_theorem}, $\mu(\alpha - \beta)(1-i) = 0$ implies $i = 1$, which is not true. Thus, $f(\lambda + i\mu)$ and $f(\lambda + \mu)$ are distinct points. This implies that three non-collinear points on $\mathcal{C}'$ maps to three non-collinear points on the circle $|w| = \left|{(\lambda + \alpha)}/{\mu}\right|$. This completes the proof.  
\end{proof}

Please note that Theorem~\ref{thm_main_theorem} only states that the mapping of two nonconcentric circles in the $z$-plane results in concentric circles in the $w$-plane under \eqref{mobius_transformation}. However, it does not indicate anything how the \emph{three} regions: (i) enclosed between the two circles, (ii) enclosed within the inner circle, (iii) exterior to the outer circle in the $z$-plane are mapped in the $w$-plane. As discussed in Theorem~\ref{thm_mapping of regions between circles} below, the mapping of these regions depends on the roots $\alpha$ of \eqref{mobius_roots}. To facilitate further discussion, let $\alpha_{+}$ and $\alpha_{-}$ be the two roots of the quadratic equation \eqref{mobius_roots}, corresponding to the positive and negative signs of its discriminant. Define 
 \begin{equation}\label{alpha_s}
\alpha_s = \{\alpha_{\pm} \mid  |\alpha_s| = \min\{|\alpha_+|, |\alpha_{-}|\}\},
\end{equation}
and
\begin{equation}\label{alpha_l}
\alpha_{\ell} = \{\alpha_{\pm}  \mid |\alpha_{\ell}| = \max\{|\alpha_+|, |\alpha_{-}|\}\},
\end{equation}
as the magnitude-wise smaller and larger roots of \eqref{mobius_roots}, respectively. Based on this, we now state the following theorem:   

\begin{thm}[Mapping of enclosed regions]\label{thm_mapping of regions between circles}
Under the conditions specified in Theorem~\ref{thm_main_theorem}, the M\"{o}bius transformation \eqref{mobius_transformation} with $\alpha = \alpha_s$ (resp., $\alpha = \alpha_{\ell}$) preserves (resp., reverses) the interior-exterior mapping of the regions enclosed by the circles $\mathcal{C}$ and $\mathcal{C}'$ in the $w$-plane. Moreover, the mapping of the region enclosed between the two circles is always an annulus in the $w$-plane, irrespective of the roots $\alpha_s$ or $\alpha_{\ell}$.
\end{thm}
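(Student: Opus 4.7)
My plan rests on viewing $f$ as a homeomorphism of the Riemann sphere $\mathbb{C}_{\infty}$ (established in Section~\ref{section_2}) and tracking the images of two distinguished points, $-\alpha$ and $-\beta$. Since $f$ is a homeomorphism and sends $(\mathcal{C},\mathcal{C}')$ to a pair of concentric circles by Theorem~\ref{thm_main_theorem}, it induces a bijection between the three connected components of $\mathbb{C}_{\infty}\setminus(\mathcal{C}\cup\mathcal{C}')$ and the three connected components of $\mathbb{C}_{\infty}\setminus(f(\mathcal{C})\cup f(\mathcal{C}'))$; moreover, boundaries are carried to boundaries, so the image of a region bounded by $\mathcal{C}$ and $\mathcal{C}'$ is a region bounded by $f(\mathcal{C})$ and $f(\mathcal{C}')$. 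In either plane, the unique component whose boundary consists of \emph{both} circles is the annular region between them, so the region enclosed between $\mathcal{C}$ and $\mathcal{C}'$ must map to the true annulus between $f(\mathcal{C})$ and $f(\mathcal{C}')$. This settles the last sentence of the theorem for both choices $\alpha_s$ and $\alpha_\ell$.

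For the first part, I would use the distinguished preimages $f(-\alpha)=0$ and $f(-\beta)=\infty$ read off directly from \eqref{mobius_transformation}. Because $0$ and $\infty$ are symmetric with respect to every circle centred at $0$, the symmetry principle (or, equivalently, the identities $\beta=1/\alpha$ and \eqref{mu_square}) guarantees that $-\alpha$ and $-\beta$ are symmetric with respect to both $\mathcal{C}$ and $\mathcal{C}'$. Combined with Corollary~\ref{cor_quadractic_roots}(b), which rules out $\alpha=\pm 1$, this places exactly one of $\{-\alpha,-\beta\}$ strictly inside $\mathcal{C}$ and the other strictly outside, and likewise for $\mathcal{C}'$. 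Corollary~\ref{cor_quadractic_roots}(a) further yields $|\alpha_s|<1<|\alpha_\ell|$, so $-\alpha_s$ lies inside $\mathcal{C}$ while $-\beta_s=-\alpha_\ell$ lies outside $\mathcal{C}$.

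I would then specialize to the nested configuration that is implicit in the problem setup (one of $\mathcal{C},\mathcal{C}'$ lies inside the other) to show that the pair $\{-\alpha_s,-\alpha_\ell\}$ always splits between the innermost and outermost regions of the $z$-plane, never landing in the annulus. If $\mathcal{C}'$ lies inside $\mathcal{C}$, then ``outside $\mathcal{C}$'' forces ``outside $\mathcal{C}'$'', so $-\beta_s$ is in the outermost region; the symmetry of $-\alpha_s$ and $-\beta_s$ with respect to $\mathcal{C}'$ then places $-\alpha_s$ inside $\mathcal{C}'$, i.e., in the innermost region. The opposite nesting is dual: ``inside $\mathcal{C}$'' forces ``inside $\mathcal{C}'$'', so $-\alpha_s$ is in the innermost region and symmetry pushes $-\beta_s=-\alpha_\ell$ into the outermost.

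Finally, for $\alpha=\alpha_s$ the homeomorphism $f$ sends the innermost $z$-region (containing $-\alpha_s$) to the region containing $f(-\alpha_s)=0$, which is the innermost $w$-region, and the outermost $z$-region (containing $-\beta_s$ and $\infty$) to the region containing $f(-\beta_s)=\infty$, the outermost $w$-region; this is the ``preserve'' statement. For $\alpha=\alpha_\ell$ the same argument with the roles of $-\alpha$ and $-\beta$ swapped places $-\alpha_\ell$ in the outermost $z$-region while $f(-\alpha_\ell)=0$ still sits in the innermost $w$-region, and symmetrically $-\beta_\ell=-\alpha_s$ is in the innermost $z$-region while $f(-\beta_\ell)=\infty$ is in the outermost $w$-region, giving the ``reverse''. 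The subtlest step I expect is the nested-case analysis: verifying that the limit points avoid the annular region is not purely topological and really needs the symmetry principle combined with the nesting geometry, for otherwise the images of the two non-annular regions would not be unambiguously determined by $f(-\alpha)$ and $f(-\beta)$ alone.
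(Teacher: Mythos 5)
Your proposal is correct, but it proves the theorem by a genuinely different route than the paper. The paper's proof is computational and case-based: it introduces the diameter endpoints $p = \lambda - \mu$, $q = \lambda + \mu$ of $\mathcal{C}'$, re-derives the quadratic \eqref{mobius_roots} from the symmetry requirement $f(p) = -f(q)$, checks via the discriminant and sign analysis that the roots are real with one of magnitude less than $1$ and the other greater, and then decides ``preserve vs.\ reverse'' by explicitly comparing the image radii $|\alpha|$ and $|(\lambda+\alpha)/\mu|$ (showing, e.g., $(\lambda+\alpha)/\mu > \alpha \Leftrightarrow \alpha < 1$ when $\mu > 1 + \lambda$); the annulus claim is handled by a continuity argument noting the pole $z = -\beta$ lies outside the closed region $\mathcal{R}$ between the circles, so $f(\mathcal{R})$ is connected. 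Your argument replaces all of this with two conceptual ingredients: (i) the component-counting observation that a homeomorphism of $\mathbb{C}_\infty$ bijects the three complementary components and carries boundaries to boundaries, so the unique component bounded by \emph{both} circles (the annulus) must map to the annulus — of which the paper's pole-avoidance continuity argument is essentially a special case; and (ii) tracking the preimages $f(-\alpha) = 0$, $f(-\beta) = \infty$, which by the symmetry principle (or directly from $\beta = 1/\alpha$ and \eqref{mu_square}, since $(\lambda+\alpha)(\lambda+\beta) = \mu^2$) form a pair symmetric with respect to both circles, forcing $-\alpha_s$ into the innermost region and $-\alpha_\ell$ into the outermost in either nesting. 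Your route is cleaner, coordinate-free, and treats both nestings almost uniformly; what the paper's computation buys in exchange is quantitative side information that it reuses later — notably the fact $1 + \alpha\lambda > 0$ (invoked in the proof of Theorem~\ref{thm_problem_1_solution}) and the explicit sign pattern of the roots, neither of which falls out of your topological argument. One small patch: Corollary~\ref{cor_quadractic_roots}(b) literally asserts only that $-\beta$ avoids $\mathcal{C}$ and $\mathcal{C}'$; to get the strict inside/outside split of the symmetric pair you also need $-\alpha$ off both circles, which follows either by applying the corollary with the roots' roles swapped (since $\beta = 1/\alpha$ is the other root of \eqref{mobius_roots}) or by noting that a point on a circle is its own symmetric point, so $-\alpha \in \mathcal{C}'$ would force $\alpha = \beta = \pm 1$, a contradiction.
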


\begin{figure}[t]
	\centering{
	\subfigure[$\lambda > 0$]{\includegraphics[width=4.2cm]{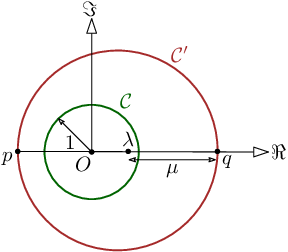}\label{A1}}\hspace*{0.2cm}
	\subfigure[$\lambda < 0$]{\includegraphics[width=4.2cm]{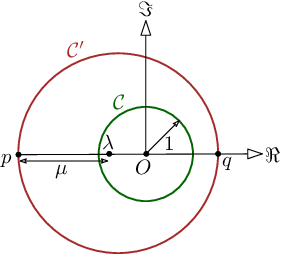}\label{A2}}
	}
	\caption{Case I: Circle $\mathcal{C}'$ encircles circle $\mathcal{C}$ and $\mu > 1 + |\lambda|$.}
	\label{case_1}
	\vspace*{-10pt}	
\end{figure}

\begin{figure}[t]
	\centering{
		\subfigure[$\lambda > 0$]{\includegraphics[width=4.2cm]{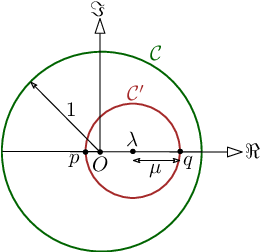}\label{A3}}\hspace*{0.2cm}
		\subfigure[$\lambda < 0$]{\includegraphics[width=4.2cm]{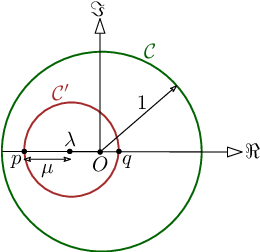}\label{A4}}
	}
	\caption{Case II: Circle $\mathcal{C}$ encircles circle $\mathcal{C}'$ and $\mu < 1 - |\lambda|$.}
	\label{case_2}
	\vspace*{-10pt}		
\end{figure}

\begin{figure*}[t]
	\centering{\hspace*{-0.5cm}
		\includegraphics[width=15.0cm]{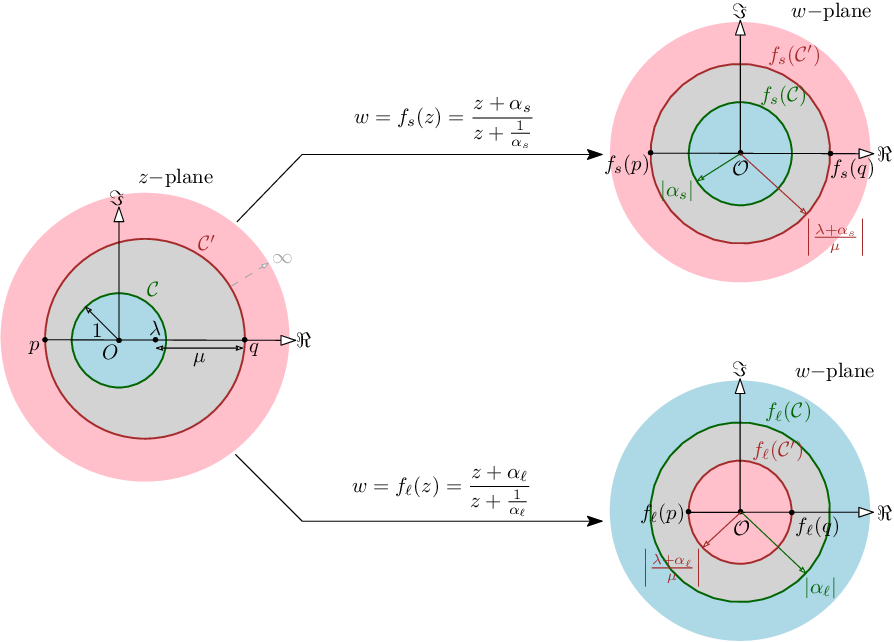}}
	\caption{Mapping of the circles $\mathcal{C}$, $\mathcal{C}'$ and their enclosed regions under M\"{o}bius transformation \eqref{mobius_transformation} with roots $\alpha_s$ and $\alpha_{\ell}$, defined by \eqref{alpha_s} and \eqref{alpha_l}, respectively.}
	\label{fig_mobius_mapping}
	\vspace*{-10pt}
\end{figure*}

\begin{proof}
Depending on the values of $\lambda$ and $\mu$, there arise two possible situations as shown in Fig.~\ref{case_1} and Fig.~\ref{case_2} where either the circle $\mathcal{C}'$ encircles circle $\mathcal{C}$ or otherwise. Note that Fig.~\ref{case_1} depicts a scenario where the center $\lambda$ of $\mathcal{C}'$ is enclosed by $\mathcal{C}$, however, the specific positioning of $\lambda$ does not matter provided it is known that $\lambda > 0$ or $\lambda < 0$. We address both these cases separately as follows:  
\begin{enumerate}[leftmargin=*]
		\item[Case I] Consider that the circle $\mathcal{C}'$ encircles circle $\mathcal{C}$ (Fig.~\ref{case_1}). In this situation, the hypotheses of Theorem~\ref{thm_main_theorem} are satisfied only if $\mu > 1 + |\lambda|$. The two circles touch each other at a common point if $\mu = 1 + |\lambda|$, and hence, excluded. Let $[p, q] = \mathcal{C}' \cap \{z \in \mathbb{C} \mid \Im(z) = 0\}$ be a diameter of $\mathcal{C}'$ along the real axis, where $p = \lambda - \mu$ and $q = \lambda + \mu$ denote the boundary points of the circle $\mathcal{C}'$ along its diameter (see Fig.~\ref{case_1}). Without loss of generality, we assume that $p+q > 0$ (see Fig.~\ref{A1}). If $p+q < 0$ (see Fig.~\ref{A2}), our proof is valid only with some minor modifications (as discussed subsequently). And if $p+q = 0$, we have nothing to prove (a meaningless case as the two circles are already concentric in the $z$-plane). Since the mapped circles $f(\mathcal{C})$ and $f(\mathcal{C}')$ are concentric in $w$-plane, the images of $z = p$ and $z = q$ must be symmetric with respect to $w = 0$ on the real axis in the $w$-plane. Thus, the M\"{o}bius transformation \eqref{mobius_transformation} must be such that $f(p) = -f(q)$, i.e., 
		\begin{equation}\label{eq_symmetry}
			\frac{p + \alpha}{p + \beta} + \frac{q + \alpha}{q + \beta} = 0,
		\end{equation}
		must have the real roots for $\alpha$. Rewriting \eqref{eq_symmetry}, we get
		\begin{equation}\label{mobius_roots_new}
			\alpha^2 + \frac{2(1+pq)}{p+q}\alpha + 1 = 0,
		\end{equation}
		which is essentially the condition \eqref{mobius_roots} by replacing $p = \lambda - \mu$ and $q = \lambda + \mu$. The discriminant $(1/4)$th of the above quadratic equation is
		\begin{equation*}
			\left[\frac{1+pq}{p+q}\right]^2 - 1 {=} \frac{1- p^2 - q^2 + p^2q^2}{(p+q)^2} {=} \frac{(1- p^2)(1 - q^2)}{(p+q)^2} > 0,
		\end{equation*}
		as $p < -1$ and $q > 1$ (see Fig.~\ref{A1}). Thus, the roots $\alpha$ of \eqref{eq_symmetry} are real. Moreover, both the roots are positive as their multiplication is $1$ and their product $-2(1+pq)/(p+q)$ is positive since $(1+pq)<0$ and $p+q > 0$. Also, none of the roots are unity, i.e., $\alpha \neq 1$ (see Corollary~\ref{cor_quadractic_roots}). Therefore, it can be concluded that one of the roots is between 0 and 1 and the other is greater than 1. On the other hand, if $p+q < 0$ (see Fig.~\ref{A2}), the sum of the roots is negative, and hence, both the roots will be negative with one smaller than $-1$ and other between $-1$ and $0$. Based on these roots, we next analyze the mapping of the three regions enclosed by the two circles $\mathcal{C}$ and $\mathcal{C}'$ under \eqref{mobius_transformation}. 
		\begin{itemize}[leftmargin=*]
			\item In case $p+q > 0$, $\lambda > 0$ and both roots of \eqref{mobius_roots_new} are positive (with one lying between $0$ and $1$, while other is greater than $1$). Assume that the radius $|(\lambda + \alpha)/\mu| = (\lambda + \alpha)/\mu$ of circle $f(\mathcal{C}')$ in the $w$-plane is larger than the radius $\alpha$ of circle $f(\mathcal{C})$ in the $w$-plane, that is, $(\lambda + \alpha)/\mu > \alpha$. Since $\mu > 1 + \lambda$ in this case (see Fig.~\ref{A1}), the preceding inequality implies that $(\lambda + \alpha) > \alpha(1 + \lambda) \implies \alpha < 1$. In other words, the smaller root $\alpha_s$ preservers the mapping of the circles. Further, the M\"{o}bius transform \eqref{mobius_transformation} has a pole only at $z = -\beta$, which is not in the connected set $\mathcal{R}$ comprising the circles $\mathcal{C}$ and $\mathcal{C}'$ and all the points between them. So, the mapping \eqref{mobius_transformation} associated with the root $\alpha_s$ (denote it by $f_s$ for notation simplicity) is continuous on $\mathcal{R}$. Now, it follows that the image, $f_s(\mathcal{R})$, is the connected set consisting of $f_s(\mathcal{C})$ and $f_s(\mathcal{C}')$ and all the points between them. Hence, the smaller root maintains the interior-exterior relationship of the regions enclosed by the two circles in the $w$-plane. The same reasoning holds for the mapping $f_{\ell}$ associated with larger root $\alpha_{\ell} > 1$, in which case, the interior-exterior relationship reverses. 
			\item If $p + q < 0$, $\lambda < 0$ and both roots of \eqref{mobius_roots_new} are negative (with one being smaller than $-1$ and other lies between $-1$ and $0$). Again, considering that the radius $|(\lambda + \alpha)/\mu| = -(\lambda + \alpha)/\mu$ is larger than the radius $-\alpha$, it is straightforward to see that $\alpha > -1$, following the similar steps as above. Alternatively, this implies that smaller root $\alpha_s$ (in absolute sense as in \eqref{alpha_s}) preservers the mapping of the circles $\mathcal{C}$ and $\mathcal{C}'$, and the regions enclosed by them in the $w$-plane, which is followed by the continuity argument as before.  
		\end{itemize}
		\item[Case II] Consider that the circle $\mathcal{C}$ encircles circle $\mathcal{C}'$ (Fig.~\ref{case_2}). In this situation, the hypotheses of Theorem~\ref{thm_main_theorem} are satisfied only if $\mu < 1 - |\lambda|$. Since $-1 < p < q < 1$ in this case, the roots of \eqref{mobius_roots_new} are again real. Further, these roots are negative (resp., positive) if $p + q > 0$ (resp., $p + q < 0$), unlike the previous case. Now, following the similar steps as above, one can draw the same conclusions in this case as well and are not discussed in detail for brevity. 
\end{enumerate}
	In summary, the region enclosed between $\mathcal{C}$ and $\mathcal{C}'$ always maps to an annulus between $f(\mathcal{C})$ and $f(\mathcal{C}')$ irrespective of the roots $\alpha$, while the mapping of inside and outside regions retains for $\alpha = \alpha_s$, and flips for $\alpha = \alpha_{\ell}$.
\end{proof}

Fig.~\ref{fig_mobius_mapping} illustrates Theorem~\ref{thm_mapping of regions between circles} (for the situation in Fig.~\ref{A1}) describing how the regions (enclosed by $\mathcal{C}$ in blue, between $\mathcal{C}$ and $\mathcal{C}'$ in gray, and outside $\mathcal{C}'$ in pink) in the $z$-plane maps to the $w$-plane with the choices of the roots $\alpha_s$ and $\alpha_{\ell}$. Please note that the outer circles in Fig.~\ref{fig_mobius_mapping} expand to infinity. One important observation from Theorem~\ref{thm_mapping of regions between circles} is that the condition $1 + \alpha \lambda > 0$ is always satisfied, which will be helpful in the subsequent analysis. 

\subsection{Illustrative Examples}

\begin{example}[$\mathcal{C}'$ encircles $\mathcal{C}$]\label{example_1}
For nonconcentric circles $\mathcal{C}: |z| = 1$ and $\mathcal{C}': |z - ({1}/{2})| = \sqrt{{5}/{2}}$, \eqref{mobius_roots} simplifies to $({1}/{2})\alpha^2 - (5/4)\alpha + ({1}/{2}) = 0$, since $\lambda = 1/2$ and $\mu = \sqrt{{5}/{2}}$. This equation yields two positive roots: $\alpha_{-} = 1/2$ and $\alpha_{+} = 2$. Selecting the smaller root $\alpha_s = 1/2$, the transformation \eqref{mobius_transformation}, given by $w= f_s(z) = (2z+1)/(2z+4)$, preserves the mapping, with $\mathcal{C}$ mapping to $f_s(\mathcal{C}): |w| = {1}/{2}$ and $\mathcal{C}'$ to $f_s(\mathcal{C}'): |w| = \sqrt{2/5}$. Conversely, for the larger root $\alpha_{\ell} = 2$, the transformation $w= f_{\ell}(z) = (2z+4)/(2z+1)$ reverses the mapping, with $\mathcal{C}$ mapping to $|w| = 2$ and $\mathcal{C}'$ to $|w| = \sqrt{{5}/{2}}$. Consequently, the interior-exterior relationship of the enclosed regions is preserved for $\alpha_s$ and reversed for $\alpha_{\ell}$ in the $w$-plane, as per Theorem~\ref{thm_mapping of regions between circles}.
\end{example}

\begin{example}[$\mathcal{C}$ encircles $\mathcal{C}'$]\label{example_2}
For nonconcentric circles $\mathcal{C}: |z| = 1$ and $\mathcal{C}': |z - ({2}/{5})| = {2}/{5}$, \eqref{mobius_roots} leads to $\alpha^2 + ({5}/{2})\alpha + 1 = 0$, since $\lambda = \mu = {2}/{5}$. This equation yields both negative roots: $\alpha_{+} = -1/2$ and $\alpha_{-} = -2$. Selecting $\alpha_s = \alpha_{+} = -1/2$, the transformation $w= f_s(z) = (2z - 1)/(2z - 4)$ preserves the mapping, with $\mathcal{C}$ mapping to $f_s(\mathcal{C}): |w| = {1}/{2}$ and $\mathcal{C}'$ mapping to $f_s(\mathcal{C}'): |w| = {1}/{4}$. Conversely, for the other root $\alpha_{\ell} = -2$, the transformation $w= f_{\ell}(z) = (2z-4)/(2z-1)$ reverses the mapping, with $\mathcal{C}$ mapping to $f_{\ell}(\mathcal{C}): |w| = 2$ and $\mathcal{C}'$ mapping to $f_{\ell}(\mathcal{C}'): |w| = 4$. Again, the enclosed regions are mapped accordingly, as per Theorem~\ref{thm_mapping of regions between circles}. 
\end{example}

\begin{remark}[Coordinate transformation]\label{remark_coordinate_transformation}
Suppose the circles are not in the standard form as in Theorem~\ref{thm_main_theorem}, the following remarks apply: 
\begin{itemize}[leftmargin=*]
\item If the circles are given by $|z| = \varrho$ and $|z-\lambda| = \mu$, we can transform them into the standard form by substituting $z = \varrho\tilde{z}$, leading to $|\tilde{z}| = 1$ and $|\tilde{z} - \tilde{\lambda}| = \tilde{\mu}$, where $\tilde{\lambda} = \lambda/\varrho$ and $\tilde{\mu} = \mu/\varrho$. After obtaining the solution, we revert to the original plane using the relation $\tilde{z} = {z}/{\varrho}$.

 \item When neither of the given circles is centered at the origin, a preliminary step involves translating the center of one circle to the origin by a linear mapping $z = \hat{z} + c_0$, where $c_0$ is a constant. For instance, to map the circles $|z-\lambda_1| = \mu_1$ and $|z-\lambda_2| = \mu_2$ to circles centered at $w = 0$, let $z = \hat{z} + \lambda_1$, yielding $|\hat{z}| = \mu_1$ and $|\hat{z} - (\lambda_2 - \lambda_1)| = \mu_2$. Then, using $\hat{z} = \mu_1\tilde{z}$, the circles are brought to the standard form $|\tilde{z}| = 1$ and $|\tilde{z} - ((\lambda_2 - \lambda_1)/\mu_1)| = \mu_2/\mu_1$.
 
 \item If $\lambda$ is not real, implying that the center of circle $\mathcal{C}'$ in Theorem~\ref{thm_main_theorem} lies on the imaginary axis. A preliminary rotation of the complex plane takes care of this: let $\varphi = \arg{(\lambda)}$ and $\tilde{z} = z{\rm e}^{-i\varphi}$. This transforms the original circle to $|\tilde{z}{\rm e}^{i\varphi} - |\lambda|{\rm e}^{i\varphi}| \implies |\tilde{z} - \tilde{\lambda}| = \mu$, where $\tilde{\lambda} = |\lambda|$ is a positive real number.
\end{itemize}
\end{remark}

\section{System Model and Problem Description} \label{section_3}

\subsection{System Model}
We consider an under-actuated unicycle robot that moves in the $\mathbb{R}^2$ space with constant forward linear speed $v \in \mathbb{R}_{+}$ and variable angular speed $\omega \in \mathbb{R}$. Let $r = [x, y]^\top \in \mathbb{R}^2$ be the position of the robot with its velocity vector heading in the direction $\theta \in \mathbb{S}^1$. For ease of analysis, we associate this $\mathbb {R}^2$ plane with a complex plane $\mathbb {C}$ through the mapping $[\mathfrak{p}, \mathfrak{q}]^\top \mapsto \mathfrak{p} + i\mathfrak{q}$, and represent the position and velocity of the robot as
\begin{equation}\label{position_actual_plane}
	r = x + iy = |r|{\rm e}^{i\phi},
\end{equation}
and $\dot{r} = v{\rm e}^{i\theta} = v(\cos\theta + i\sin\theta) \in \mathbb{C}$, respectively. Here, both the position angle $\phi$ and the heading angle $\theta$ are measured in the anticlockwise direction from the positive real axis. With this, the robot kinematics is described by
\begin{equation}\label{dyn}
	\dot{r}  = v{\rm e}^{i\theta}, \qquad \dot{\theta}  = \omega,
\end{equation}
where $\omega$ is the turn-rate controller to be designed for meeting the desired control objective(s). If $\omega \equiv 0$, the robot moves in the straight line with slope $\theta(0)$, while if $\omega \neq 0$ is constant, the robot moves on a circular orbit of the radius $v/|\omega| \in \mathbb{R}_+$. For characterizing the motion, we follow the convention that, if $\omega > 0$, the robot moves in the anticlockwise direction, and if $\omega < 0$, the robot moves in the clockwise direction.  

\begin{figure}[t!]
	\centering
	{\includegraphics[width=8.0cm]{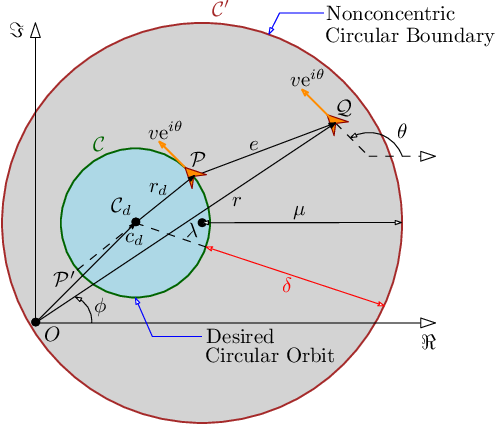}}
	\caption{Unicycle robot tracking desired circular orbit $\mathcal{C}$ with its motion bounded within nonconcentric circular boundary $\mathcal{C}'$.}
	\label{fig_problem}	
	\vspace*{-10pt}
\end{figure}

\subsection{Problem Description}
  As illustrated in Fig.~\ref{fig_problem}, our main objective is to stabilize the robot's motion about the desired circular orbit $\mathcal{C}$ while assuring that its trajectories remain bounded within a nonconcentric circular boundary $\mathcal{C}'$ throughout its motion. By saying the desired circle, we mean that the circle has the desired radius $r_d \in \mathbb{R}_{+}$ and center $c_d = c_{dx} + ic_{dy} \in \mathbb{C}$. For the robot to move around $\mathcal{C}$, the absolute value of the error $e \in \mathbb{C}$ between the current position of the robot (point $\mathcal{Q}$) and the desired location (point $\mathcal{P}$) on the circular orbit must be minimized. The choice of the point $\mathcal{P}$ is followed by the fact that the velocity vector $\dot{r}$ must be tangent to $\mathcal{C}$ at $\mathcal{P}$ if the robot moves on the circle $\mathcal{C}$ at that instant of time $t$. It is easy to observe that other such point on $\mathcal{C}$ is $\mathcal{P}'$ which is diametrically opposite to $\mathcal{P}$. Either of these points can be considered, however, the point $\mathcal{P}$ (resp., $\mathcal{P}'$) is suitable if the robot wishes to move in the anticlockwise (resp., clockwise) direction. By rotating the unit vector ${\rm e}^{i\theta}$ by $\pi/2$ radians in the clockwise (resp., anticlockwise) direction, we obtain $-i {\rm e}^{i \theta}$ (resp., $i {\rm e}^{i \theta}$), representing the unit vector along $\mathcal{C}_d\mathcal{P}$ (resp., $\mathcal{C}_d\mathcal{P}')$. Using the vector addition law, it can be written that $\mathcal{PQ} = \mathcal{C}_d\mathcal{Q} - \mathcal{C}_d\mathcal{P} = (O\mathcal{Q} - O\mathcal{C}_d) - \mathcal{C}_d\mathcal{P}$ and $\mathcal{P}'\mathcal{Q} = \mathcal{C}_d\mathcal{Q} - \mathcal{C}_d\mathcal{P}' = (O\mathcal{Q} - O\mathcal{C}_d) - \mathcal{C}_d\mathcal{P}'$, which allows us to express the error vector as $e = (r - c_d) \pm i r_d {\rm e}^{i\theta}$, where `$+$' (resp., `$-$') sign corresponds to the motion of the robot in the anticlockwise (resp., clockwise) direction. 
 
Without loss of generality, we describe the two nonconcentric circles in Fig.~\ref{fig_problem} in the framework of Theorem~\ref{thm_main_theorem} (via a suitable coordinate transformation) and consider that $r_d = 1$ unit and $c_d = 0 + i0$, that is, the desired circular orbit is $\mathcal{C}: |z| = 1$ and assume that the outer circular boundary has the form $\mathcal{C}': |z - \lambda| = \mu$, with center $\lambda \in \mathbb{R}\setminus\{0\}$ and radius $\mu \in \mathbb{R}_{+}$. Here, since $\mathcal{C}'$ encircles $\mathcal{C}$, it must hold that $\mu > 1 + |\lambda|$, in accordance with Theorem~\ref{thm_mapping of regions between circles}. With these considerations, the preceding error vector can be expressed as:
\begin{equation}\label{error}
 e = r  \pm i {\rm e}^{i\theta}.
\end{equation}     
In light of the above discussion, we now formally state the problem considered in this paper:

\begin{problem}[Problem in actual plane]\label{problem}
Consider the circles $\mathcal{C}: |z| = 1$ and $\mathcal{C}': |z - \lambda| = \mu$, where $\lambda$ and $\mu$ are as defined in Theorem~\ref{thm_main_theorem} and satisfy the condition $\mu > 1 + |\lambda|$ (i.e., $\mathcal{C}'$ encircles $\mathcal{C}$ in accordance with Fig.~\ref{fig_problem}). Design the control law $\omega$ such that the unicycle robot \eqref{dyn} with initial position $r(0) \in  \mathcal{Z} \coloneqq \{z_r \in \mathbb{C} \mid |z_r - \lambda| < \mu\}$ asymptotically stabilizes on the desired circle $\mathcal{C}$ and its trajectory remains bounded within nonconcentric circular boundary $\mathcal{C}'$, that is, $e(t) \to 0$ (equivalently, $|r(t)| \to 1$), as $t \to \infty$ and $r(t) \in \mathcal{Z}$ for all $t \geq 0$. 
\end{problem}

According to Problem~\ref{problem}, the robot is not allowed to move exterior to $\mathcal{C}'$, however, it can move inside $\mathcal{C}$ and the region enclosed within the two circles. As shown in Fig.~\ref{fig_problem}, the radial distance $\delta \in \mathbb{R}_{+}$ between the two circles is not uniform and varies in space as we traverse around $\mathcal{C}$, unlike \cite{jain2019trajectory}. Our solution approach to Problem~\ref{problem} relies on transforming the motion of the robot in the actual plane to the transformed plane using M\"{o}bius transformation \eqref{mobius_transformation}. This essentially transforms Problem~\ref{problem} to the problem of motion stabilization within the concentric circles in the transformed plane. We emphasize that this is not just a space transformation problem, instead, the motion transformation problem where it becomes important how the velocity $\dot{r}$ and control $\omega$ of the unicycle robot \eqref{dyn} transforms and maps back to the actual plane, in addition to its position under transformation \eqref{mobius_transformation}. We try to answer these questions in the upcoming sections.

\section{Problem Formulation and Control Design in the transformed plane}\label{section_4}
In this section, we begin by restating the problem and designing the control in the transformed plane, following Theorem~\ref{thm_main_theorem} and Theorem~\ref{thm_mapping of regions between circles} under the M\"{o}bius transformation \eqref{mobius_transformation}. This control design needs to be mapped back to the actual plane to obtain the real control input $\omega$. We proceed in this direction as follows:  

\subsection{System Model and Problem Formulation in Transformed Plane}
Under \eqref{mobius_transformation}, the position \eqref{position_actual_plane} of the robot is transformed from the actual plane to the transformed plane as 
\begin{equation}\label{position_trnsformed_plane}
	\rho = f(r) = \frac{r+\alpha}{r+\beta} = \frac{\alpha(r+\alpha)}{1+\alpha r} \coloneqq |\rho|{\rm e}^{i\psi},
\end{equation}
where $|\rho|$ is the magnitude and $\psi = \arg(\rho)$ is the argument of $\rho = \rho_x + i\rho_y  \in \mathbb{C}$, measured in the anticlockwise-direction from the positive real-axis in the transformed plane. Similar to \eqref{dyn}, the equation of motion in the transformed plane can be expressed as 
\begin{equation}\label{tdyn}
	\dot{\rho}  = |\dot{\rho}|{\rm e}^{i\gamma}, \qquad \dot{\gamma}  = \Omega,
\end{equation}
where $|\dot{\rho}|$ is the linear speed and $\gamma \in \mathbb{S}^1$ is the heading angle of the unicycle robot in the transformed plane. Unlike the constant speed $v$ in the actual plane, the speed $|\dot{\rho}|$ in the transformed plane is no longer constant and varies with time. The angular rate $\dot{\gamma}$ is controlled using the (virtual) control law $\Omega$ in the transformed plane. However, the sense of rotation of the robot's velocity vector in the transformed plane is governed by the roots of \eqref{mobius_roots}. This is discussed in the following theorem: 

\begin{thm}[Sense of rotation of velocity vectors in two planes]\label{thm_sense_of_rotation}
Consider the robot models \eqref{dyn} and \eqref{tdyn} in the actual and transformed planes, respectively. Under the M\"{o}bius transformation \eqref{mobius_transformation}, the sense of rotation of the robot's velocity vector in the two planes is the same if $\alpha = \alpha_s$, and it is opposite if $\alpha = \alpha_{\ell}$.   
\end{thm}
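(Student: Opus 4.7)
The plan is to reduce the claim to a direct kinematic calculation along the stabilizing orbit $\mathcal{C}$, and then lift the conclusion to general trajectories via the interior-exterior statement of Theorem~\ref{thm_mapping of regions between circles}. The guiding observation is that, when the robot circulates along $\mathcal{C}$, the sense of rotation of its velocity vector equals the sense in which its position angle winds about the center of $\mathcal{C}$; the analogous identity holds on $f(\mathcal{C})$ because Theorem~\ref{thm_main_theorem} gives $|\rho|=|\alpha|$ there.

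First, I would parametrize the motion on $\mathcal{C}$ as $r(t) = e^{i\phi(t)}$, so $\dot r = i\dot\phi\,e^{i\phi}$. Comparing with $\dot r = v\,e^{i\theta}$ from \eqref{dyn} gives $\theta = \phi \pm \pi/2$ and hence $\omega = \dot\phi$; similarly, writing $\rho = |\alpha|\,e^{i\psi}$ on $f(\mathcal{C})$ yields $\dot\rho = i\rho\,\dot\psi$ and therefore $\Omega = \dot\psi$. The task therefore reduces to comparing the signs of $\dot\phi$ and $\dot\psi$.

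Next I would apply the chain rule $\dot\rho = f'(r)\,\dot r$, with $f'(r) = \alpha(1-\alpha^2)/(1+\alpha r)^2$ obtained from \eqref{mobius_transform_derivative} using $(a,b,c,d)=(1,\alpha,1,1/\alpha)$. Substituting $r=e^{i\phi}$ and $\rho$ from \eqref{position_trnsformed_plane}, dividing by $i\rho$, and using the identity $(e^{i\phi}+\alpha)=e^{i\phi}(1+\alpha e^{-i\phi})$ to pair $(1+\alpha e^{i\phi})$ with its conjugate, I expect the clean identity
\[
\dot\psi \;=\; \frac{1-\alpha^2}{\,|1+\alpha e^{i\phi}|^2\,}\,\dot\phi.
\]
The denominator is strictly positive, since $1+\alpha e^{i\phi}=0$ would force $|\alpha|=1$, excluded by Corollary~\ref{cor_quadractic_roots}(b) under the hypotheses of Theorem~\ref{thm_main_theorem}. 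Hence $\sgn(\Omega) = \sgn(\omega)\cdot\sgn(1-\alpha^2)$, and since Corollary~\ref{cor_quadractic_roots}(a) gives $\alpha_s\alpha_\ell=1$ with both roots of the same sign, $|\alpha_s|<1<|\alpha_\ell|$, so $1-\alpha_s^2>0$ and $1-\alpha_\ell^2<0$, establishing the theorem on $\mathcal{C}$.

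To extend the conclusion from $\mathcal{C}$ to arbitrary trajectories in $\mathcal{Z}$, I would invoke Theorem~\ref{thm_mapping of regions between circles}: since $f$ is conformal and therefore locally orientation-preserving on $\mathcal{Z}$, the side of the velocity vector on which $\mathcal{C}$ lies is preserved by $f$; for $\alpha=\alpha_s$ the interior of $\mathcal{C}$ maps to the interior of $f(\mathcal{C})$, preserving the sense of circulation, while for $\alpha=\alpha_\ell$ the interior-exterior swap flips the side on which $f(\mathcal{C})$ is seen and thereby reverses the sense of rotation. The step I expect to demand the most care is the algebraic reduction of $\dot\rho/(i\rho)$: the sign only becomes transparent once $(1+\alpha e^{i\phi})(e^{i\phi}+\alpha)$ is recognized as $e^{i\phi}\,|1+\alpha e^{i\phi}|^2$, after which Corollary~\ref{cor_quadractic_roots} delivers the dichotomy between $\alpha_s$ and $\alpha_\ell$ at once.
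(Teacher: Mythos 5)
Your proof is correct in its core, but it takes a genuinely different route from the paper. The paper's proof (Appendix~A) starts from the unit-vector relation ${\rm e}^{i\gamma} = \sgn(\Delta)\left[\frac{1+\alpha r^\ast}{1+\alpha r}\right]{\rm e}^{i\theta}$, writes it as ${\rm e}^{i\gamma} = \frac{\alpha(1-\alpha^2)}{|\alpha(1-\alpha^2)|}{\rm e}^{i(\theta-2\Upsilon)}$ with $\Upsilon = \arg(1+\alpha r)$, and then carries out a four-case geometric analysis ($0<\alpha<1$, $\alpha>1$, $-1<\alpha<0$, $\alpha<-1$, the last with two sub-cases) bounding $\Upsilon$ quadrant by quadrant for anticlockwise motion on $\mathcal{C}$, so as to track how $\gamma$ moves as $\theta$ increases. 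You instead differentiate $\rho = f(r)$ along the invariant circle and land on the single identity $\dot\psi = \frac{1-\alpha^2}{|1+\alpha e^{i\phi}|^2}\dot\phi$ (which I verified: $(1+\alpha e^{i\phi})(e^{i\phi}+\alpha) = e^{i\phi}|1+\alpha e^{i\phi}|^2$ for real $\alpha$, and the denominator never vanishes since $\alpha \neq \pm 1$ by Corollary~\ref{cor_quadractic_roots}), after which the dichotomy $|\alpha_s|<1<|\alpha_\ell|$ from $\alpha_s\alpha_\ell = 1$ settles both cases at once. Your computation is shorter, uniform over the sign cases, and makes the mechanism transparent (the sign of $1-\alpha^2$); note that your restriction to motion on $\mathcal{C}$ is not a loss relative to the paper, which makes exactly the same reduction. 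What the paper's longer analysis buys is explicit angular ranges of $\gamma$ relative to $\theta$ in each regime (the content of Fig.~\ref{fig_resultant_vector}), working directly with the heading angle rather than the position angle. One caution on your closing paragraph: the claimed extension to arbitrary trajectories in $\mathcal{Z}$ cannot hold for the \emph{instantaneous} turn rate, since $\dot\gamma = \dot\theta - \dot\chi$ and $\dot\chi$ need not be small (a straight line, $\dot\theta \equiv 0$, maps to an arc with $\dot\gamma \neq 0$); your topological argument is sound only for the winding or circulation sense of closed orbits, which is all the theorem is used for — and since the paper likewise proves only the on-circle case, this caveat does not weaken your proposal relative to the original.
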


\begin{proof}
Please refer to Appendix~\ref{appendix_A}. 
\end{proof}

Let us now turn our focus on reformulating Problem~\ref{problem} in the transformed plane. Since the desired circular orbit $\mathcal{C}$ maps to the circle $f(\mathcal{C})$ with radius $|\alpha|$ in the transformed plane (refer to Fig.~\ref{fig_mobius_mapping}), the positional error in the transformed plane can be written analogously to \eqref{error} as
\begin{equation}\label{error_transformed_plane}
	\mathcal{E} = \rho \pm i |\alpha|{\rm e}^{i\gamma} = \rho + i \sigma {\rm e}^{i\gamma},
\end{equation}
where, 
\begin{equation}\label{sigma}
	\sigma \coloneqq \pm|\alpha| \neq 0,	
\end{equation}
and the `$+$' and `$-$' signs are used in the same spirit as in \eqref{error}. Following Theorem~\ref{thm_sense_of_rotation}, this corresponds to the fact that $\sigma = +|\alpha|$ if $\alpha = \alpha_s$, and $\sigma = -|\alpha|$ if $\alpha = \alpha_{\ell}$. The equivalent control objective in the transformed plane is to minimize $\mathcal{E}$ in \eqref{error_transformed_plane}. Depending on the root $\alpha_s$ or $\alpha_{\ell}$ of \eqref{mobius_roots}, Problem~\ref{problem} can be inferred either as a \emph{trajectory-constraining problem} or an \emph{obstacle-avoidance problem} in the transformed plane, respectively. This is illustrated in Fig.~\ref{fig_problem_transformed_plane} where the radial distance $\delta_T$ between the two circular boundaries is uniform and is given by
\begin{equation}\label{delta_transformed}
	\delta_T = 
	\begin{cases}
		\left|\frac{\lambda + \alpha}{\mu}\right| - |\alpha|,  & \text{if } \alpha = \alpha_s\\
		|\alpha| - \left|\frac{\lambda + \alpha}{\mu}\right|,  & \text{if } \alpha = \alpha_{\ell}.
	\end{cases}	
\end{equation} 

It is important to note that, analogous to actual error \eqref{error} in Fig.~\ref{fig_problem}, the transformed error \eqref{error_transformed_plane} is defined with respect to mapping $f(\mathcal{C})$ of the desired circular orbit $\mathcal{C}$ for both $\alpha_s$ and $\alpha_{\ell}$ (see Fig.~\ref{fig_problem_transformed_plane}). The problem can be formally restated in the transformed plane as:

\begin{figure}[t]
	\centering{
		\subfigure[Trajectory-constraining]{\includegraphics[width=4.4cm]{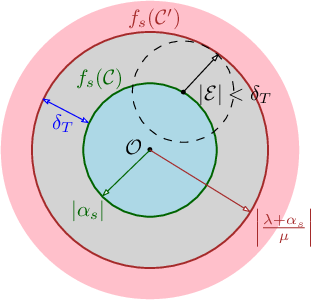}\label{trajectory_constraining}}\hspace*{0.2cm}
		\subfigure[Obstacle-avoidance]{\includegraphics[width=4.4cm]{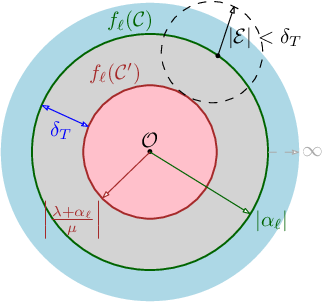}\label{collision_avoidance}}
	}
	\caption{Problem in the transformed plane with choices of roots $\alpha_s$ and $\alpha_{\ell}$.}
	\label{fig_problem_transformed_plane}
	\vspace*{-10pt}	
\end{figure}

\begin{problem}[Problem in transformed plane]\label{problem_transformed}
Consider the concentric circles $f(\mathcal{C}): |w| = |\alpha|$ and  $f(\mathcal{C}'): |w| = \left| \frac{\lambda + \alpha}{\mu}\right|$, obtained under the  M\"{o}bius transformation \eqref{mobius_transformation}, where $\lambda$ and $\mu$ are defined in Theorem~\ref{thm_main_theorem} and $\alpha$ is the root of \eqref{mobius_roots}. Based on the choice of $\alpha$ as discussed in Theorem~\ref{thm_mapping of regions between circles}, Problem~\ref{problem} is equivalent to the following two problems in the transformed plane:
	\begin{enumerate}[leftmargin=*]
		\item[\emph{(P1)}] Trajectory-Constraining Problem (see Fig.~\ref{trajectory_constraining}): If $\alpha = \alpha_s$. Design the control $\Omega$ such that the robot model \eqref{tdyn} with initial position $\rho(0) \in  \mathcal{W}_s \coloneqq \{w_{\rho} \in \mathbb{C} \mid |w_{\rho}| < \left| \frac{\lambda + \alpha_s}{\mu}\right|\}$ asymptotically stabilizes on the circle $f_s(\mathcal{C})$ and its trajectory remains bounded within the concentric circle $f_s(\mathcal{C}')$ during stabilization, i.e., $\mathcal{E}(t) \to 0$ (equivalently, $|\rho(t)| \to |\alpha_s|$) as $t \to \infty$ and $\rho(t) \in  \mathcal{W}_s$ for all $t \geq 0$. 
		\item[\emph{(P2)}] Obstacle-Avoidance Problem (see Fig.~\ref{collision_avoidance}): If $\alpha = \alpha_{\ell}$. Design the control $\Omega$ such that the robot model \eqref{tdyn} with initial position $\rho(0) \in  \mathcal{W}_{\ell} \coloneqq \{w_{\rho} \in \mathbb{C} \mid |w_{\rho}| > \left| \frac{\lambda + \alpha_{\ell}}{\mu}\right|\}$ asymptotically stabilizes on the circle $f_{\ell}(\mathcal{C})$ and its trajectory remains exterior to the concentric circle $f_{\ell}(\mathcal{C}')$ during stabilization, i.e., $\mathcal{E}(t) \to 0$ (equivalently, $|\rho(t)| \to |\alpha_{\ell}|$) as $t \to \infty$ and $\rho(t) \in  \mathcal{W}_{\ell}$ for all $t \geq 0$. 
	\end{enumerate}
\end{problem}

To solve Problem~\ref{problem_transformed}, we express the requirements on $\rho$ in terms of the transformed error $\mathcal{E}$ to be used in the logarithmic BLF. In this context, the condition $|\mathcal{E}| < \delta_T$, which essentially represents a disk of radius $\delta_T$ as shown in Fig.~\ref{fig_problem_transformed_plane}, becomes crucial in both the cases of $\alpha$, which is the topic of the next section. 

\subsection{Control Design in Transformed Plane}
 In this subsection, we first design the controller $\Omega$ to solve the Problem~\ref{problem_transformed} in the transformed plane and show how this renders a solution to the original Problem~\ref{problem}. A connection between $\Omega$ and the actual control law $\omega$ is established in Section~\ref{section_5}. 
 
Concerning the robot motion in the transformed plane, we consider the BLF:
\begin{equation}\label{Lyapunov_function}
	\mathcal{S}(\mathcal{E}) = \mathcal{S}(\rho, \gamma) \coloneqq \frac{1}{2}  \ln \left(\frac{\delta_T^2}{\delta_T^2 - |\mathcal{E}|^2}\right),
\end{equation}
where, `$\ln$' denotes the natural logarithm and the constant $\delta_T > 0$ is defined in \eqref{delta_transformed}. The Lyapunov function $\mathcal{S}(\rho, \gamma)$ exhibits the following properties: (i) it is positive definite and continuously differentiable in the region $|\mathcal{E}| < \delta_T$ \cite{tee2009barrier}, and (ii) $\mathcal{S}(\rho, \gamma) = 0$ whenever $\mathcal{E} = 0$. It means that the minimization of $\mathcal{S}(\mathcal{E})$ corresponds to the situation when the (virtual) robot moves on the circle $f_s(\mathcal{C})$ (resp., $f_{\ell}(\mathcal{C})$) for $\alpha_s$ (resp., $\alpha_{\ell}$) (see Fig.~\ref{fig_problem_transformed_plane}). The time-derivative of $\mathcal{S}(\mathcal{E})$, along the dynamics \eqref{tdyn}, is obtained as
\begin{equation}\label{Lyapunov_function_derivative}
	\dot{\mathcal{S}} = \frac{1}{2} \left[\frac{\delta_T^2 - |\mathcal{E}|^2}{\delta_T^2}\right]\times\left[\frac{\delta_T^2\frac{d}{dt}{|\mathcal{E}|^2}}{(\delta_T^2 - |\mathcal{E}|^2)^2}\right] =  \frac{1}{2}\left(\frac{\frac{d}{dt}|\mathcal{E}|^2}{\delta_T^2 - |\mathcal{E}|^2}\right),	
\end{equation}
where $\frac{1}{2}\frac{d}{dt}|\mathcal{E}|^2 = \langle \mathcal{E}, \dot{\mathcal{E}} \rangle $. From \eqref{error_transformed_plane}, $\dot{\mathcal{E}}$ is obtained as:
\begin{equation}\label{error_transformed_plane_dot}
	\dot{\mathcal{E}} =	\dot{\rho} - \sigma {\rm e}^{i\gamma} \dot{\gamma} = |\dot{\rho}|{\rm e}^{i\gamma} - \sigma {\rm e}^{i\gamma} \dot{\gamma} = (|\dot{\rho}| - \sigma\Omega){\rm e}^{i\gamma},
\end{equation}
using \eqref{tdyn}, where $(|\dot{\rho}| - \sigma\dot{\gamma}) \in \mathbb{R}$. Substituting for $\mathcal{E}$ and $\dot{\mathcal{E}}$ from \eqref{error_transformed_plane} and \eqref{error_transformed_plane_dot}, respectively, we have $\frac{1}{2}\frac{d}{dt}|\mathcal{E}|^2 = \langle \rho + i\sigma{\rm e}^{i\gamma},  {\rm e}^{i\gamma} \rangle (|\dot{\rho}| - \sigma\Omega) = [\langle \rho,  {\rm e}^{i\gamma} \rangle + \sigma \langle i {\rm e}^{i\gamma},  {\rm e}^{i\gamma} \rangle](|\dot{\rho}| - \sigma \Omega) = \langle \rho,  {\rm e}^{i\gamma} \rangle (|\dot{\rho}| - \sigma\Omega)$, as $\langle i {\rm e}^{i\gamma},  {\rm e}^{i\gamma} \rangle = 0$. Thus, \eqref{Lyapunov_function_derivative} simplifies as
\begin{equation}
	\dot{\mathcal{S}} = \frac{\langle \rho, {\rm e}^{i\gamma} \rangle }{\delta_T^2 - |\mathcal{E}|^2}(|\dot{\rho}|- \sigma \Omega).
\end{equation} 

Based on the potential \eqref{Lyapunov_function}, we propose the control $\Omega$ in the following theorem. 

\begin{thm}[Control law $\Omega$]\label{thm_control_law_transformed}
Consider the transformed robot model \eqref{tdyn} and assume that the initial states $[\rho(0), \gamma(0)]^\top$ belong to the set $\mathcal{W}_{\delta_T} \coloneqq \{[\rho, \gamma]^\top \in \mathbb{C} \times \mathbb{S}^1 \mid |\mathcal{E}| < \delta_T\}$, where $\mathcal{E}$ and $\delta_T$ are defined in \eqref{error_transformed_plane} and \eqref{delta_transformed}, respectively. Let \eqref{tdyn} be governed by the control law
	\begin{equation}\label{control_law_transformed}
		\Omega = \frac{1}{\sigma} \left[|\dot{\rho}|+ \kappa \frac{\langle \rho, {\rm e}^{i\gamma} \rangle }{\delta_T^2 - |\mathcal{E}|^2}\right],
	\end{equation}
	where $\kappa > 0$ is the control gain and $\sigma$ is given by \eqref{sigma}. Then, the following properties hold:
	\begin{enumerate}[leftmargin=*]
		\item[(a)] The (virtual) robot asymptotically stabilizes to the desired circle $f(\mathcal{C})$ in the set $\mathcal{W}_{\delta_T}$ with its direction of motion determined by the roots $\alpha_s$ and $\alpha_{\ell}$, as per Theorem~\ref{thm_sense_of_rotation}. 
		\item[(b)] Depending on $\alpha_s$ and $\alpha_{\ell}$, the trajectories of \eqref{tdyn} remain bounded in the regions described by: 
		\begin{subequations}\label{rho_bounds}
		\begin{align}
			\label{rho_bound_alpha_s}2|\alpha| - \left|\frac{\lambda + \alpha}{\mu}\right| < |\rho(t)| < \left|\frac{\lambda + \alpha}{\mu}\right|, \quad & \text{if } \alpha = \alpha_s\\	
			\label{rho_bound_alpha_l}\left|\frac{\lambda + \alpha}{\mu}\right| < |\rho(t)| < 2|\alpha| - \left|\frac{\lambda + \alpha}{\mu}\right|, \quad & \text{if } \alpha = \alpha_{\ell},
		\end{align} 
	\end{subequations}
		for all $t \geq 0$.
	\end{enumerate}
\end{thm}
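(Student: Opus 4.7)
\emph{Proof plan.} The plan is to exploit the fact that the proposed control \eqref{control_law_transformed} is engineered precisely to cancel the indefinite factor in the expression for $\dot{\mathcal{S}}$ derived immediately before the theorem. Substituting \eqref{control_law_transformed} makes the bracketed factor $|\dot{\rho}|-\sigma\Omega$ collapse to $-\kappa\langle\rho,{\rm e}^{i\gamma}\rangle/(\delta_T^{2}-|\mathcal{E}|^{2})$, so that
\[
\dot{\mathcal{S}} \;=\; -\kappa\,\frac{\langle\rho,{\rm e}^{i\gamma}\rangle^{2}}{(\delta_T^{2}-|\mathcal{E}|^{2})^{2}} \;\leq\; 0
\]
everywhere in $\mathcal{W}_{\delta_T}$, independently of whether $\alpha=\alpha_s$ or $\alpha=\alpha_\ell$ is used.

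For part (b), this non-positivity together with the hypothesis $[\rho(0),\gamma(0)]^\top\in\mathcal{W}_{\delta_T}$ allows me to invoke Lemma~\ref{lem_BLF} (with $\zeta$ playing the role of $|\mathcal{E}|$ and $\varpi=\delta_T$) to conclude $|\mathcal{E}(t)|<\delta_T$ for all $t\geq 0$. I would then transfer this barrier bound to a bound on $|\rho(t)|$ by applying the triangle and reverse-triangle inequalities to $\mathcal{E}=\rho+i\sigma{\rm e}^{i\gamma}$, obtaining $|\sigma|-\delta_T<|\rho(t)|<|\sigma|+\delta_T$. Substituting $|\sigma|=|\alpha|$ and the two expressions for $\delta_T$ from \eqref{delta_transformed} then yields \eqref{rho_bound_alpha_s} for $\alpha=\alpha_s$ and \eqref{rho_bound_alpha_l} for $\alpha=\alpha_\ell$ essentially by inspection.

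For part (a), I would appeal to LaSalle's invariance principle on the positively invariant sublevel sets of $\mathcal{S}$ inside $\mathcal{W}_{\delta_T}$, which are compact because $\mathcal{S}\to\infty$ as $|\mathcal{E}|\to\delta_T$ and $\gamma\in\mathbb{S}^{1}$. The set $\{\dot{\mathcal{S}}=0\}$ coincides exactly with $\{\langle\rho,{\rm e}^{i\gamma}\rangle=0\}$, i.e.\ the position vector and the velocity direction in the transformed plane are orthogonal. On the largest invariant subset, the control \eqref{control_law_transformed} collapses to $\Omega=|\dot{\rho}|/\sigma$, and differentiating the identity $\langle\rho,{\rm e}^{i\gamma}\rangle\equiv 0$ along \eqref{tdyn} produces $|\dot{\rho}|\bigl(1+\langle\rho,i{\rm e}^{i\gamma}\rangle/\sigma\bigr)=0$. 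Since $|\dot{\rho}|=|f'(r)|\,v>0$ throughout the admissible region (by \eqref{mobius_transform_derivative}, the exclusion of $\alpha=\pm 1$ in Corollary~\ref{cor_quadractic_roots}, and the assumption that $z=-\beta$ lies outside $\mathcal{R}$), this forces $\langle\rho,i{\rm e}^{i\gamma}\rangle=-\sigma$; combined with the orthonormal decomposition $\langle\rho,{\rm e}^{i\gamma}\rangle^{2}+\langle\rho,i{\rm e}^{i\gamma}\rangle^{2}=|\rho|^{2}$, this yields $|\rho|=|\alpha|$ and $\rho=-i\sigma{\rm e}^{i\gamma}$, i.e.\ $\mathcal{E}\equiv 0$, meaning the limiting trajectory lies on $f(\mathcal{C})$. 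The sense of rotation along this asymptotic circle is then pinned down by the sign convention of $\sigma$ in \eqref{sigma}, which via Theorem~\ref{thm_sense_of_rotation} corresponds to the directions claimed for $\alpha_s$ and $\alpha_\ell$.

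The main obstacle is the LaSalle step: the condition $\langle\rho,{\rm e}^{i\gamma}\rangle=0$ alone does not pin down $\mathcal{E}=0$, because it admits spurious invariant motions (for instance, $\rho\equiv 0$ would satisfy it trivially). Ruling these out requires the second differentiation together with the strict positivity of $|\dot{\rho}|$ outlined above, which is the one genuinely non-routine piece. The BLF invariance of part (b) and the triangle-inequality translation into \eqref{rho_bounds} are then immediate once $\dot{\mathcal{S}}\leq 0$ has been established.
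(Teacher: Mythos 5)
Your proposal is correct and takes essentially the same route as the paper: the same BLF \eqref{Lyapunov_function} yielding $\dot{\mathcal{S}} = -\kappa\left[\langle\rho,{\rm e}^{i\gamma}\rangle/(\delta_T^2-|\mathcal{E}|^2)\right]^2 \leq 0$ under \eqref{control_law_transformed}, LaSalle's invariance principle on a compact positively invariant sublevel set for part (a), and Lemma~\ref{lem_BLF} together with the reverse triangle inequality and substitution of \eqref{delta_transformed} for part (b). The only local variation is how you close the LaSalle step: the paper deduces $\mathcal{E}=0$ from $\dot{\mathcal{E}}=0$ (constancy of $\mathcal{E}$ on the invariant set $\Gamma$) combined with $\langle\mathcal{E},{\rm e}^{i\gamma}\rangle=0$, whereas you differentiate the identity $\langle\rho,{\rm e}^{i\gamma}\rangle\equiv 0$ along \eqref{tdyn} and use $|\dot{\rho}|>0$ to force $\langle\rho,i{\rm e}^{i\gamma}\rangle=-\sigma$ and hence $\rho=-i\sigma{\rm e}^{i\gamma}$ — an equivalent and, if anything, more explicit way of excluding spurious invariant motions such as $\rho\equiv 0$, which the paper handles only implicitly.
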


\begin{proof}
We prove the statement separately as follows:
\begin{enumerate}[leftmargin=*]
\item [(a)]Consider the potential function \eqref{Lyapunov_function} whose time-derivative, along the dynamics \eqref{tdyn}, is given by \eqref{Lyapunov_function_derivative}. Choosing $\Omega$ as given in \eqref{control_law_transformed}, $\dot{\mathcal{S}}$ becomes
\begin{equation}\label{Lyapunov_function_derivative_new}
\dot{\mathcal{S}} = -\kappa \left[\frac{\langle \rho, {\rm e}^{i\gamma} \rangle }{\delta_T^2 - |\mathcal{E}|^2} \right]^2 \leq 0,
\end{equation}
for $\kappa > 0$. This implies that $\mathcal{S}(\rho, \gamma) \leq \mathcal{S}(\rho(0), \gamma(0))$, $\forall t \geq 0$, along the trajectories of \eqref{tdyn}. Let us consider the set $\Xi \coloneqq \left\{[\rho, \gamma]^\top \in \mathcal {W}_{\delta_T} \mid \mathcal{S}(\rho, \gamma) \leq \mathcal{S}(\rho(0), \gamma(0))\right\}$. Note that the set $\Xi$ is compact and positively invariant because $\mathcal{S}(\rho, \gamma)$ is positive definite and continuously differentiable in $\mathcal{W}_{\delta_T}$, and $\dot{\mathcal{S}} \leq 0$ along the solutions of \eqref{tdyn}. According to Lasalle's invariance principle \cite{khalil2002nonlinear}, all the solutions of \eqref{tdyn}, under the control \eqref{control_law_transformed}, converge to the largest invariant set $\Gamma \subset \Xi$, where $\dot{\mathcal{S}} = 0$, i.e., $\Gamma = \{[\rho, \gamma]^\top \in \mathcal{W}_{\delta_T} \mid \dot{\mathcal{S}} = 0\}$. From \eqref{Lyapunov_function_derivative_new}, it is clear that $\dot{\mathcal{S}} = 0$ if 
\begin{equation}\label{largest_invariant_set}
\frac{\langle \rho, {\rm e}^{i\gamma} \rangle}{\delta_T^2 - |\mathcal{E}|^2} = 0 \implies \langle \rho, {\rm e}^{i\gamma} \rangle = 0,
\end{equation}
in $\Gamma$. Substituting \eqref{largest_invariant_set} into \eqref{control_law_transformed} implies that $\Omega = |\dot{\rho}|/\sigma$ in the set $\Gamma$. Consequently, it follows from \eqref{error_transformed_plane_dot} that $\dot{\mathcal{E}} = 0$ in the set $\Gamma$. Now, substituting for $\rho$ from \eqref{position_trnsformed_plane} into \eqref{largest_invariant_set}, results in $\langle \mathcal{E} - i \sigma {\rm e}^{i\gamma}, {\rm e}^{i\gamma} \rangle = 0 \implies \langle \mathcal{E},  {\rm e}^{i\gamma} \rangle - \sigma \langle i{\rm e}^{i\gamma}, {\rm e}^{i\gamma} \rangle = 0 \implies \langle \mathcal{E},  {\rm e}^{i\gamma} \rangle = 0$, since $\langle i{\rm e}^{i\gamma}, {\rm e}^{i\gamma} \rangle = 0$. For $\langle \mathcal{E},  {\rm e}^{i\gamma} \rangle = 0$ to hold, it is needed that $\mathcal{E} = 0$, as $\dot{\mathcal{E}} = 0$ in $\Gamma$. This, together with the fact that $\Omega = |\dot{\rho}|/\sigma$ in $\Gamma$, implies that the trajectories of \eqref{tdyn} asymptotically converge around the circle of radius $|\sigma| = |\alpha|$ with the linear speed $|\dot{\rho}|$ and the direction of rotation determined by $\sigma$ (see Theorem~\ref{thm_sense_of_rotation}). Alternatively, $\mathcal{E} \to 0$ in the set $\Gamma \subset \Xi \subset \mathcal{W}_{\delta_T}$, as $t \to \infty$.     
\item [(b)] Since $\mathcal{S}(\mathcal{E}) \to \infty$ as $|\mathcal{E}| \to \delta_T$ and $\dot{\mathcal{S}} \leq 0$, it directly follows from Lemma~\ref{lem_BLF} that $|\mathcal{E}(t)| < \delta_T$ for all $t \geq 0$. Substituting $\mathcal{E}(t)$ from \eqref{error_transformed_plane} into the preceding inequality, we get $|\rho(t) + i\sigma{\rm e}^{i\gamma(t)}| < \delta_T$ for all $t \geq 0$. Using the triangle inequality $\Big{|}|w_1| - |w_2|\Big{|} \leq |w_1 + w_2|$ for $w_1, w_2 \in \mathbb{C}$, we can write  	
\begin{equation*}
\Big{|} |\rho(t)| - |\sigma| \Big{|} \leq \Big{|} \rho(t) + i\sigma{\rm e}^{i\gamma(t)} \Big{|}  < \delta_T, \quad \forall t\geq 0,
\end{equation*}
considering $w_1 = \rho(t)$ and $w_2 = i\sigma{\rm e}^{i\gamma(t)}$. This implies that $\pm (|\rho(t)| - |\alpha|)   < \delta_T, \forall t \geq 0$, where we have substituted $|\sigma| = |\alpha|$ from \eqref{sigma}. Upon taking the `$+$' sign in this inequality, we get $|\rho(t)| < |\alpha| + \delta_T, \forall t \geq 0$ and taking `$-$' sign, we get $|\alpha| - \delta_T < |\rho (t)|, \forall t \geq 0$. Consequently, $|\alpha| - \delta_T < |\rho(t)| < |\alpha| + \delta_T$ for all $t \geq 0$. Now, substituting for $\delta_T$ from \eqref{delta_transformed}, we get the required result stated in \eqref{rho_bounds}. 
\end{enumerate}
Since it has been established that $|\mathcal{E}(t)| < \delta_T$ for all $t \geq 0$, the control \eqref{control_law_transformed} always remains finite for any solution trajectory. This completes the proof. 
\end{proof}

Our next step is to show how Theorem~\ref{thm_control_law_transformed} solves the problem in the actual plane, that is, Problem~\ref{problem}. Although \eqref{rho_bounds} provides both upper and lower bounds on $|\rho(t)|$, only the upper bound in \eqref{rho_bound_alpha_s} for $\alpha = \alpha_s$, and the lower bound in \eqref{rho_bound_alpha_l} for $\alpha = \alpha_{\ell}$ are important as long as the problems (P1) and (P2) are concerned. Considering only these necessary bounds, we obtain bounds on the position $|r(t)|$ in the actual plane in the below theorem. These can be similarly obtained for the other bounds in \eqref{rho_bounds} and omitted for brevity.       

\begin{thm}[Solution to Problem~\ref{problem}]\label{thm_problem_1_solution}
Under the conditions given in Theorem~\ref{thm_control_law_transformed}, Problem~\ref{problem} is solved in the following sense:
\begin{enumerate}[leftmargin=*]
	\item[(a)] In steady-state, robot \eqref{dyn} in the actual plane converges to the desired circular orbit, i.e., $|r(t)| = 1$ as $t \to \infty$. 
	\item[(b)] The robot's position $r(t)$ in the actual plane remains in the set $|r(t) - \lambda| < \mu$ for all $t \geq 0$, where $\lambda$ and $\mu$ are as defined in Theorem~\ref{thm_main_theorem}. 
\end{enumerate}
\end{thm}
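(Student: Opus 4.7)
The plan is to transfer the convergence and invariance already established in the transformed plane by Theorem~\ref{thm_control_law_transformed} back to the actual plane via the inverse Möbius map $f^{-1}$ from \eqref{imt}, together with the region-mapping dictionary provided by Theorem~\ref{thm_main_theorem} and Theorem~\ref{thm_mapping of regions between circles}. The only new ingredient beyond those results is the elementary observation that $f^{-1}$ is a continuous bijection on the relevant subset of the $w$-plane, so that asymptotic behaviour and set-invariance in the $w$-plane pull back directly to the $z$-plane.

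For part (a), I would begin with Theorem~\ref{thm_control_law_transformed}(a), which gives $\mathcal{E}(t)\to 0$, equivalently $|\rho(t)|\to|\alpha|$ by \eqref{error_transformed_plane}, so $\rho(t)$ asymptotically approaches the circle $f(\mathcal{C})$ of radius $|\alpha|$. Since $r(t)=f^{-1}(\rho(t))$ and $f^{-1}$ is continuous with $f^{-1}(f(\mathcal{C}))=\mathcal{C}$ by Theorem~\ref{thm_main_theorem}, passing to the limit yields $|r(t)|\to 1$.

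For part (b), I would split on the choice of root. If $\alpha=\alpha_s$, the relevant bound from \eqref{rho_bound_alpha_s} is the upper one, $|\rho(t)|<|(\lambda+\alpha_s)/\mu|$, placing $\rho(t)$ inside the disk bounded by $f_s(\mathcal{C}')$ for all $t\ge 0$. By Theorem~\ref{thm_mapping of regions between circles}, $\alpha_s$ preserves the interior-exterior relationship, so $f_s^{-1}$ sends this open disk onto the interior of $\mathcal{C}'$, giving $|r(t)-\lambda|<\mu$. If $\alpha=\alpha_\ell$, the relevant bound from \eqref{rho_bound_alpha_l} is the lower one, $|\rho(t)|>|(\lambda+\alpha_\ell)/\mu|$, so $\rho(t)$ lies in the exterior of $f_\ell(\mathcal{C}')$; since $\alpha_\ell$ reverses the interior-exterior relationship (again by Theorem~\ref{thm_mapping of regions between circles}), this exterior pulls back under $f_\ell^{-1}$ to the interior of $\mathcal{C}'$, once more yielding $|r(t)-\lambda|<\mu$.

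The main obstacle I anticipate is not analytical but notational bookkeeping: one must select the correct bound from \eqref{rho_bounds} in each case (upper for $\alpha_s$, lower for $\alpha_\ell$) and invoke the matching direction of the interior-exterior mapping from Theorem~\ref{thm_mapping of regions between circles}. A secondary care is checking that the pole of $f^{-1}$, located at $w=a/c$ in \eqref{imt}, lies outside the region swept by $\rho(t)$, so that $f^{-1}$ is genuinely continuous along the trajectory; this follows from the pole-avoidance reasoning already used in the proof of Theorem~\ref{thm_mapping of regions between circles}, combined with the hypothesis $r(0)\in\mathcal{Z}$ inherited from Problem~\ref{problem}. The companion ``inner'' bound in \eqref{rho_bound_alpha_s} and the ``outer'' bound in \eqref{rho_bound_alpha_l} are not needed to establish Problem~\ref{problem}, and may be omitted, in line with the authors' remark preceding the theorem.
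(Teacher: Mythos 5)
Your proposal is correct, but it takes a genuinely different route from the paper's own proof. The paper never uses continuity or the region-mapping dictionary of Theorem~\ref{thm_mapping of regions between circles} as a pullback device; it computes everything explicitly. For part (a) it takes the modulus of \eqref{position_trnsformed_plane} at $|\rho| = |\alpha|$, squares to obtain $(1-\alpha^2)|r|^2 = 1-\alpha^2$, and divides by $1-\alpha^2 \neq 0$ (Corollary~\ref{cor_quadractic_roots}). For part (b) it substitutes $\rho = \alpha(r+\alpha)/(1+\alpha r)$ into $|\rho| < \nu$ (resp. $|\rho| > \nu$), squares, completes the square, and uses $\mu^2 = (\lambda+\alpha)(\lambda+\beta)$ from \eqref{mu_square} to identify the resulting disk exactly as $|r-\lambda| < \mu$; the two roots are unified by the sign computation $1-\nu^2 = (1-\alpha^2)/(1+\alpha\lambda)$, positive for $\alpha_s$ and negative for $\alpha_\ell$, so that dividing \eqref{bound_equation_2} by the negative quantity $1-\nu^2$ flips the inequality back to the common form \eqref{r_inequality} --- this sign flip is precisely the algebraic shadow of the interior--exterior reversal you invoke. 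Your soft argument is shorter and conceptually cleaner, and your bookkeeping (upper bound for $\alpha_s$, lower bound for $\alpha_\ell$, matched to preservation/reversal) is exactly right; the pole issue you flag is genuine and is most cleanly settled by noting $f(\infty) = 1$, so the region dictionary itself places $w = 1$ outside the closed annulus \eqref{rho_bounds} containing the trajectory ($\nu < 1$ for $\alpha_s$, $\nu > 1$ for $\alpha_\ell$), which supplies the uniform continuity of $f^{-1}$ on a compact set that your limiting step in (a) tacitly needs --- note that $\rho(t)$ converges to the circle $f(\mathcal{C})$ only as a set, not to a point, so the rigorous version of ``passing to the limit'' should go through nearest points on $f(\mathcal{C})$ together with uniform continuity, rather than continuity alone. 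What the paper's heavier algebra buys in exchange is quantitative: the same completing-the-square computation, rerun with the general radius $\nu_\Theta$ of \eqref{convex_combination_radii_positive} in place of $\nu$, yields the explicit stringent bound \eqref{r_bound_new} in Theorem~\ref{thm6}, whose center and radius are no longer $\lambda$ and $\mu$; that circle is the image of no distinguished curve in the problem data, so a purely topological pullback of the kind you use cannot produce it.
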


\begin{proof}
The statements are proved separately as follows:
\begin{enumerate}[leftmargin=*]
\item[(a)] In steady-state, since trajectories of \eqref{tdyn} converge to the circle $f(\mathcal{C}): |w| = |\alpha|$ under the control law \eqref{control_law_transformed}, it holds that $|\rho| = |\alpha|$ at $t \to \infty$. Consequently, it follows from \eqref{position_trnsformed_plane}, after taking modulus on both sides and substituting $|\rho| = |\alpha|$, that $|r + \alpha| = |1 + \alpha r|$ as $t \to \infty$. Squaring both sides yields $|r + \alpha|^2 = |1 + \alpha r|^2 \implies (r + \alpha)(r^\ast + \alpha) = (1 + \alpha r)(1 + \alpha r^\ast) \implies |r|^2 + \alpha(r + r^\ast) + \alpha^2 = 1 + \alpha^2|r|^2 + \alpha(r + r^\ast) \implies (1 - \alpha^2)|r|^2 = (1 - \alpha^2) \implies |r|^2 = 1$, since $\alpha \neq \pm 1$ (see Corollary~\ref{cor_quadractic_roots}). This concludes that the robot \eqref{dyn} moves around the desired circular orbit $\mathcal{C}$ in the actual plane in the steady state. 
\item[(b)] Denote $\nu = |(\lambda + \alpha)/{\mu}|$. We now consider the following two cases depending on $\alpha_s$ and $\alpha_{\ell}$.
\begin{enumerate}[leftmargin=*]
\item [Case I] If $\alpha = \alpha_s$. In this case, it follows from \eqref{rho_bound_alpha_s} that $|\rho(t)| < \nu$ for all $t \geq 0$. Substituting for $\rho$ from \eqref{position_trnsformed_plane}, we get $|{\alpha(r + \alpha)}/({1 + \alpha r})| < \nu \implies |r + \alpha| < \nu |r + ({1}/{\alpha})|, \forall t \geq 0$. Now, squaring on both sides gives $(r + \alpha)(r^\ast + \alpha) < \nu^2(r + ({1}/{\alpha}))(r^\ast + ({1}/{\alpha}))$. Rearranging terms yields $|r|^2 + \alpha(r + r^\ast) + \alpha^2 < \nu^2(|r|^2 + ({1}/{\alpha})(r + r^\ast) + ({1}/{\alpha^2}))$, implying that 
\begin{equation}\label{bound_equation_1}
 (1 - \nu^2)|r|^2 + \frac{\alpha^2 - \nu^2}{\alpha}(r + r^\ast) + \frac{\alpha^4 - \nu^2}{\alpha^2} < 0, \forall t \geq 0.
\end{equation}
Note that
\begin{equation} \label{nu}
1 - \nu^2 = 	\frac{\mu^2 - (\lambda + \alpha)^2}{\mu^2},	
\end{equation}
where substituting $\mu^2 = (\lambda + \alpha)(\lambda + \beta)$ from \eqref{mu_square}, along with $\beta = 1/\alpha$, we get 
\begin{equation*}
	1 - \nu^2 = 	\frac{(\lambda+\alpha)(1 + \alpha \lambda) - \alpha(\lambda+\alpha)^2}{(\lambda+\alpha)(1 + \alpha \lambda)} = \frac{1 - \alpha^2}{1 + \alpha \lambda}.
\end{equation*}
Clearly, $1 - \nu^2 > 0$ for $\alpha = \alpha_s$, since $1 - \alpha^2 > 0$ and $1 + \alpha\lambda > 0$ (see Theorem~\ref{thm_mapping of regions between circles}). Hence, dividing by $1 - \nu^2$, \eqref{bound_equation_1} becomes
\begin{equation}\label{r_inequality}
|r|^2 + \frac{\alpha^2 - \nu^2}{\alpha(1 - \nu^2)}(r + r^\ast) + \frac{\alpha^4 - \nu^2}{\alpha^2(1 - \nu^2)} < 0, \forall t \geq 0.
\end{equation}
Upon adding and subtracting the square of the term $\frac{\alpha^2 - \nu^2}{\alpha(1 - \nu^2)}$, \eqref{r_inequality} becomes
\begin{align*}
	&|r|^2 + \frac{\alpha^2 - \nu^2}{\alpha(1 - \nu^2)}(r + r^\ast) + \left[\frac{\alpha^2 - \nu^2}{\alpha(1 - \nu^2)}\right]^2 \\
	& \qquad - \left[\frac{\alpha^2 - \nu^2}{\alpha(1 - \nu^2)}\right]^2 + \frac{\alpha^4 - \nu^2}{\alpha^2(1 - \nu^2)} < 0, \forall t \geq 0.
\end{align*}	
After combining the last two terms, the above inequality is simplified to:  	
\begin{equation*}
\left| r + \frac{\alpha^2 - \nu^2}{\alpha(1 - \nu^2)}\right|^2 < \left|\frac{\nu(1 - \alpha^2)}{\alpha(1 - \nu^2)}\right|^2, \forall t \geq 0.
\end{equation*}
In other words, it follows that
\begin{equation}\label{r_bound}
	\left|r(t) + \frac{\alpha^2 - \nu^2}{\alpha(1 - \nu^2)}\right| < \left|\frac{\nu(1 - \alpha^2)}{\alpha(1 - \nu^2)}\right|, \ \forall t \geq 0.
\end{equation}
Substituting for $\nu$, the center of circle \eqref{r_bound} is obtained as $\frac{\alpha^2 - \nu^2}{\alpha(1 - \nu^2)} = \frac{\mu^2 \alpha^2 - (\lambda + \alpha)^2 }{\alpha\mu^2 - \alpha(\lambda + \alpha)^2}$. Now, again replacing $\mu^2$ from \eqref{mu_square} with $\beta = 1/\alpha$ and simplifying, we get	$\frac{\alpha^2 - \nu^2}{\alpha(1 - \nu^2)} = \frac{\alpha(\lambda + \alpha)(1 + \alpha \lambda) - (\lambda + \alpha)^2}{(\lambda + \alpha)(1 + \alpha\lambda) - \alpha (\lambda + \alpha)^2} =  -\lambda$. Similarly, one can easily verify, after substituting for $\nu$ and $\mu^2$, that the radius of circle \eqref{r_bound} is $\left|\frac{\nu(1 - \alpha^2)}{\alpha(1 - \nu^2)}\right| = \frac{|\mu(1 - \alpha^2)(\lambda + \alpha)|}{|(1 - \alpha^2)(\lambda + \mu)|} = \mu$. Using these terms in \eqref{r_bound}, it holds that $|r(t) - \lambda| < \mu, \forall t \geq 0$, satisfying the requirement in Problem~\ref{problem}. 

\item [Case II] If $\alpha = \alpha_{\ell}$. In this case, it holds from \eqref{rho_bound_alpha_l} that $\nu < |\rho(t)|$ for all $t \geq 0$. Analogous to \eqref{bound_equation_1}, we obtain  
\begin{equation}
\label{bound_equation_2} (1 - \nu^2)|r|^2 + \frac{\alpha^2 - \nu^2}{\alpha}(r + r^\ast) + \frac{\alpha^4 - \nu^2}{\alpha^2} > 0, \forall t \geq 0,
\end{equation}
where please note the reversal of the inequity sign. From \eqref{nu}, $1 - \nu^2 = \frac{1 - \alpha^2}{1 + \alpha \lambda} < 0$ for $\alpha = \alpha_{\ell}$, since $1 - \alpha^2 < 0$ and $1 + \alpha \lambda > 0$ (see Theorem~\ref{thm_mapping of regions between circles}). Therefore, the division by $1 - \nu^2$ in \eqref{bound_equation_2} results in the same inequality as in \eqref{r_inequality}. Consequently, it follows that $|r(t) - \lambda| < \mu, \forall t \geq 0$, again satisfying the requirement in Problem~\ref{problem}.    
\end{enumerate}
\end{enumerate}
This concludes the proof.  
\end{proof}

We now establish a connection between the parameters in the two planes, followed by a relationship between the control laws $\omega$ and $\Omega$ in the next section. This is essential since the control $\omega$ is required for implementation in the actual plane.

\section{Relating Actual and Transformed Planes Parameters and Designing Actual Control Law}\label{section_5}
In this section, we focus on establishing the relationship between speeds $v$ and $|\dot{\rho}|$, the heading angles $\theta$ and $\gamma$, and finally the controllers $\omega$ and $\Omega$. Note that these relations can be established either way by considering the map $f$ in \eqref{mobius_transformation} or its inverse $f^{-1}$ (discussed later in detail). Since both provide a different viewpoint and insights into these relations, we discuss both of these, but with detailed derivations only in the former case for brevity.

\subsection{Transformed Plane in Terms of Actual Plane Parameters}\label{sec_mobius_map_relations}
In this subsection, we proceed with the relation \eqref{position_trnsformed_plane}, obtained by directly applying the M\"{o}bius transformation \eqref{mobius_transformation} on the robot's position $r$ in the actual plane, and relate the transformed plane in terms of the actual plane parameters. We begin with the following lemma:

\begin{lem}[$|\rho|$ and $\psi$ in terms of actual plane parameters]\label{lem_rho_modulus_and_psi}
In \eqref{position_trnsformed_plane}, the following relations holds:
	\begin{align}
		\label{rho_modulus}|\rho| &= |\alpha| \sqrt{\frac{\alpha^2+|r|^2+2\alpha|r|\cos\phi}{1 + \alpha^2|r|^2 + 2\alpha|r|\cos\phi}},\\
		\label{rbps}\tan\psi &= \left[\frac{(1-\alpha^2)|r|\sin\phi}{\alpha(1+|r|^2)+(1+\alpha^2)|r|\cos\phi}\right].		
	\end{align}
\end{lem}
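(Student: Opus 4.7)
The plan is a direct computation starting from the closed-form expression $\rho = \alpha(r+\alpha)/(1+\alpha r)$ in \eqref{position_trnsformed_plane}, combined with the polar form $r = |r|{\rm e}^{i\phi}$ from \eqref{position_actual_plane} and the fact that $\alpha \in \mathbb{R}$.

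To establish \eqref{rho_modulus}, I would compute $|\rho|^2 = \rho\rho^\ast$ by noting that
\begin{equation*}
|\rho|^2 = \frac{\alpha^2(r+\alpha)(r^\ast+\alpha)}{(1+\alpha r)(1+\alpha r^\ast)}.
\end{equation*}
Expanding both products and using the elementary identities $rr^\ast = |r|^2$ and $r+r^\ast = 2|r|\cos\phi$ (which are immediate from $r = |r|{\rm e}^{i\phi}$), the numerator becomes $\alpha^2(\alpha^2 + |r|^2 + 2\alpha|r|\cos\phi)$ and the denominator $1 + \alpha^2|r|^2 + 2\alpha|r|\cos\phi$. Taking the positive square root gives \eqref{rho_modulus}.

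For \eqref{rbps}, I would rationalize the formula for $\rho$ by multiplying numerator and denominator by the conjugate $(1+\alpha r^\ast)$, so that the denominator becomes the real positive quantity $|1+\alpha r|^2 = 1 + \alpha^2|r|^2 + 2\alpha|r|\cos\phi$. Expanding $(r+\alpha)(1+\alpha r^\ast) = r + \alpha|r|^2 + \alpha + \alpha^2 r^\ast$ and splitting into real and imaginary parts (using again $r + r^\ast = 2|r|\cos\phi$ and $r - r^\ast = 2i|r|\sin\phi$), the real part simplifies to $\alpha(1+|r|^2) + (1+\alpha^2)|r|\cos\phi$, while the imaginary part collapses to $(1-\alpha^2)|r|\sin\phi$. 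Since $\alpha \in \mathbb{R}$ and the denominator is real, the factor $\alpha$ in the numerator and the common positive denominator cancel when forming $\tan\psi = \Im(\rho)/\Re(\rho)$, yielding \eqref{rbps} directly.

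No step is genuinely difficult here — the proof is purely algebraic manipulation of the M\"{o}bius formula. The only care required is bookkeeping when separating real and imaginary parts during the rationalization in the second identity, and recognizing that the denominator of $|\rho|^2$ is precisely $|1+\alpha r|^2$, which avoids any need to handle absolute values of complex quantities explicitly. The well-definedness of $\tan\psi$ (i.e., nonvanishing of the real part for the range of $r$ under consideration) is not part of the claim and need not be addressed.
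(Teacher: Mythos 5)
Your proposal is correct and follows essentially the same route as the paper: for \eqref{rho_modulus} a direct modulus computation (the paper evaluates $|r+\alpha|$ and $|1+\alpha r|$ separately via $r=|r|{\rm e}^{i\phi}$, which is the same algebra as your $\rho\rho^\ast$ expansion), and for \eqref{rbps} the identical conjugate-multiplication by $(1+\alpha r^\ast)$ followed by separating real and imaginary parts and taking their ratio. No gaps; your observation that the real factor $\alpha$ and the positive denominator $|1+\alpha r|^2$ cancel in the tangent matches the paper's step of comparing with $|\rho|\cos\psi$ and $|\rho|\sin\psi$.
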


\begin{proof}
	From \eqref{position_trnsformed_plane}, we have $|\rho|  = \left|\frac{\alpha(r + \alpha)}{1 +\alpha r}\right| = \frac{|\alpha||r + \alpha|}{|1 + \alpha r|}$. Note that $|r + \alpha| = \big{|} |r|{\rm e}^{i\phi} + \alpha \big{|} = \big{|} (|r|\cos\phi + \alpha) + i |r| \sin\phi \big{|} = \sqrt{(|r|\cos\phi + \alpha)^2 + |r|^2\sin^2\phi} = \sqrt{\alpha^2 + |r|^2 + 2\alpha|r|\cos\phi}$. Similarly, $|1+\alpha r| = \big{|} 1+\alpha |r|{\rm e}^{i\phi} \big{|} = \big{|} (1+\alpha |r|\cos\phi) + i\sin\phi \big{|} = \sqrt{(1+\alpha|r|\cos\phi)^2+\alpha^2|r|^2\sin^2\phi} = \sqrt{1+\alpha^2|r|^2+2\alpha|r|\cos\phi}$. Using these relations, we get \eqref{rho_modulus}.
	
	To prove the second part, we multiply by the conjugate $(1+\alpha r^\ast)$ in the numerator and denominator of \eqref{position_trnsformed_plane} to obtain	
	\begin{equation}\label{rho_complex_conjugate_multiply}
		\rho = \frac{\alpha(r + \alpha)}{1 + \alpha r} \times \frac{1 + \alpha r^\ast}{1 + \alpha r^\ast} = \frac{\alpha[\alpha(1 + |r|^2) + r + \alpha^2 r^\ast]}{(1 + \alpha r)(1 + \alpha r^\ast)}.
	\end{equation}	
	It can be derived that $(1 + \alpha r)(1 + \alpha r^\ast) = 1 + \alpha^2 |r|^2 + \alpha(r + r^\ast) = 1+\alpha^2|r|^2+2\alpha|r|\cos\phi = |1 + \alpha r|^2$, using which, together with substituting $r$ and $r^\ast$ from \eqref{position_actual_plane}, \eqref{rho_complex_conjugate_multiply} becomes
	\begin{equation*}	
		\rho  = \frac{\alpha[\alpha(1+|r|^2)+(1+\alpha^2)|r|\cos\phi + i(1 - \alpha^2)|r|\sin\phi]}{|1 + \alpha r|^2}.
	\end{equation*}	
	Now, substituting for $\rho$ from \eqref{position_trnsformed_plane} and comparing real and imaginary parts, we obtain:
	\begin{subequations}\label{csr}
		\begin{align}
			|\rho|\cos\psi & = \frac{\alpha[\alpha(1+|r|^2)+(1+\alpha^2)|r|\cos\phi]}{|1 + \alpha r|^2},\\
			|\rho|\sin\psi & = \frac{\alpha(1-\alpha^2)|r|\sin\phi}{|1 + \alpha r|^2}.
		\end{align}	
	\end{subequations}
	Using \eqref{csr}, $\tan\psi$ is obtained as in \eqref{rbps}. Hence, proved.  
\end{proof}

Next, we establish a relation between the heading angles $\gamma$ and $\theta$ in the transformed and actual planes, respectively. This is one of the crucial relations in connecting $\omega$ and $\Omega$. 

\begin{lem}[Relationship between $\gamma$ and $\theta$]\label{lem_heading_angle_relation_transformed_plane}
Consider the actual and transformed robot models \eqref{dyn} and \eqref{tdyn} related via the M\"{o}bius transformation \eqref{mobius_transformation}. The heading angles $\gamma$ and $\theta$ are related as:
	\begin{equation}\label{angle_relation_transformed_plane}
		\gamma = \theta - \chi \pmod{\pi},	
	\end{equation}
	where,
	\begin{equation}\label{chi}
		\chi = \arctan \left( \frac{f_2}{f_1} \right), 
	\end{equation}
	with
	\begin{subequations}\label{f1f2}
		\begin{align}	
			f_1 & = 1+2\alpha|r|\cos\phi+\alpha^2|r|^2\cos2\phi,\\
			f_2 & = 2\alpha|r|\sin\phi +\alpha^2|r|^2\sin 2 \phi.	
		\end{align}
	\end{subequations}
	
\end{lem}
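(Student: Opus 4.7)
The plan is to differentiate the M\"obius transformation along the trajectory and then read off the heading angles as the arguments of the complex velocity vectors in each plane.

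First, I would apply the chain rule to $\rho = f(r) = \alpha(r+\alpha)/(1+\alpha r)$ to obtain $\dot{\rho} = f'(r)\,\dot{r}$, where a direct computation gives
\begin{equation*}
f'(r) = \frac{\alpha(1-\alpha^2)}{(1+\alpha r)^2}.
\end{equation*}
Since $\dot{r} = v\,{\rm e}^{i\theta}$ from \eqref{dyn} and $\dot{\rho} = |\dot{\rho}|\,{\rm e}^{i\gamma}$ from \eqref{tdyn}, taking the principal argument on both sides of $\dot{\rho} = f'(r)\dot{r}$ yields
\begin{equation*}
\gamma \equiv \theta + \arg\bigl(\alpha(1-\alpha^2)\bigr) - 2\,\arg(1+\alpha r) \pmod{2\pi}.
\end{equation*}
Note that $1+\alpha r$ never vanishes along any admissible trajectory, because $r = -1/\alpha = -\beta$ is excluded by Theorem~\ref{thm_main_theorem}; hence the argument is well defined.

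The quantity $\alpha(1-\alpha^2)$ is real (recall $\alpha \in \mathbb{R}$ and $\alpha \neq \pm 1$ by Corollary~\ref{cor_quadractic_roots}), so its argument is either $0$ or $\pi$. Working modulo $\pi$ therefore eliminates this term and reduces the relation to
\begin{equation*}
\gamma \equiv \theta - 2\,\arg(1+\alpha r) \pmod{\pi}.
\end{equation*}
It remains to identify $2\arg(1+\alpha r)$ with $\chi$. For this, I would substitute $r = |r|{\rm e}^{i\phi}$ and expand $(1+\alpha r)^2$ in Cartesian form:
\begin{align*}
(1+\alpha r)^2 &= \bigl[(1+\alpha|r|\cos\phi) + i\,\alpha|r|\sin\phi\bigr]^2 \\
&= \bigl[1 + 2\alpha|r|\cos\phi + \alpha^2|r|^2\cos 2\phi\bigr] + i\bigl[2\alpha|r|\sin\phi + \alpha^2|r|^2\sin 2\phi\bigr],
\end{align*}
using $\cos^2\phi - \sin^2\phi = \cos 2\phi$ and $2\sin\phi\cos\phi = \sin 2\phi$. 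The real and imaginary parts match $f_1$ and $f_2$ in \eqref{f1f2} exactly, so $\arg\bigl((1+\alpha r)^2\bigr) = 2\arg(1+\alpha r) = \arctan(f_2/f_1) = \chi$, which closes the identification and gives \eqref{angle_relation_transformed_plane}.

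The only delicate point is the $\pmod{\pi}$ bookkeeping: the real scalar $\alpha(1-\alpha^2)$ contributes a possible phase of $\pi$ depending on the sign of $\alpha$ (whether it is $\alpha_s$ or $\alpha_\ell$), and the $\arctan$ in \eqref{chi} only identifies $2\arg(1+\alpha r)$ up to a multiple of $\pi$ as well. These two sources of ambiguity are exactly what the $\pmod{\pi}$ statement absorbs, so no separate case analysis for $\alpha_s$ versus $\alpha_\ell$ is needed here — the sign issue is deferred to Theorem~\ref{thm_sense_of_rotation} and the choice of sign in \eqref{error_transformed_plane}. Other than this, every step is an algebraic identity.
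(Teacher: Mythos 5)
Your proof is correct, and while it pivots on the same chain-rule identity as the paper — $\dot{\rho} = f'(r)\,\dot{r}$ with $f'(r) = \alpha(1-\alpha^2)/(1+\alpha r)^2$, which is exactly the paper's \eqref{rho_dot} — the route from there is genuinely different. The paper multiplies by the conjugate $(1+\alpha r^{\ast})^2$, expands into real and imaginary parts to obtain the $\tan\gamma$ quotient \eqref{tan_gamma_1}, and then reconstructs $f_1, f_2$ through the sine/cosine addition formulas and a right-triangle normalization ($\cos\chi = f_1/\sqrt{f_1^2+f_2^2}$, $\sin\chi = f_2/\sqrt{f_1^2+f_2^2}$) to conclude $\tan\gamma = \tan(\theta-\chi)$, hence $\gamma = n\pi + (\theta - \chi)$. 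You instead work with complex arguments directly and observe the identity $f_1 + i f_2 = (1+\alpha r)^2$, which the paper never states explicitly; this makes the appearance of $f_1, f_2$ structural rather than the outcome of a trig computation, and it cleanly isolates the two sources of the $\pmod{\pi}$ ambiguity (the sign of the real scalar $\alpha(1-\alpha^2)$ and the $\pi$-periodicity of $\arctan$), which in the paper are absorbed silently in the $\tan\gamma = \tan(\theta-\chi)$ step. What the paper's longer computation buys is the explicit formula \eqref{tan_gamma_1}, which is reused later (e.g., to compute $\gamma(0)$ in the simulation section), and bookkeeping for $f_1, f_2$ that feeds directly into the proof of Lemma~\ref{lem_chi_dot}. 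One small attribution slip on your side: you justify $1+\alpha r \neq 0$ by appeal to Theorem~\ref{thm_main_theorem}, but that theorem only excludes $z = -\beta$ from lying \emph{on} $\mathcal{C}$ and $\mathcal{C}'$; the fact that the pole $-\beta$ avoids the entire constrained region is established in the proof of Theorem~\ref{thm_mapping of regions between circles}. For the pointwise kinematic relation asserted by the lemma this is harmless — the relation holds wherever $1+\alpha r \neq 0$ — but the citation should point to the continuity/pole argument there rather than to Theorem~\ref{thm_main_theorem}.
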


\begin{proof}
From \eqref{position_trnsformed_plane}, the time-derivative of $\rho$ (similar to \eqref{mobius_transform_derivative}) is obtained as
	\begin{equation}\label{rho_dot}
		\dot{\rho}	= \frac{\alpha (1-\alpha^2)\dot{r}}{(1 + \alpha r)^2} = \frac{\alpha (1-\alpha^2)v {\rm e}^{i\theta}}{(1 + \alpha r)^2}.
	\end{equation}
On multiplying by the conjugate square term $(1 + \alpha r^\ast)^2$ in the numerator and denominator of \eqref{rho_dot}, yields
	\begin{equation*}
		\dot{\rho} 	= \frac{\alpha (1-\alpha^2)v {\rm e}^{i\theta}}{(1 + \alpha r)^2} \times \frac{(1 + \alpha r^\ast)^2}{(1 + \alpha r^\ast)^2} = \frac{ \alpha (1-\alpha^2)v {\rm e}^{i\theta}(1 + \alpha r^\ast)^2}{|1+\alpha r|^4}.
	\end{equation*}
	Now, substituting for $r^\ast$ and $\dot{\rho}$ from \eqref{position_actual_plane} and \eqref{tdyn}, respectively, and simplifying, we get
	\begin{equation}\label{e_i_gamma_initial}
		|\dot{\rho}|{\rm e}^{i\gamma} = \frac{\alpha (1-\alpha^2)v[{\rm e}^{i\theta} + 2\alpha|r|{\rm e}^{i(\theta - \phi)} + \alpha^2 |r|^2 {\rm e}^{i(\theta - 2\phi)}]}{|1+\alpha r|^4}.
	\end{equation}
Comparing real and imaginary parts in \eqref{e_i_gamma_initial}, one can obtain
	\begin{equation}\label{tan_gamma_1}
		\tan\gamma = \frac{\sin\theta+2\alpha|r|\sin(\theta-\phi) + \alpha^2|r|^2\sin(\theta-2\phi) }{\cos\theta+2\alpha|r|\cos(\theta-\phi) +\alpha^2|r|^2\cos(\theta-2\phi)}.
	\end{equation}
	Using the trigonometric identities $\sin(A-B) = \sin A\cos B - \cos A \sin B$ and $\cos(A-B) = \cos A \cos B + \sin A \sin B$, \eqref{tan_gamma_1} can be expressed as
	\begin{equation}\label{tan_gamma}
		\tan\gamma = \frac{f_1\sin\theta-f_2\cos\theta}{f_1\cos\theta+f_2\sin\theta},
	\end{equation}
	where $f_1, f_2$ are as defined in \eqref{f1f2}. Dividing numerator and denominator of \eqref{tan_gamma} by $\sqrt{f_1^2+f_2^2}$ and representing a right angle triangle with base $f_1$, altitude $f_2$ and hypotenuse $\sqrt{f_1^2+f_2^2}$ with $\chi = \arctan(f_2/f_1)$, it can be written that $\cos\chi = f_1/\sqrt{f_1^2+f_2^2}$ and $\sin\chi = f_2/\sqrt{f_1^2+f_2^2}$, and hence, 
	\begin{equation*}
		\tan\gamma = \frac{\sin(\theta-\chi)}{\cos(\theta-\chi)} = \tan(\theta-\chi),
	\end{equation*}
	which gives $\gamma = n\pi + (\theta - \chi), \ n \in \mathbb{Z}$. Hence, proved. 
\end{proof}

\begin{lem}[Time-derivative of $\chi$]\label{lem_chi_dot}
	The time-derivative of $\chi$ in \eqref{chi} is given by
	\begin{equation}\label{chi_dot_final}
		\dot{\chi} = 2\alpha v\left[\frac{\sin\theta+\alpha|r|\sin(\theta-\phi)}{|1+\alpha r|^2}\right].
	\end{equation}
\end{lem}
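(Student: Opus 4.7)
The plan is to avoid the tedious quotient-rule calculation suggested by writing $\chi = \arctan(f_2/f_1)$ directly, and instead exploit a complex-analytic repackaging of $f_1$ and $f_2$. My first step would be to observe that
\[
f_1 + i f_2 = 1 + 2\alpha|r|{\rm e}^{i\phi} + \alpha^2|r|^2 {\rm e}^{i 2\phi} = \bigl(1 + \alpha|r|{\rm e}^{i\phi}\bigr)^2 = (1 + \alpha r)^2,
\]
using $r = |r|{\rm e}^{i\phi}$. Consequently, $\chi = \arctan(f_2/f_1) = \arg(f_1 + if_2) = 2\arg(1 + \alpha r)$, up to an integer multiple of $\pi$ that is irrelevant after differentiating.

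Next, I would differentiate this cleaner expression using the standard identity that for a nowhere-vanishing complex function $u(t)$,
\[
\frac{d}{dt}\arg(u) = \frac{\Im(u^{\ast} \dot{u})}{|u|^2}.
\]
Setting $u = 1 + \alpha r$, so that $\dot{u} = \alpha \dot{r} = \alpha v {\rm e}^{i\theta}$ by \eqref{dyn}, this gives
\[
\dot{\chi} = 2\,\frac{\Im\bigl((1 + \alpha r^{\ast})\,\alpha v {\rm e}^{i\theta}\bigr)}{|1 + \alpha r|^2} = \frac{2\alpha v\,\Im\bigl({\rm e}^{i\theta} + \alpha |r|{\rm e}^{i(\theta - \phi)}\bigr)}{|1 + \alpha r|^2},
\]
after substituting $r^{\ast} = |r|{\rm e}^{-i\phi}$. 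Taking the imaginary parts of the two exponentials yields exactly $\sin\theta + \alpha|r|\sin(\theta - \phi)$ in the numerator, which is the desired expression \eqref{chi_dot_final}.

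The only real step requiring care is justifying the first move: since $\chi$ is defined through $\arctan$ rather than $\arg$, we should note that $\arctan(f_2/f_1)$ and $\arg(f_1 + i f_2)$ differ at most by a locally constant integer multiple of $\pi$, so their time derivatives coincide wherever both are defined, which is ensured by the condition $1 + \alpha\lambda > 0$ from Theorem~\ref{thm_mapping of regions between circles} together with $r$ lying in the admissible region (so $1 + \alpha r \neq 0$). I expect no genuine obstacle; the main insight is recognizing the square $(1+\alpha r)^2$ inside $f_1 + if_2$, after which the derivative of an argument formula trivializes the computation compared with handling $\dot{f}_1$ and $\dot{f}_2$ separately.
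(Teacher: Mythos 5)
Your proof is correct, and it takes a genuinely different route from the paper's. The key identity $f_1 + i f_2 = (1+\alpha r)^2$ is right (it holds because $\alpha \in \mathbb{R}$ and $r = |r|{\rm e}^{i\phi}$, so the real and imaginary parts of $1 + 2\alpha r + \alpha^2 r^2$ are exactly \eqref{f1f2}), the identity $\frac{d}{dt}\arg(u) = \Im(u^\ast \dot{u})/|u|^2$ is standard, and with $\dot{u} = \alpha \dot{r} = \alpha v {\rm e}^{i\theta}$ from \eqref{dyn} the computation lands exactly on \eqref{chi_dot_final}; your handling of the $\arctan$-versus-$\arg$ discrepancy (a locally constant multiple of $\pi$, killed by differentiation, valid since $1+\alpha r \neq 0$ along admissible trajectories because the pole $z = -\beta$ lies outside the region of motion) is the right caveat and matches what the paper's proof assumes implicitly. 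By contrast, the paper proceeds by brute force: it applies the quotient rule to $\chi = \arctan(f_2/f_1)$, invokes the auxiliary Lemma~\ref{lem_phi_dot} relating $\dot{\phi}$ to $\theta$, computes $\dot{f}_1$ and $\dot{f}_2$ separately, proves $f_1^2 + f_2^2 = |1+\alpha r|^4$ by a long trigonometric expansion \eqref{f1f2_square_new}, and simplifies the numerator through the intermediate quantities $\vartheta_1, \vartheta_2$. Your approach obtains that denominator identity for free as $|f_1 + i f_2| = |1+\alpha r|^2$, bypasses Lemma~\ref{lem_phi_dot} entirely by using the kinematics $\dot{r} = v{\rm e}^{i\theta}$ directly, and moreover exposes the structural reason $f_1, f_2$ appear at all: in the paper's own proof of Lemma~\ref{lem_heading_angle_relation_transformed_plane}, these functions arise precisely from the factor $(1+\alpha r^\ast)^2 = f_1 - i f_2$, so $\chi = 2\arg(1+\alpha r) \pmod{\pi}$ is the natural reading of \eqref{chi}. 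What the paper's computation buys is elementary self-containedness (nothing beyond real trigonometric identities); what yours buys is brevity, a conceptual explanation of the factor of $2\alpha v$, and far less room for algebraic error.
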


\begin{proof}
Please refer to Appendix~\ref{appendix_B}. 
\end{proof}

Exploiting Lemma~\ref{lem_heading_angle_relation_transformed_plane} and Lemma~\ref{lem_chi_dot}, we relate $\omega$ and $\Omega$ in the following theorem and represent $\omega$ solely in terms of the actual plane parameters. 

\begin{thm}[Control law $\omega(r, \theta)$]\label{thm_control_relations_actual_plane_parameters}
	Consider the actual and transformed robot models \eqref{dyn} and \eqref{tdyn}, respectively. The actual control law $\omega$ is related to the transformed control law $\Omega$ as follows:
	\begin{equation}\label{control_relations_actual_plane_parameters}
		\omega = \Omega + 2\alpha v\left[\frac{\sin\theta+\alpha|r|\sin(\theta-\phi)}{|1+\alpha r|^2}\right].
	\end{equation}
Further, \eqref{control_relations_actual_plane_parameters} can be expressed in terms of the actual plane parameters as
\begin{align}\label{control_law_in_actual_plane_parameters_final}
	\nonumber  \omega(r, \theta) &= \frac{v}{\sigma}\left|\frac{\alpha (1-\alpha^2)}{(1+\alpha r)^2}\right|  +  \frac{\kappa}{\sigma} \left[\frac{\mathcal{P}(r, \theta)}{\delta^2_T - |\mathcal{E}(r, \theta)|^2}\right]\\
	& \qquad + 2\alpha v\left[\frac{\sin\theta+\alpha|r|\sin(\theta-\phi)}{|1+\alpha r|^2}\right],
\end{align}
where, 
\begin{align}
	\label{inner_produt_term}\mathcal{P}(r, \theta) &= \left \langle \frac{\alpha(r + \alpha)}{1+\alpha r} \; , \; \sgn(\Delta)\frac{1 + \alpha r^\ast}{1 + \alpha r} {\rm e}^{i\theta} \right \rangle\\
	\label{error_actual_plane_parameters}\mathcal{E}(r, \theta)  &= \frac{\alpha(r + \alpha)}{1 + \alpha r} + i\sigma \sgn(\Delta)\left[\frac{1 + \alpha r^\ast}{1 + \alpha r}\right]{\rm e}^{i\theta}, 	
\end{align}
and $\Delta = \beta - \alpha$ is the difference between the roots $\alpha$ and $\beta$ of \eqref{mobius_roots}.	
\end{thm}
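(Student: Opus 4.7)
The starting point for the first identity in \eqref{control_relations_actual_plane_parameters} is the heading-angle relation of Lemma~\ref{lem_heading_angle_relation_transformed_plane}, namely $\gamma = \theta - \chi \pmod{\pi}$. Because the angles enter the dynamics only through their time derivatives, the mod-$\pi$ ambiguity disappears upon differentiation, giving $\dot{\gamma} = \dot{\theta} - \dot{\chi}$. Invoking the kinematic relations $\dot{\theta} = \omega$ in \eqref{dyn} and $\dot{\gamma} = \Omega$ in \eqref{tdyn}, we get $\omega = \Omega + \dot{\chi}$, and substituting the closed form for $\dot{\chi}$ from Lemma~\ref{lem_chi_dot} yields \eqref{control_relations_actual_plane_parameters} at once.

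For the second identity \eqref{control_law_in_actual_plane_parameters_final}, the plan is to substitute the transformed control law \eqref{control_law_transformed} into \eqref{control_relations_actual_plane_parameters} and then rewrite the three ingredients $|\dot{\rho}|$, $\mathrm{e}^{i\gamma}$ and $\mathcal{E}$ purely in actual plane variables. For $|\dot{\rho}|$, I would take modulus on both sides of \eqref{rho_dot} to get $|\dot{\rho}| = v\,\bigl|\alpha(1-\alpha^2)/(1+\alpha r)^2\bigr|$, which produces the first term in \eqref{control_law_in_actual_plane_parameters_final}. For $\mathrm{e}^{i\gamma}$, I would divide $\dot{\rho}$ by $|\dot{\rho}|$, using \eqref{rho_dot} in the numerator and the just-derived expression in the denominator, and then simplify $|1+\alpha r|^2/(1+\alpha r)^2 = (1+\alpha r^\ast)/(1+\alpha r)$ via $|1+\alpha r|^2 = (1+\alpha r)(1+\alpha r^\ast)$. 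Substitution of $\rho$ from \eqref{position_trnsformed_plane} and of this expression for $\mathrm{e}^{i\gamma}$ into the definitions of $\mathcal{E}$ in \eqref{error_transformed_plane} and of $\langle \rho,\mathrm{e}^{i\gamma}\rangle$ then delivers \eqref{inner_produt_term} and \eqref{error_actual_plane_parameters}.

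The one nonroutine step is the appearance of $\sgn(\Delta)$ in \eqref{inner_produt_term}--\eqref{error_actual_plane_parameters}, which comes from keeping track of the sign of the real scalar $\alpha(1-\alpha^2)$ appearing in $\dot{\rho}$. Using $\beta = 1/\alpha$ from Theorem~\ref{thm_main_theorem}, we have $\Delta = \beta - \alpha = (1-\alpha^2)/\alpha$, hence $\alpha(1-\alpha^2) = \alpha^2 \Delta$. Since $\alpha^2 > 0$, this gives $\sgn\bigl(\alpha(1-\alpha^2)\bigr) = \sgn(\Delta)$, so the scalar in front of $(1+\alpha r^\ast)/(1+\alpha r)\,\mathrm{e}^{i\theta}$ is exactly $\sgn(\Delta)$ after cancelling the modulus with $|\dot{\rho}|$. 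Once this identification is made, assembling the three pieces into $\Omega = \sigma^{-1}\bigl[|\dot{\rho}| + \kappa\,\langle \rho,\mathrm{e}^{i\gamma}\rangle/(\delta_T^2 - |\mathcal{E}|^2)\bigr]$ and adding $\dot{\chi}$ from Lemma~\ref{lem_chi_dot} produces \eqref{control_law_in_actual_plane_parameters_final}. I expect the main obstacle to be bookkeeping the sign factor through the division $\dot{\rho}/|\dot{\rho}|$, because the case distinction $\alpha = \alpha_s$ vs.\ $\alpha = \alpha_\ell$ affects the sign of $1-\alpha^2$, and it is precisely the quantity $\sgn(\Delta)$ that compactly absorbs both cases into a single closed-form expression.
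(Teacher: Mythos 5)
Your proposal is correct and takes essentially the same route as the paper's proof: you obtain \eqref{control_relations_actual_plane_parameters} by differentiating the relation of Lemma~\ref{lem_heading_angle_relation_transformed_plane} and substituting $\dot{\chi}$ from Lemma~\ref{lem_chi_dot}, and you obtain \eqref{control_law_in_actual_plane_parameters_final} by taking the modulus of \eqref{rho_dot} and forming ${\rm e}^{i\gamma} = \dot{\rho}/|\dot{\rho}|$, exactly as the paper does. Your sign bookkeeping via $\alpha(1-\alpha^2) = \alpha^2\Delta$, so that $\sgn\bigl(\alpha(1-\alpha^2)\bigr) = \sgn(\Delta)$, is precisely the paper's step of replacing $1$ by $\alpha\beta$ in deriving \eqref{e_gamma_final}.
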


Note that the angle $\phi$ is inherited in $r$ (see \eqref{position_actual_plane}), therefore, the argument of $\omega$ in \eqref{control_law_in_actual_plane_parameters_final} only uses $(r, \theta)$. In the same spirit, $\omega(\rho, \gamma)$ is used in Theorem~\ref{thm_control_relations_transformed_plane_parameters} in the next subsection.

\begin{proof}
	From Lemma~\ref{lem_heading_angle_relation_transformed_plane}, the time-derivative of \eqref{angle_relation_transformed_plane} is
\begin{equation}\label{gamma_dot}
	\dot{\gamma} = \dot{\theta} -\dot{\chi}.
\end{equation}
Now, substituting for $\dot{\theta}$, $\dot{\gamma}$ and $\dot{\chi}$ from \eqref{dyn}, \eqref{tdyn}, and \eqref{chi_dot_final}, respectively, we get the required result \eqref{control_relations_actual_plane_parameters}.

The second statement is proven as follows. By taking modulus of \eqref{rho_dot}, we can write
\begin{equation}\label{transformed_speed_in_actual_plane_parameters}
	|\dot{\rho}| = \left|\frac{\alpha (1-\alpha^2)}{(1+\alpha r)^2}\right|v.
\end{equation} 
Further, using \eqref{rho_dot} and \eqref{transformed_speed_in_actual_plane_parameters}, the unit vector ${\rm e}^{i\gamma}$ in \eqref{tdyn} can be expressed in terms of actual plane parameters as:
\begin{equation}\label{e_gamma_final}
	{\rm e}^{i\gamma} {=} \frac{\dot{\rho}}{|\dot{\rho}|} {=} \frac{\alpha(1 - \alpha^2)}{|\alpha(1 - \alpha^2)|}\frac{|1 + \alpha r|^2}{(1 + \alpha r)^2} {\rm e}^{i\theta} {=} \sgn(\Delta)\left[\frac{1 + \alpha r^\ast}{1 + \alpha r}\right]  {\rm e}^{i\theta},
\end{equation} 
where we used the fact that ${\alpha(1 - \alpha^2)}/{|\alpha(1 - \alpha^2)|} = (\beta - \alpha)/|\beta - \alpha| = \Delta/|\Delta|$ after replacing 1 by the product $\alpha\beta$ and simplifying. Next, using \eqref{position_trnsformed_plane} and \eqref{e_gamma_final}, the inner product $\langle \rho, {\rm e}^{i\gamma} \rangle$ can be readily expressed in terms of the actual plane parameters as $\mathcal{P}(r, \theta)$ in \eqref{inner_produt_term}, and the error $\mathcal{E}$ in \eqref{error_transformed_plane} can be written in terms of actual plane parameters as \eqref{error_actual_plane_parameters}. Now, the result immediately follows from \eqref{control_law_transformed} and \eqref{control_relations_actual_plane_parameters}.   
\end{proof}

Note that our efforts so far have been on expressing transformed plane parameters in terms of the actual plane parameters. However, it is equally important to represent the control law $\omega$ solely in terms of the transformed plane parameters, specifically, for the ease of implementation. In this direction, the role of inverse M\"{o}bius map $f^{-1}$ (defined by \eqref{imt}) becomes important since it allows us to express the actual plane parameters in terms of the transformed plane parameters. This is the topic of the next subsection.   

\subsection{Actual Plane in Terms of Transformed Plane Parameters}
The inverse M\"{o}bius map gives the robot's position in the actual plane in terms of its position in the transformed plane and is obtained from \eqref{position_trnsformed_plane} as: 
\begin{equation} \label{inverse_position_in_actual_plane}
	\rho \mapsto r = f^{-1}(\rho) = \frac{\alpha^2-\rho}{\alpha(\rho-1)}.
\end{equation}
Note that \eqref{inverse_position_in_actual_plane} has a singularity at $\rho = 1$. However, we will demonstrate in the following lemma that it remains well-defined within the context of the problem under consideration.  
 
\begin{lem}[Existence of inverse M\"{o}bius map]
The inverse M\"{o}bius transformation \eqref{inverse_position_in_actual_plane} is a well-defined mapping, under the conditions given in Theorem~\ref{thm_main_theorem}.  
\end{lem}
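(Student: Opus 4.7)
The plan is to identify the unique potential singularity of $f^{-1}$ in \eqref{inverse_position_in_actual_plane} and show that it lies outside the admissible range of $\rho$ under the hypotheses of Theorem~\ref{thm_main_theorem}. The denominator $\alpha(\rho - 1)$ in \eqref{inverse_position_in_actual_plane} vanishes only at $\rho = 1$, since $\alpha \neq 0$ by Corollary~\ref{cor_quadractic_roots}(a). At every other point, $f^{-1}$ is analytic and single-valued, so the whole task reduces to ruling out $\rho = 1$.

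Next, I would show that $\rho = 1$ is never attained by the forward map $f$ at any finite $r$. Substituting $\rho = 1$ into \eqref{position_trnsformed_plane} gives $\alpha(r+\alpha) = 1 + \alpha r$, which collapses (after cancelling $\alpha r$ on both sides) to $\alpha^2 = 1$, i.e., $\alpha = \pm 1$. However, by Corollary~\ref{cor_quadractic_roots}(b), the hypothesis of Theorem~\ref{thm_main_theorem} that $z = -\beta$ lies on neither $\mathcal{C}$ nor $\mathcal{C}'$ is equivalent to $\alpha \neq \pm 1$, yielding a contradiction. Consequently, for every finite $r \in \mathbb{C}$, the image $\rho = f(r)$ lies in $\mathbb{C} \setminus \{1\}$, and the denominator of \eqref{inverse_position_in_actual_plane} is strictly nonzero.

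Finally, I would reconcile this with the $\mathbb{C}_\infty$ viewpoint developed in Section~\ref{section_2}: since $a = c = 1$ for the specific Möbius transformation \eqref{mobius_transformation}, we have $f(\infty) = a/c = 1$, so the singular value $\rho = 1$ is precisely the image of the point at infinity and corresponds via \eqref{imt} to $f^{-1}(1) = \infty$. Because the robot's position in Problem~\ref{problem} is confined to the bounded set $\mathcal{Z}$ and is therefore finite for all $t \geq 0$, the associated $\rho(t)$ remains uniformly bounded away from $1$, and hence $f^{-1}$ is a well-defined, bijective, holomorphic mapping on the region of the transformed plane relevant to our problem.

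The proof is essentially a one-line computation once the correct pivot is identified, so there is no real obstacle; the only mild subtlety is recognising that the condition ``$z = -\beta \notin \mathcal{C} \cup \mathcal{C}'$" from Theorem~\ref{thm_main_theorem}, rephrased via Corollary~\ref{cor_quadractic_roots}(b) as $\alpha \neq \pm 1$, is exactly what disposes of the only candidate pole of $f^{-1}$.
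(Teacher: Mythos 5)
Your proposal is correct and follows essentially the same route as the paper: the paper's proof is exactly the contradiction argument in your second paragraph, substituting $\rho = 1$ into \eqref{position_trnsformed_plane} to force $\alpha^2 = 1$, i.e., $\alpha = \pm 1$, which Corollary~\ref{cor_quadractic_roots} rules out under the hypotheses of Theorem~\ref{thm_main_theorem}. Your additional observations---that $\alpha \neq 0$ makes $\rho = 1$ the only candidate pole, and that $\rho = 1 = f(\infty)$ so the singularity is never approached for $r(t)$ confined to the bounded set $\mathcal{Z}$---are accurate supplementary remarks rather than a different method.
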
 

\begin{proof}
We prove it by contradiction. If $\rho = 1$, it follows from \eqref{position_trnsformed_plane} that $1 + \alpha r = \alpha(r + \alpha) \implies \alpha^2 = 1 \implies \alpha = \pm 1$, which contradicts our hypothesis in Theorem~\ref{thm_main_theorem} (see Corollary~\ref{cor_quadractic_roots}). 
\end{proof}

Utilizing \eqref{inverse_position_in_actual_plane}, analogous relations as in Subsection~\ref{sec_mobius_map_relations} can be established. In this direction, $|r|$ and $\phi$ can be  expressed in terms of transformed plane parameters $|\rho|$ and $\psi$, similar to Lemma~\ref{lem_rho_modulus_and_psi}, as follows: 
	\begin{align}
	\label{r_modulus}|r| &= \sqrt{\frac{\alpha^2 + \beta^2|\rho|^2 - 2|\rho|\cos\psi}{1 + |\rho|^2 - 2|\rho|\cos\psi}},\\
	\label{tan_phi}\tan\phi &= \left[\frac{(1-\alpha^2)|\rho|\sin\psi}{(1+\alpha^2)|\rho|\cos\psi - (\alpha^2 + |\rho|^2)}\right].		
\end{align}
Furthermore, analogous to Lemma~\ref{lem_heading_angle_relation_transformed_plane}, we have the following lemma considering inverse map \eqref{inverse_position_in_actual_plane}.

\begin{lem}[Relationship between $\theta$ and $\gamma$]\label{lem_heading_angle_relation_actual_plane}
Consider the actual and transformed robot models \eqref{dyn} and \eqref{tdyn} related via the inverse M\"{o}bius transformation \eqref{inverse_position_in_actual_plane}. The heading angles $\theta$ and $\gamma$ are related as: 
	\begin{equation}\label{angle_relation_actual_plane}
		\theta = \gamma - \xi \pmod{\pi},	
	\end{equation}
	where,
	\begin{equation}\label{xi}
		\xi  = \arctan\left(\frac{g_2}{g_1}\right),
	\end{equation}
	with
	\begin{subequations}\label{g1g2}
		\begin{align}
			\label{g1} g_1 &= 1+|\rho|^2\cos2\psi-2|\rho|\cos\psi,\\
			\label{g2} g_2 &= |\rho|^2\sin2\psi-2|\rho|\sin\psi.
		\end{align}
	\end{subequations}
\end{lem}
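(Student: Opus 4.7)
The plan is to mirror the derivation of Lemma~\ref{lem_heading_angle_relation_transformed_plane}, but starting from the inverse M\"{o}bius map \eqref{inverse_position_in_actual_plane} rather than the forward map. First, I would differentiate $r = (\alpha^2 - \rho)/(\alpha(\rho-1))$ with respect to time using the quotient rule. The $\rho$-dependent pieces in the numerator cancel neatly (a key computational observation), yielding
\begin{equation*}
\dot{r} \;=\; \frac{(1-\alpha^2)\,\dot{\rho}}{\alpha(\rho-1)^2},
\end{equation*}
which is the exact analogue of \eqref{rho_dot}. Substituting $\dot{r} = v{\rm e}^{i\theta}$ and $\dot{\rho} = |\dot{\rho}|{\rm e}^{i\gamma}$ then isolates ${\rm e}^{i\theta}$ in terms of ${\rm e}^{i\gamma}$ multiplied by the phase factor $1/(\rho-1)^2$.

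Next, the key trick is to multiply the numerator and denominator by the conjugate square $(\rho^\ast-1)^2$, converting the denominator into the positive real quantity $|\rho-1|^4$. Writing $\rho = |\rho|{\rm e}^{i\psi}$ and expanding $(\rho^\ast - 1)^2 = 1 - 2|\rho|{\rm e}^{-i\psi} + |\rho|^2 {\rm e}^{-2i\psi}$, the result takes the form
\begin{equation*}
v{\rm e}^{i\theta} \;=\; \frac{(1-\alpha^2)|\dot{\rho}|}{\alpha|\rho-1|^4}\Bigl[{\rm e}^{i\gamma} - 2|\rho|{\rm e}^{i(\gamma-\psi)} + |\rho|^2 {\rm e}^{i(\gamma-2\psi)}\Bigr],
\end{equation*}
the mirror image of \eqref{e_i_gamma_initial}.

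The final step is to take the ratio of imaginary to real parts to obtain $\tan\theta$, and then invoke the difference-of-angle identities on $\sin(\gamma-k\psi)$ and $\cos(\gamma-k\psi)$ for $k=1,2$. After grouping the $\sin\gamma$ and $\cos\gamma$ terms, the coefficients reorganize precisely into $g_1$ and $g_2$ as defined in \eqref{g1g2}, giving
\begin{equation*}
\tan\theta \;=\; \frac{g_1\sin\gamma - g_2\cos\gamma}{g_1\cos\gamma + g_2\sin\gamma}.
\end{equation*}
The standard right-triangle substitution $\cos\xi = g_1/\sqrt{g_1^2+g_2^2}$, $\sin\xi = g_2/\sqrt{g_1^2+g_2^2}$ with $\xi = \arctan(g_2/g_1)$ then collapses this to $\tan\theta = \tan(\gamma-\xi)$, which yields \eqref{angle_relation_actual_plane} modulo $\pi$.

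The derivation is essentially routine once the template of Lemma~\ref{lem_heading_angle_relation_transformed_plane} is followed. The main obstacle is purely bookkeeping: keeping track of signs when expanding $(\rho^\ast-1)^2$. The $-2|\rho|{\rm e}^{-i\psi}$ term (with a minus sign, in contrast to the $+2\alpha|r|$ term in the forward case) is exactly what produces $g_2 = |\rho|^2\sin 2\psi - 2|\rho|\sin\psi$ with the stated sign pattern. Ensuring this sign propagates consistently through the trigonometric expansion so that the coefficients match $g_1, g_2$ verbatim is the only delicate step; otherwise the argument is a mechanical mirror of the earlier lemma.
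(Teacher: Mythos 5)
Your proposal is correct and follows exactly the route the paper takes: the paper's proof consists of computing $\dot{r} = \bigl(\frac{1-\alpha^2}{\alpha}\bigr)\frac{\dot{\rho}}{(\rho-1)^2}$ (its equation \eqref{rdot}) and then declaring the rest ``along the similar lines as in Lemma~\ref{lem_heading_angle_relation_transformed_plane},'' which is precisely the conjugate-square expansion and angle-difference regrouping you carry out explicitly. Your expansion correctly yields $\tan\theta = (g_1\sin\gamma - g_2\cos\gamma)/(g_1\cos\gamma + g_2\sin\gamma)$ with $g_1, g_2$ matching \eqref{g1g2} verbatim, and the possibly negative real prefactor $(1-\alpha^2)/\alpha$ is harmless since the identity is stated modulo $\pi$.
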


\begin{proof}
Similar to \eqref{rho_dot}, time-derivative of \eqref{inverse_position_in_actual_plane} is obtained as:
\begin{equation}\label{rdot}
\dot{r}	= \left(\frac{1-\alpha^2}{\alpha}\right)\left[\frac{\dot{\rho}}{(\rho-1)^2}\right] = \left(\frac{1-\alpha^2}{\alpha}\right)\left[\frac{|\rho|{\rm e}^{i \gamma}}{(\rho-1)^2}\right].
\end{equation}	
Now, proceeding along the similar lines as in Lemma~\ref{lem_heading_angle_relation_transformed_plane}, the proof readily follows, and is omitted. 
\end{proof}

\begin{lem}[Time-derivative of $\xi$]\label{lem_xi_dot}
The time-derivative of $\xi$ in \eqref{xi} is given by
	\begin{equation}\label{xi_dot_final}
		\dot{\xi} = \left(\frac{2\alpha v}{1 - \alpha^2}\right)[|\rho|\sin(\gamma-\psi)-\sin\gamma].
	\end{equation}
\end{lem}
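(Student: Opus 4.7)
The plan is to mirror the strategy used in the proof of Lemma~\ref{lem_chi_dot} (Appendix~\ref{appendix_B}). The pivotal observation is that the real and imaginary parts $g_1, g_2$ in \eqref{g1g2} assemble into a clean complex expression: expanding $(\rho-1)^2 = |\rho|^2 e^{2i\psi} - 2|\rho|e^{i\psi} + 1$ and matching real and imaginary parts with \eqref{g1}--\eqref{g2} shows that $G := g_1 + i g_2 = (\rho-1)^2$. Consequently $g_1^2 + g_2^2 = |G|^2 = |\rho-1|^4$ and $g_1 \dot g_2 - g_2 \dot g_1 = \Im(\bar G \dot G)$, so differentiating $\xi = \arctan(g_2/g_1)$ yields $\dot\xi = \Im(\bar G \dot G)/|\rho-1|^4$.

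Next I would compute $\dot G = 2(\rho-1)\dot\rho$ and, using $(\bar\rho-1)(\rho-1) = |\rho-1|^2$, simplify to $\bar G \dot G = 2|\rho-1|^2 (\bar\rho-1)\dot\rho$. Substituting $\dot\rho = \alpha v (\rho-1)^2 e^{i\theta}/(1-\alpha^2)$ from \eqref{rdot}, the factors telescope to
\begin{equation*}
\bar G \dot G = \frac{2\alpha v}{1-\alpha^2}\,|\rho-1|^4\,(\rho-1)\,e^{i\theta}.
\end{equation*}
Writing $(\rho-1)e^{i\theta} = |\rho|e^{i(\psi+\theta)} - e^{i\theta}$ and taking imaginary parts then gives the intermediate form $\dot\xi = \tfrac{2\alpha v}{1-\alpha^2}\bigl[|\rho|\sin(\psi+\theta) - \sin\theta\bigr]$ in the actual-plane heading $\theta$.

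The remaining task is to convert this to an expression in the transformed-plane heading $\gamma$. Setting $\Phi := \arg(\rho-1)$, Lemma~\ref{lem_heading_angle_relation_actual_plane} gives $\gamma - \theta = \xi = 2\Phi \pmod{\pi}$, and sum-to-product identities factor the discrepancy as
\begin{equation*}
\bigl[|\rho|\sin(\psi+\theta) - \sin\theta\bigr] - \bigl[|\rho|\sin(\gamma-\psi) - \sin\gamma\bigr] = 2\cos(\theta+\Phi)\bigl[|\rho|\sin(\Phi-\psi) - \sin\Phi\bigr].
\end{equation*}
The residual bracket vanishes by the geometric identity $|\rho|\sin(\Phi-\psi) = \sin\Phi$, which follows from the polar decomposition $|\rho|e^{i\psi} = 1 + |\rho-1|e^{i\Phi}$ (so $|\rho|\sin\psi = |\rho-1|\sin\Phi$ and $|\rho|\cos\psi - 1 = |\rho-1|\cos\Phi$) together with the angle-subtraction identity for $\sin(\Phi-\psi)$.

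The main obstacle I anticipate is this final conversion from $\theta$ to $\gamma$: the relation $\gamma = \theta + \xi$ is only known modulo $\pi$, so the correct branch must be chosen --- consistently with the sense-of-rotation result of Theorem~\ref{thm_sense_of_rotation} --- to make $\sin\gamma$ and $\sin(\gamma-\psi)$ unambiguous, and the trigonometric reduction ultimately hinges on the geometric identity $|\rho|\sin(\Phi-\psi) = \sin\Phi$ linking the arguments of $\rho$ and $\rho-1$. Once that identity is established, substitution back into the intermediate expression produces \eqref{xi_dot_final}.
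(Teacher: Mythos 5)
Your core computation is correct, and it takes a genuinely different route from the paper's intended proof. The paper omits the proof of this lemma, pointing to the template of Appendix~\ref{appendix_B}: a brute-force trigonometric expansion of $g_1\dot g_2 - g_2\dot g_1$ carried out entirely in the transformed plane, using $\dot\rho = |\dot\rho|{\rm e}^{i\gamma}$ together with the transformed-plane analogues of Lemma~7 (namely $\tfrac{d}{dt}|\rho| = |\dot\rho|\cos(\gamma-\psi)$ and $\dot\psi = |\dot\rho|\sin(\gamma-\psi)/|\rho|$), followed by the substitution \eqref{transformed_speed_in_transformed_plane_parameters}. Your observation that $g_1 + ig_2 = (\rho-1)^2$ collapses that page of identities into three lines, and your detour through the actual plane via \eqref{rdot} is exact since \eqref{rdot} carries no absolute values. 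Two small points: the sign of your displayed factorization should be $-2\cos(\theta+\Phi)\bigl[|\rho|\sin(\Phi-\psi)-\sin\Phi\bigr]$, which is immaterial since the bracket vanishes; and your identity $|\rho|\sin(\Phi-\psi)=\sin\Phi$ is simply $\Im\bigl[(\rho-1){\rm e}^{-i\Phi}\bigr]=0$.

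The branch issue you flag at the end, however, is not a removable technicality --- it is where the conversion (and in fact the lemma as stated) fails for one of the two roots, so you cannot leave it as "choose the correct branch." The branch is not free: from \eqref{e_gamma_final}, ${\rm e}^{i\gamma} = \sgn(\Delta)\,{\rm e}^{i(\theta-2\Upsilon)}$, and since $\rho-1 = (\alpha^2-1)/(1+\alpha r)$ one has $2\Phi \equiv -2\Upsilon \pmod{2\pi}$, hence ${\rm e}^{i\gamma} = \sgn(\Delta)\,{\rm e}^{i(\theta+2\Phi)}$. When $\sgn(\Delta)=+1$ your conversion goes through verbatim and proves \eqref{xi_dot_final}. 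When $\sgn(\Delta)=-1$ (e.g., $\alpha=\alpha_{\ell}=2$ in Example~1), both $\sin\gamma$ and $\sin(\gamma-\psi)$ flip sign relative to your exact $\theta$-form, and your intermediate expression yields the negative of the stated right-hand side. This is a defect of the statement, not of your argument: mirroring Appendix~\ref{appendix_B} directly in the transformed plane gives
\begin{equation*}
\dot\xi \;=\; \frac{2|\dot\rho|}{|\rho-1|^2}\bigl[|\rho|\sin(\gamma-\psi)-\sin\gamma\bigr]
\;=\; 2v\left|\frac{\alpha}{1-\alpha^2}\right|\bigl[|\rho|\sin(\gamma-\psi)-\sin\gamma\bigr],
\end{equation*}
using \eqref{transformed_speed_in_transformed_plane_parameters}, and $2v\,|\alpha/(1-\alpha^2)| = \sgn(\Delta)\,\tfrac{2\alpha v}{1-\alpha^2}$; equation \eqref{xi_dot_final} silently drops the absolute value. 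A consistency check confirms this: $(\rho-1)^2 = (\alpha^2-1)^2/(1+\alpha r)^2$ forces $\xi \equiv -\chi \pmod{\pi}$, so $\dot\xi = -\dot\chi$ identically, whereas for $\sgn(\Delta)=-1$ the stated right-hand side of \eqref{xi_dot_final} evaluates to $+\dot\chi$ (this also propagates to \eqref{control_relations_transformed_plane_parameters}, which would then contradict Theorem~5). So finish your proof by replacing the branch "choice" with the $\sgn(\Delta)$ bookkeeping above: you then obtain the lemma exactly as stated for $\sgn(\Delta)=+1$, and with an extra $\sgn(\Delta)$ factor otherwise.
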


\begin{proof}
The proof follows along the similar lines as in Lemma~\ref{lem_chi_dot}, and hence, omitted for brevity. 
\end{proof}

Exploiting Lemma~\ref{lem_heading_angle_relation_actual_plane} and Lemma~\ref{lem_xi_dot}, we relate $\omega$ and $\Omega$ in the following theorem and represent $\omega$ solely in terms of the transformed plane parameters. 

\begin{thm}[Control law $\omega(\rho, \gamma)$]\label{thm_control_relations_transformed_plane_parameters}
	Consider the actual and transformed robot models \eqref{dyn} and \eqref{tdyn}, respectively. The actual control input $\omega$ is related to the transformed control $\Omega$ as follows:
	\begin{equation}\label{control_relations_transformed_plane_parameters}
		\omega = \Omega - \left(\frac{ 2 \alpha v}{1 - \alpha^2}\right)[|\rho|\sin(\gamma-\psi)-\sin\gamma].
	\end{equation}
	Further, \eqref{control_relations_transformed_plane_parameters} can be expressed in terms of transformed plane parameters as
	\begin{align}\label{control_law_in_transformed_plane_parameters_final}
		\nonumber  \omega(\rho, \gamma) &= \frac{v}{\sigma}\left|\frac{\alpha(\rho - 1)^2}{1 - \alpha^2}\right|  + \frac{\kappa}{\sigma} \left[\frac{ \left \langle \rho, {\rm e}^{i\gamma} \right \rangle }{\delta_T^2 - |\mathcal{E}(\rho, \gamma)|^2}\right] \\
		& \qquad - \left(\frac{ 2 \alpha v}{1 - \alpha^2}\right)\left[|\rho|\sin(\gamma-\psi)-\sin\gamma\right].
	\end{align}	
\end{thm}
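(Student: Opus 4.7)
The plan is to mirror the proof of Theorem~\ref{thm_control_relations_actual_plane_parameters}, but working from the inverse M\"obius map side. The first statement follows almost immediately from the two preparatory lemmas on the actual-plane side. Differentiating the angle relation \eqref{angle_relation_actual_plane}, $\theta = \gamma - \xi \pmod{\pi}$, with respect to time gives $\dot\theta = \dot\gamma - \dot\xi$ (the $\pmod \pi$ only shifts by a constant on each continuous piece, so it drops out under differentiation). Substituting $\dot\theta = \omega$ from \eqref{dyn} and $\dot\gamma = \Omega$ from \eqref{tdyn}, together with $\dot\xi$ from \eqref{xi_dot_final} of Lemma~\ref{lem_xi_dot}, yields exactly \eqref{control_relations_transformed_plane_parameters}.

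For the second statement, I would rewrite \eqref{control_law_transformed} in terms of the transformed state $(\rho, \gamma)$ and then add the correction term coming from $-\dot\xi$. The only nontrivial ingredient is to express the (non-constant) transformed speed $|\dot\rho|$ in terms of $v$ and $\rho$. Taking moduli of both sides of \eqref{rdot} gives
\begin{equation*}
	v = |\dot r| = \left|\frac{1-\alpha^2}{\alpha}\right|\,\frac{|\dot\rho|}{|\rho-1|^2},
\end{equation*}
hence $|\dot\rho| = v\left|\dfrac{\alpha(\rho-1)^2}{1-\alpha^2}\right|$. Plugging this into \eqref{control_law_transformed} produces the first two summands of \eqref{control_law_in_transformed_plane_parameters_final}, and the third summand is precisely $-\dot\xi$ from Lemma~\ref{lem_xi_dot}. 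Combining these with the identity $\omega = \Omega - \dot\xi$ gives \eqref{control_law_in_transformed_plane_parameters_final}.

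I do not expect any real obstacle here: both pieces are routine substitutions using results already established. The only thing that requires a line of care is the justification that differentiating a relation valid only modulo $\pi$ yields a valid equation between time derivatives; this is handled by noting that the integer multiple of $\pi$ separating $\theta$ and $\gamma - \xi$ is locally constant (since $\theta$, $\gamma$, and $\xi$ are continuous in time along trajectories) and therefore differentiates to zero. Everything else is algebraic manipulation of the expressions provided by Lemma~\ref{lem_heading_angle_relation_actual_plane}, Lemma~\ref{lem_xi_dot}, and Theorem~\ref{thm_control_law_transformed}.
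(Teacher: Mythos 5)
Your proposal is correct and follows essentially the same route as the paper's proof: differentiating the angle relation \eqref{angle_relation_actual_plane} and substituting $\dot\theta$, $\dot\gamma$, $\dot\xi$ from \eqref{dyn}, \eqref{tdyn}, \eqref{xi_dot_final} for the first claim, then taking moduli in \eqref{rdot} to obtain $|\dot\rho| = v\left|\alpha(\rho-1)^2/(1-\alpha^2)\right|$ and inserting it into \eqref{control_law_transformed} for the second. Your added remark that the integer multiple of $\pi$ is locally constant along trajectories (so it vanishes under differentiation) is a justification the paper leaves implicit, but it does not change the argument.
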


\begin{proof}
The proof of the first part follows by taking time-derivative of \eqref{angle_relation_actual_plane} and then substituting for $\dot{\theta}$, $\dot{\gamma}$ and $\dot{\xi}$ from \eqref{dyn}, \eqref{tdyn} and \eqref{xi_dot_final}, respectively.  
	
To prove the second statement, we use \eqref{rdot} to obtain that
	\begin{equation}\label{transformed_speed_in_transformed_plane_parameters}
		|\dot{\rho}|  = \left|\frac{\alpha(\rho - 1)^2}{1 - \alpha^2}\right|v.
	\end{equation} 	
	Now, substituting $\Omega$ from \eqref{control_law_transformed} into \eqref{control_relations_transformed_plane_parameters} and using \eqref{transformed_speed_in_transformed_plane_parameters}, we get \eqref{control_law_in_transformed_plane_parameters_final}, where $\mathcal{E}(\rho, \gamma)$ is given by \eqref{error_transformed_plane}.   
\end{proof}

\begin{remark}[Feasible initial conditions]
The validity of control \eqref{control_law_transformed} relies on the condition that $|\mathcal{E}(0)| < \delta_T$ must be satisfied in the transformed plane (see Theorem~\ref{thm_control_law_transformed}). It is necessary to map this requirement on the initial position and heading angle of the robot in the actual plane. In this direction, it follows from \eqref{error_actual_plane_parameters} that the condition
\begin{equation}\label{initial_condition_inequality}
	\left|\frac{\alpha(r(0) + \alpha)}{1 + \alpha r(0)} + i\sigma \sgn(\Delta)\left[\frac{1 + \alpha r^\ast(0)}{1 + \alpha r(0)}\right]{\rm e}^{i\theta(0)}\right| < \delta_T,
\end{equation}     
must be satisfied for appropriately chosen initial position $r(0)$ and heading angle $\theta(0)$. Using \eqref{position_actual_plane} and separating the real and imaginary parts in the numerator, \eqref{initial_condition_inequality} reduces to
\begin{equation}\label{initial_conditions}
	\left|\frac{\eta_a(0) + i\eta_b(0)}{1 + \alpha |r(0)|{\rm e}^{i\phi(0)}} \right| < \delta_T \implies \eta_a^2(0) +\eta_b^2(0) < \delta^2_T\eta^2(0),
\end{equation}
where $\eta(r) = \sqrt{1 + 2\alpha |r|\cos\phi + \alpha^2|r|^2}$, and 
\begin{subequations}
	\begin{align*}
		\eta_a(r, \theta) & =  \alpha^2 + \alpha|r|\cos\phi - \sigma\sgn(\Delta)(\sin\theta + \alpha|r|\sin(\theta-\phi)),\\
		\eta_b(r, \theta) & = 	\alpha|r|\sin\phi + \sigma\sgn(\Delta)(\cos\theta + \alpha|r|\cos(\theta-\phi)). 
	\end{align*}	
\end{subequations}
Please note that we have used shorthand notations $\eta_a(0)$, $\eta_b(0)$ and $\eta(0)$ for $\eta_a(r(0), \theta(0))$, $\eta_b(r(0), \theta(0))$ and $\eta(r(0))$, respectively. According to Problem~\ref{problem}, once the initial position $r(0)$ is fixed, $\eta(0)$ got fixed and the free states $\theta(0) \in \mathbb{S}^1$ can be easily chosen such that the inequality \eqref{initial_conditions} is satisfied.   
\end{remark}

\section{Stringent Bounds on Post-Design Signals}\label{section_6}
In this section, we obtain conservative bounds on the post-design signals by employing the results of Theorem~\ref{thm_control_law_transformed}. By substituting $\mathcal{E}(r, \theta)$ from \eqref{error_actual_plane_parameters}, the potential $\mathcal{S}$ in \eqref{Lyapunov_function} can be expressed as $\mathcal{S}(r, \theta)$, (i.e., in terms of actual plane parameters $r$ and $\theta$), and hence, $\mathcal{S}(r(0), \theta(0)) \coloneqq \mathcal{S}(0)$ can be obtained. Let the constant $\Theta$ be defined as $\Theta \coloneqq \sqrt{1-{\rm e}^{-2\mathcal{S}(0)}}$ for the given initial states. Clearly, $0 \leq \Theta \leq 1$ with $\Theta = 0$ precisely at the equilibrium point where $\mathcal{S} = 0$, and $\Theta = 1$ if $\mathcal{S}(0) \to \infty$. Relying on $\Theta$, we further define the convex combination of two radii $|\alpha|$ and $|(\lambda + \alpha)/\mu|$ (of the circles in Fig.~\ref{fig_problem_transformed_plane}) as:  
\begin{equation}\label{convex_combination_radii_positive}
\nu_{\Theta} = (1 - \Theta)|\alpha| + \Theta\left|\frac{\lambda + \alpha}{\mu}\right|. 	
\end{equation}  
By replacing $\Theta$ by $-\Theta$ in \eqref{convex_combination_radii_positive}, we define a new parameter
\begin{equation}\label{convex_combination_radii_negative}
\nu_{-\Theta} = (1 + \Theta)|\alpha| - \Theta\left|\frac{\lambda + \alpha}{\mu}\right|, 	
\end{equation}  
that will be helpful in the subsequent analysis. We have the following theorem: 

\begin{thm}[Stringent bounds on post-design signals]\label{thm6}
Under the conditions given in Theorem~\ref{thm_control_law_transformed}, the following inequalities hold in the transformed and actual planes:
\begin{itemize}[leftmargin=*]
	\item Transformed plane 
\begin{enumerate}[leftmargin=*]
\item[(a)] The absolute values of error $\mathcal{E}(t)$ and position $\rho(t)$	are bounded by
\begin{equation}\label{strict_bound_error}
|\mathcal{E}(t)| \leq \delta_T\Theta, \quad \forall t \geq 0, 	
\end{equation}
\begin{equation}\label{strict_bound_rho}
|\rho(t)| \in 
\begin{cases}
	\left[\nu_{-\Theta}, \nu_{\Theta}\right], & \text{if } \alpha = \alpha_s\\
	\left[\nu_{\Theta}, \nu_{-\Theta}\right], & \text{if } \alpha = \alpha_{\ell}
\end{cases}, \quad \forall t \geq 0.
\end{equation}
\item[(b)] The control law $\Omega$ in \eqref{control_law_transformed} is uniformly bounded by
\begin{equation}\label{stric_bound_Omega_1}
|\Omega| \leq \frac{v}{\sigma}\left|\frac{\alpha}{1 - \alpha^2}\right|(1 + \nu_{\Theta})^2  + \frac{\kappa}{\sigma} \left[\frac{\nu_{\Theta}}{\delta_T^2(1 - \Theta)^2}\right],
\end{equation}
if $\alpha = \alpha_s$, and 
\begin{equation}\label{stric_bound_Omega_2}
|\Omega| \leq \frac{v}{\sigma}\left|\frac{\alpha}{1 - \alpha^2}\right|(1 + \nu_{-\Theta})^2  + \frac{\kappa}{\sigma} \left[\frac{\nu_{-\Theta}}{\delta_T^2(1 - \Theta)^2}\right], 
\end{equation}
if $\alpha = \alpha_{\ell}$. 
\end{enumerate}
\item Actual plane
\begin{enumerate}[leftmargin=*]
	\item[(a)] The actual position $r(t)$ remains bounded within the circle
	\begin{equation}\label{r_bound_new}
		\left|r(t) + \frac{\alpha^2 - \nu^2_\Theta}{\alpha(1 - \nu^2_\Theta)}\right| \leq \left|\frac{\nu_\Theta(1 - \alpha^2)}{\alpha(1 - \nu^2_\Theta)}\right|,
	\end{equation}
irrespective of the roots $\alpha_s$ or $\alpha_{\ell}$ for all $t \geq 0$. 
	\item[(b)] The actual control $\omega$ in \eqref{control_law_in_transformed_plane_parameters_final} is uniformly bounded by
	\begin{align}
		\nonumber |\omega| &\leq \frac{v}{\sigma}\left|\frac{\alpha}{1 - \alpha^2}\right| \left[(1 + \sigma + \nu_{\Theta})^2 - \sigma^2 \right] \\
	\label{strict_bound_omega_1}& \qquad \qquad + \frac{\kappa}{\sigma}\left[\frac{\nu_{\Theta}}{\delta_T^2(1 - \Theta)^2}\right],	\quad \text{if~} \alpha = \alpha_s,\\
	\nonumber |\omega| &\leq \frac{v}{\sigma}\left|\frac{\alpha}{1 - \alpha^2}\right| \left[(1 + \sigma + \nu_{-\Theta})^2 - \sigma^2 \right] \\
	\label{strict_bound_omega_2} & \qquad \qquad + \frac{\kappa}{\sigma}\left[\frac{\nu_{-\Theta}}{\delta_T^2(1 - \Theta)^2}\right],	\quad \text{if~} \alpha = \alpha_{\ell}. 
	\end{align}	
\end{enumerate}
\end{itemize}
\end{thm}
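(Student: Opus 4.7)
The overall plan is to extract a sharper pointwise bound from the Lyapunov descent already established in Theorem~\ref{thm_control_law_transformed}, and then push this bound through the triangle inequality, the control formulas \eqref{control_law_transformed} and \eqref{control_law_in_transformed_plane_parameters_final}, and the inverse-M\"{o}bius algebra used in Theorem~\ref{thm_problem_1_solution}(b).

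First I would prove \eqref{strict_bound_error}. Since $\dot{\mathcal{S}}\leq 0$ along trajectories of \eqref{tdyn} under \eqref{control_law_transformed}, the BLF satisfies $\mathcal{S}(t)\leq \mathcal{S}(0)$ for all $t\geq 0$. Inverting the logarithmic form \eqref{Lyapunov_function} gives $\delta_T^2-|\mathcal{E}(t)|^2\geq \delta_T^2\,{\rm e}^{-2\mathcal{S}(0)}$, hence $|\mathcal{E}(t)|\leq \delta_T\sqrt{1-{\rm e}^{-2\mathcal{S}(0)}}=\delta_T\Theta$. For \eqref{strict_bound_rho}, I would apply the reverse triangle inequality to $\mathcal{E}=\rho+i\sigma {\rm e}^{i\gamma}$, getting $\bigl||\rho(t)|-|\alpha|\bigr|\leq \delta_T\Theta$, and then substitute $\delta_T$ from \eqref{delta_transformed} separately in the two cases; the convex-combination identities defining $\nu_\Theta$ and $\nu_{-\Theta}$ in \eqref{convex_combination_radii_positive}--\eqref{convex_combination_radii_negative} collapse the two endpoints into precisely the stated intervals, with the roles of $\nu_\Theta$ and $\nu_{-\Theta}$ swapping between the $\alpha_s$ and $\alpha_\ell$ cases.

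Next, for \eqref{stric_bound_Omega_1}--\eqref{stric_bound_Omega_2}, I would apply the triangle inequality to \eqref{control_law_transformed} after substituting $|\dot{\rho}|$ from \eqref{transformed_speed_in_transformed_plane_parameters}. The term $|\dot{\rho}|\leq v\,|\alpha/(1-\alpha^2)|(1+|\rho|)^2$ is bounded by plugging in the appropriate upper bound on $|\rho(t)|$ from \eqref{strict_bound_rho} (namely $\nu_\Theta$ for $\alpha_s$ and $\nu_{-\Theta}$ for $\alpha_\ell$). The Cauchy--Schwarz step $|\langle \rho, {\rm e}^{i\gamma}\rangle|\leq |\rho|$ handles the numerator of the second term, and in the denominator I would use $\delta_T^2-|\mathcal{E}|^2 \geq \delta_T^2(1-\Theta)^2$, which is a conservative but valid lower bound since $(1-\Theta)^2\leq 1-\Theta^2$ for $\Theta\in[0,1]$. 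The bound \eqref{strict_bound_omega_1}--\eqref{strict_bound_omega_2} on $\omega$ proceeds identically from \eqref{control_law_in_transformed_plane_parameters_final}, with the additional third summand contributing $\tfrac{2|\alpha|v}{|1-\alpha^2|}(1+|\rho|)$; summing the three bounds and combining algebraically via the identity $(1+\sigma+\nu)^2-\sigma^2=(1+\nu)^2+2\sigma(1+\nu)$ yields the stated compact form.

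Finally, \eqref{r_bound_new} is established by replaying the algebra of Theorem~\ref{thm_problem_1_solution}(b) with $\nu_\Theta$ in place of $\nu=|(\lambda+\alpha)/\mu|$. From \eqref{strict_bound_rho} I have $|\rho(t)|\leq \nu_\Theta$ when $\alpha=\alpha_s$ and $|\rho(t)|\geq \nu_\Theta$ when $\alpha=\alpha_\ell$; substituting $\rho=\alpha(r+\alpha)/(1+\alpha r)$, squaring, and rearranging produces the analogue of \eqref{bound_equation_1}/\eqref{bound_equation_2} with $\nu_\Theta$, which completes the square into exactly \eqref{r_bound_new}. The main obstacle --- and the one step I would be most careful about --- is verifying that the sign of $1-\nu_\Theta^2$ cooperates so that both cases collapse into the same inequality. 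This is the delicate point: since $\nu_\Theta$ is a convex combination of $|\alpha|$ and $\nu$, and the identity $1-\nu^2=(1-\alpha^2)/(1+\alpha\lambda)$ from \eqref{nu} together with $1+\alpha\lambda>0$ (Theorem~\ref{thm_mapping of regions between circles}) places both $|\alpha|$ and $\nu$ on the same side of $1$ in each case, I can conclude $\nu_\Theta<1$ for $\alpha_s$ and $\nu_\Theta>1$ for $\alpha_\ell$. The inequality reversal when dividing by $1-\nu_\Theta^2<0$ in the $\alpha_\ell$ case then mirrors the flip already handled in Theorem~\ref{thm_problem_1_solution}(b), and both cases yield the same disk \eqref{r_bound_new}. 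Uniform boundedness of $\omega$ and $\Omega$ then follows because all quantities in the bounds are constants for given initial data.
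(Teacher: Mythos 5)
Your proposal is correct and follows essentially the same route as the paper's own proof: inverting the BLF monotonicity to get $|\mathcal{E}(t)|\leq\delta_T\Theta$, the reverse triangle inequality for the bounds on $|\rho(t)|$, triangle-inequality estimates on \eqref{control_law_transformed} and \eqref{control_law_in_transformed_plane_parameters_final} with the conservative denominator $\delta_T^2(1-\Theta)^2$ and the identity $(1+\sigma+\nu)^2-\sigma^2=(1+\nu)^2+2\sigma(1+\nu)$, and a replay of the algebra of Theorem~\ref{thm_problem_1_solution}(b) with $\nu_\Theta$ in place of $\nu$, including the sign analysis of $1-\nu_\Theta^2$ via the convex-combination squeeze between $|\alpha|$ and $|(\lambda+\alpha)/\mu|$. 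You even make explicit the step $(1-\Theta)^2\leq 1-\Theta^2$ that the paper leaves implicit, so nothing is missing.
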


\begin{proof}
We first prove the results in transformed plane. 
\begin{enumerate}[leftmargin=*]
\item[(a)] According to Theorem~\ref{thm_control_law_transformed}, since $|\mathcal{E}(t)| < \delta_T$ and $\dot{\mathcal{S}} \leq 0$, $\mathcal{S}(\mathcal{E}) \leq \mathcal{S}(0), \forall t \geq 0$. Replacing $\mathcal{S}(\mathcal{E})$ from \eqref{Lyapunov_function}, we have $\frac{1}{2}  \ln \left(\frac{\delta_T^2}{\delta_T^2 - |\mathcal{E}(t)|^2}\right) \leq \mathcal{S}(0) \implies \ln \left(\frac{\delta_T^2}{\delta_T^2 - |\mathcal{E}(t)|^2}\right) \leq 2\mathcal{S}(0), \forall t\geq 0$. Taking exponential on both sides, we get ${\delta_T^2}/(\delta_T^2 - |\mathcal{E}(t)|^2) \leq {\rm e}^{2\mathcal{S}(0)} \implies \delta_T^2 \leq {\rm e}^{2\mathcal{S}(0)} \left(\delta_T^2 - |\mathcal{E}(t)|^2\right) \implies |\mathcal{E}(t)| \leq \delta_T \sqrt{1-{\rm e}^{-2\mathcal{S}(0)}} = \delta_T\Theta,  \forall t \geq 0$. Further, substituting $\mathcal{E}$ from \eqref{error_transformed_plane}, we have $|\rho(t) + i \sigma {\rm e}^{i\gamma(t)}| \leq \delta_T\Theta$. Similar to the proof of part (b) of Theorem~\ref{thm_control_law_transformed}, it can be obtained that $|\alpha| - \delta_T\Theta \leq |\rho(t)| \leq |\alpha| + \delta_T\Theta, \forall t \geq 0$. Now, substituting $\delta_T$ from \eqref{delta_transformed} for $\alpha_s$ and $\alpha_{\ell}$, one can easily conclude the required result \eqref{strict_bound_rho}. 

\item[(b)] Substituting \eqref{transformed_speed_in_transformed_plane_parameters} into \eqref{control_law_transformed} and taking modulus on both sides, the uniform bound on $\omega$ can be obtained as
\begin{equation}\label{strict_bound_intermediate_1}
	|\Omega| \leq \frac{v}{\sigma}\left|\frac{\alpha(\rho - 1)^2}{1 - \alpha^2}\right|  + \frac{\kappa}{\sigma} \left|\frac{ \left \langle \rho, {\rm e}^{i\gamma} \right \rangle }{\delta_T^2 - |\mathcal{E}(\rho, \gamma)|^2}\right|, 
\end{equation} 
where we used the triangle inequality. It is worth noticing that $|\rho - 1|^2 = (\rho - 1)(\rho^\ast - 1) = |\rho|^2 - 2|\rho|\cos\psi + 1 \leq (1 + |\rho|)^2$, $\left|\left \langle \rho, {\rm e}^{i\gamma} \right \rangle\right| \leq |\rho|$ and $\delta^2_T - |\mathcal{E}|^2 \geq \delta^2_T{\rm e}^{-2\mathcal{S}(0)} = \delta^2_T(1 - \Theta^2)$, using the part (a) above. Applying these, it follows that 
\begin{equation}\label{strict_bound_intermediate_2}
|\Omega| \leq \frac{v}{\sigma}\left|\frac{\alpha}{1 - \alpha^2}\right|(1 + |\rho|)^2  + \frac{\kappa}{\sigma} \left[\frac{|\rho|}{\delta_T^2(1 - \Theta)^2}\right].		 
\end{equation}
Now, employing the upper bounds of $|\rho(t)|$ from \eqref{strict_bound_rho} for both $\alpha_s$ and $\alpha_{\ell}$, the required bounds in \eqref{stric_bound_Omega_1} and \eqref{stric_bound_Omega_2} follow, respectively.
\end{enumerate} 

Next, we prove results in case of actual plane. 

\begin{enumerate}[leftmargin=*]
\item[(a)] In the spirit of Theorem~\ref{thm_problem_1_solution}, we consider upper bound $\nu_{\Theta}$ for $\alpha = \alpha_s$ and lower bound $\nu_{\Theta}$ for $\alpha = \alpha_{\ell}$ in \eqref{strict_bound_rho}, respectively, as only these matters as far as Problems~\ref{problem} is concerned. Nevertheless, the analysis can be easily extended for the remaining bounds analogously, and omitted for brevity. From \eqref{convex_combination_radii_positive}, since $\nu_{\Theta}$ is a convex combination of the two radii, it follows that $|\alpha| \leq \nu_{\Theta} \leq |\frac{\lambda + \alpha}{\mu}|$ if $\alpha = \alpha_s$, and $|\frac{\lambda + \alpha}{\mu}| \leq \nu_{\Theta} \leq |\alpha|$ if $\alpha = \alpha_{\ell}$ (see Fig.~\ref{fig_problem_transformed_plane}). From the former (resp., later) inequality, it can be concluded that $1 - (\frac{\lambda + \alpha}{\mu})^2 \leq 1 - \nu^2_{\Theta} \leq 1 - \alpha^2 \implies  1 - \nu^2 \leq 1 - \nu^2_{\Theta} \leq 1 - \alpha^2$ if $\alpha = \alpha_s$ (resp., $1 - \alpha^2 \leq 1 - \nu^2_{\Theta} \leq 1 - \nu^2$ if $\alpha = \alpha_{\ell}$), where $1 - \nu^2$ is given by \eqref{nu}. As discussed in Theorem~\ref{thm_problem_1_solution}, since $1 - \nu^2 > 0$ for $\alpha = \alpha_s$ (resp., $1 - \nu^2 < 0$ for $\alpha = \alpha_{\ell}$), the preceding inequalities imply that $1 - \nu^2_{\Theta} > 0$ for $\alpha = \alpha_s$ (resp., $1 - \nu^2_{\Theta} < 0$ for $\alpha = \alpha_{\ell}$). Now, following along the similar lines of proof as in Theorem~\ref{thm_problem_1_solution}, it is straightforward to observe that the inequality \eqref{r_bound_new} follows analogously to \eqref{r_bound} for both the cases of $\alpha$.   

\item[(b)] From \eqref{control_relations_transformed_plane_parameters}, the uniform bound on $\omega$ can be obtained as
\begin{equation}\label{strict_bounnd_omega_intermdeditae}
|\omega| \leq |\Omega| + 2v\left|\frac{\alpha}{1 - \alpha^2}\right|\big{|}|\rho|\sin(\gamma-\psi)-\sin\gamma\big{|}.
\end{equation}
Exploiting that $\big{|}|\rho|\sin(\gamma-\psi)-\sin\gamma\big{|} \leq 1 + |\rho|$, along with, \eqref{strict_bound_intermediate_2} and \eqref{strict_bounnd_omega_intermdeditae}, and simplifying, yields 
\begin{equation}\label{strict_bound_intermediate_3}
|\omega| {\leq} \frac{v}{\sigma}\left|\frac{\alpha}{1 - \alpha^2}\right| \left[(1 + \sigma + |\rho|)^2 - \sigma^2 \right]  + \frac{\kappa}{\sigma}\left[\frac{|\rho|}{\delta_T^2(1 - \Theta)^2}\right].	 
\end{equation}
Now, using \eqref{strict_bound_rho} in the same spirit as in the above part (b) in the transformed plane, the required bounds \eqref{strict_bound_omega_1} and \eqref{strict_bound_omega_2} are obtained. 
\end{enumerate}
This concludes the proof. 
\end{proof}

\begin{figure*}[t!]
	\centering{
	\subfigure[Trajectory]{\includegraphics[width=3.5cm]{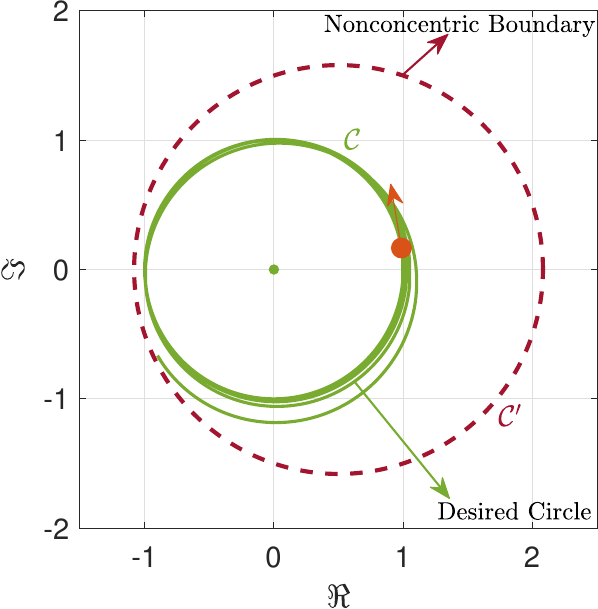}\label{trj_1}}\hspace*{0.3cm}
	\subfigure[Absolute error $|e|$]{\includegraphics[width=3.77cm]{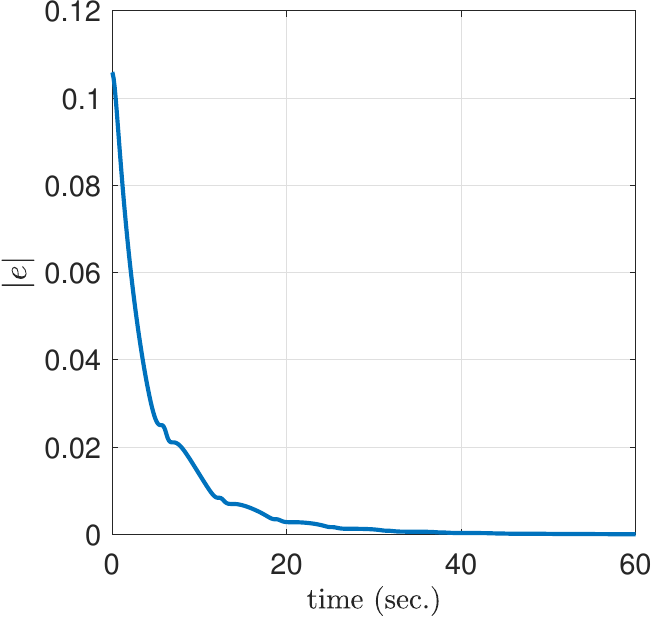}\label{error_1}}\hspace*{0.2cm}
	\subfigure[$\omega$ with $\alpha=\alpha_s$]{\includegraphics[width=3.74cm]{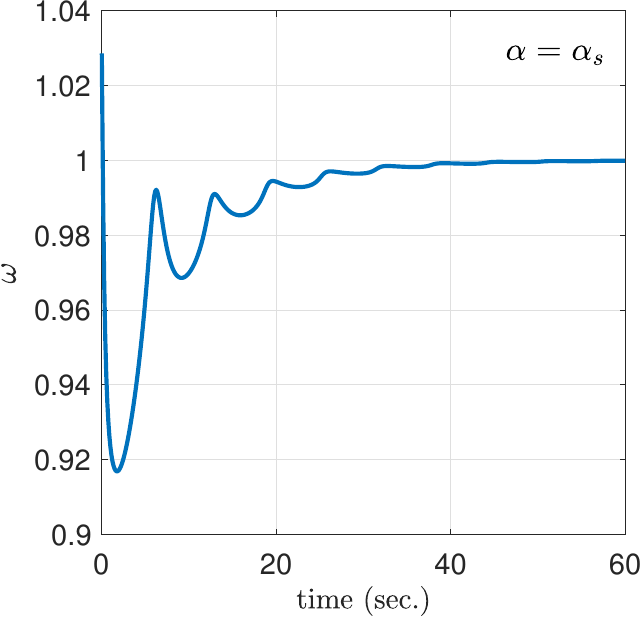}\label{omega_alpha_s}}\hspace*{0.2cm}
	\subfigure[$\omega$ with $\alpha=\alpha_{\ell}$]{\includegraphics[width=3.74cm]{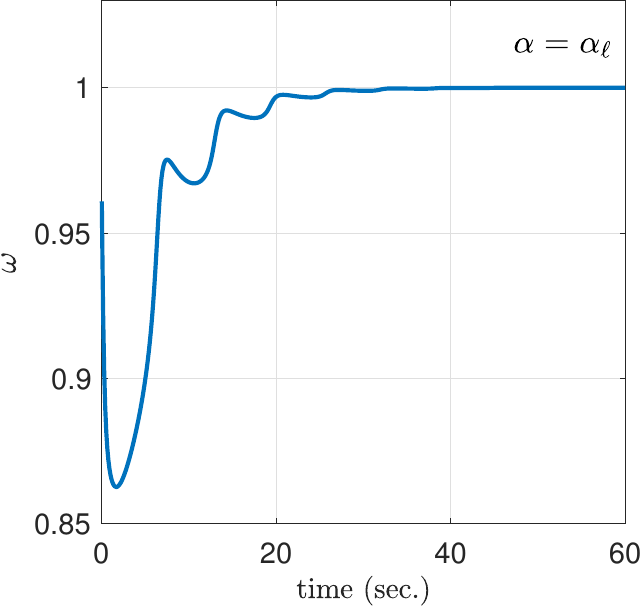}\label{omega_alpha_l}}
}
	\caption{Robot's trajectory and evolution of absolute error and control input with time in the actual plane.}
\label{plot1}
\end{figure*}

\begin{figure*}[t!]
	\centering{
	\subfigure[Trajectory with $\alpha = \alpha_s$]{\includegraphics[width=3.6cm]{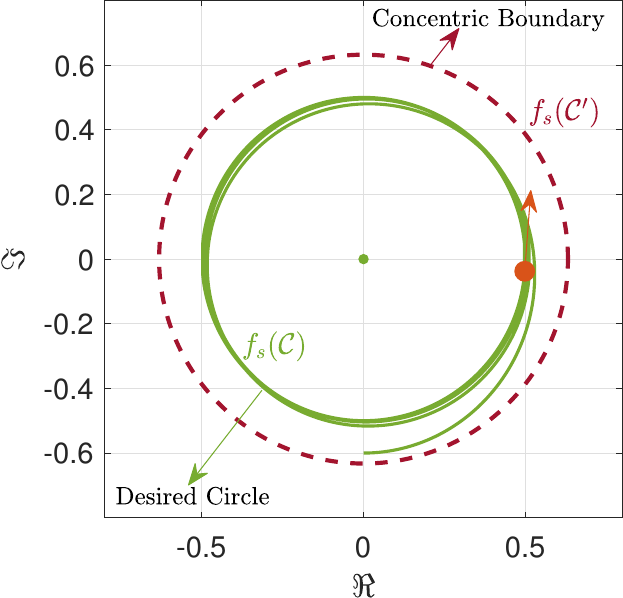}\label{trj_2}}\hspace*{0.3cm}
	\subfigure[Absolute error $|\mathcal{E}|$]{\includegraphics[width=3.76cm]{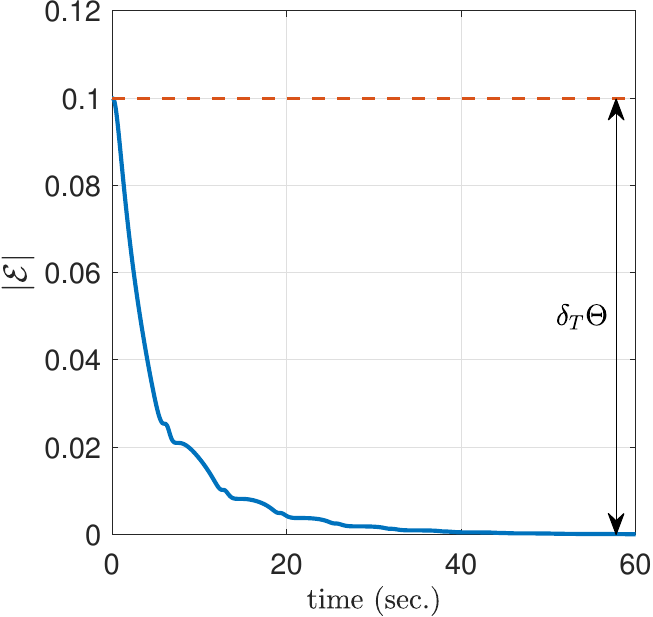}\label{error_2}}\hspace*{0.2cm}
	\subfigure[Control input $\Omega$]{\includegraphics[width=3.73cm]{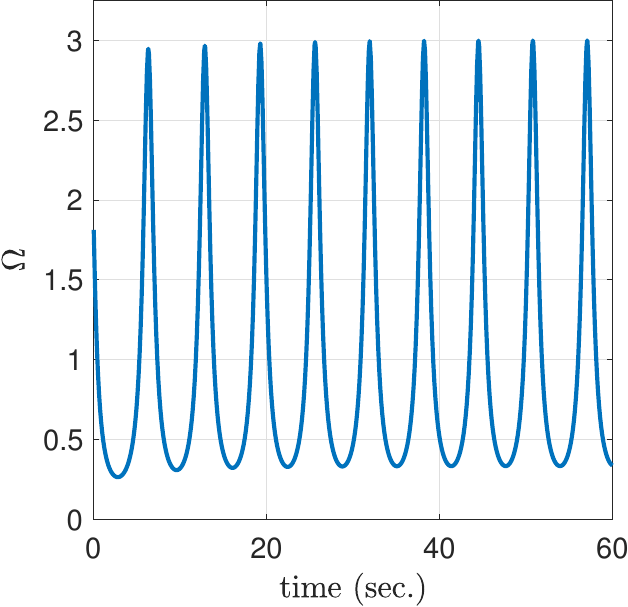}\label{OMEGA_alpha_s}}\hspace*{0.2cm}
	\subfigure[Trajectory with $\alpha = \alpha_{\ell}$]{\includegraphics[width=3.6cm]{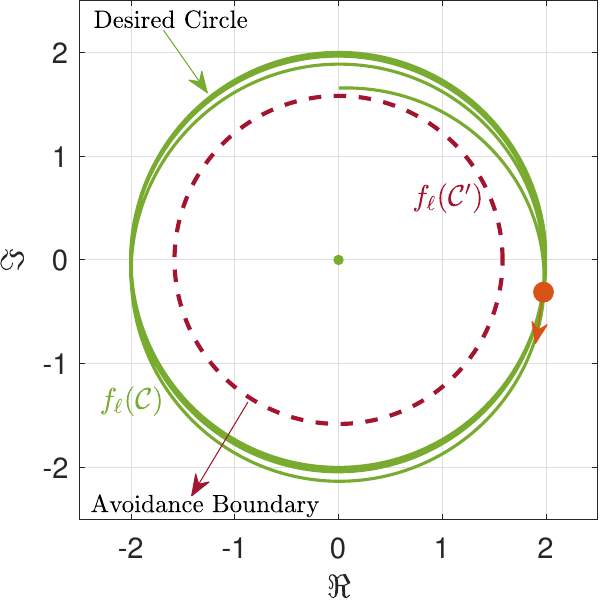}\label{trj_3}}
}
	\caption{Robot's trajectories for both $\alpha_s$ and $\alpha_{\ell}$, and evolution of absolute error and  control input in the transformed plane.}
    \label{plot2}	
\end{figure*}

\begin{figure*}[t!]
	\centering{
	\subfigure[Trajectory at $t=21$sec.]{\includegraphics[width=4.5cm]{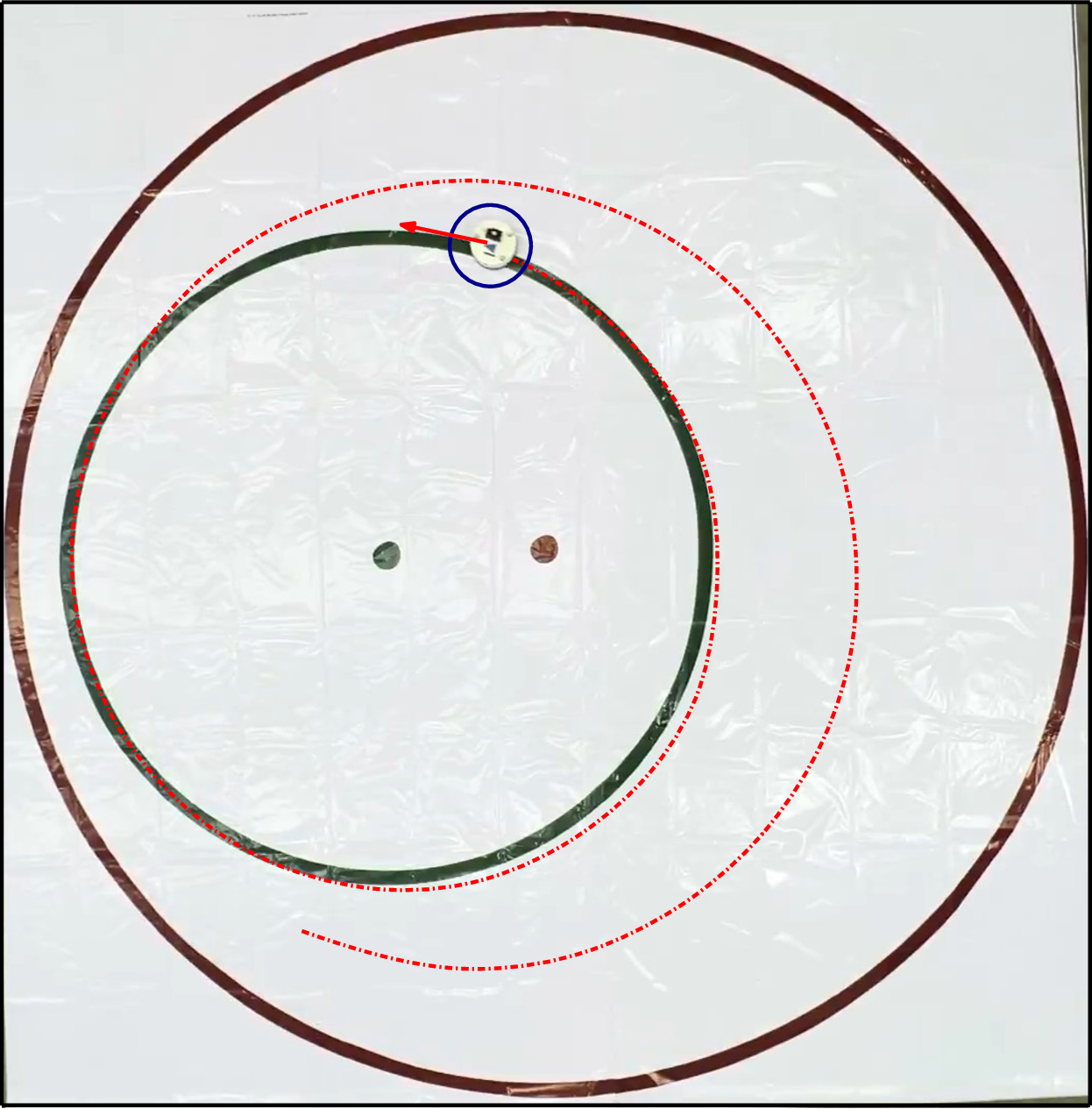}\label{khep_tr}}\hspace{1.2cm}	
	\subfigure[Trajectory in QTM]{\includegraphics[width=4.5cm]{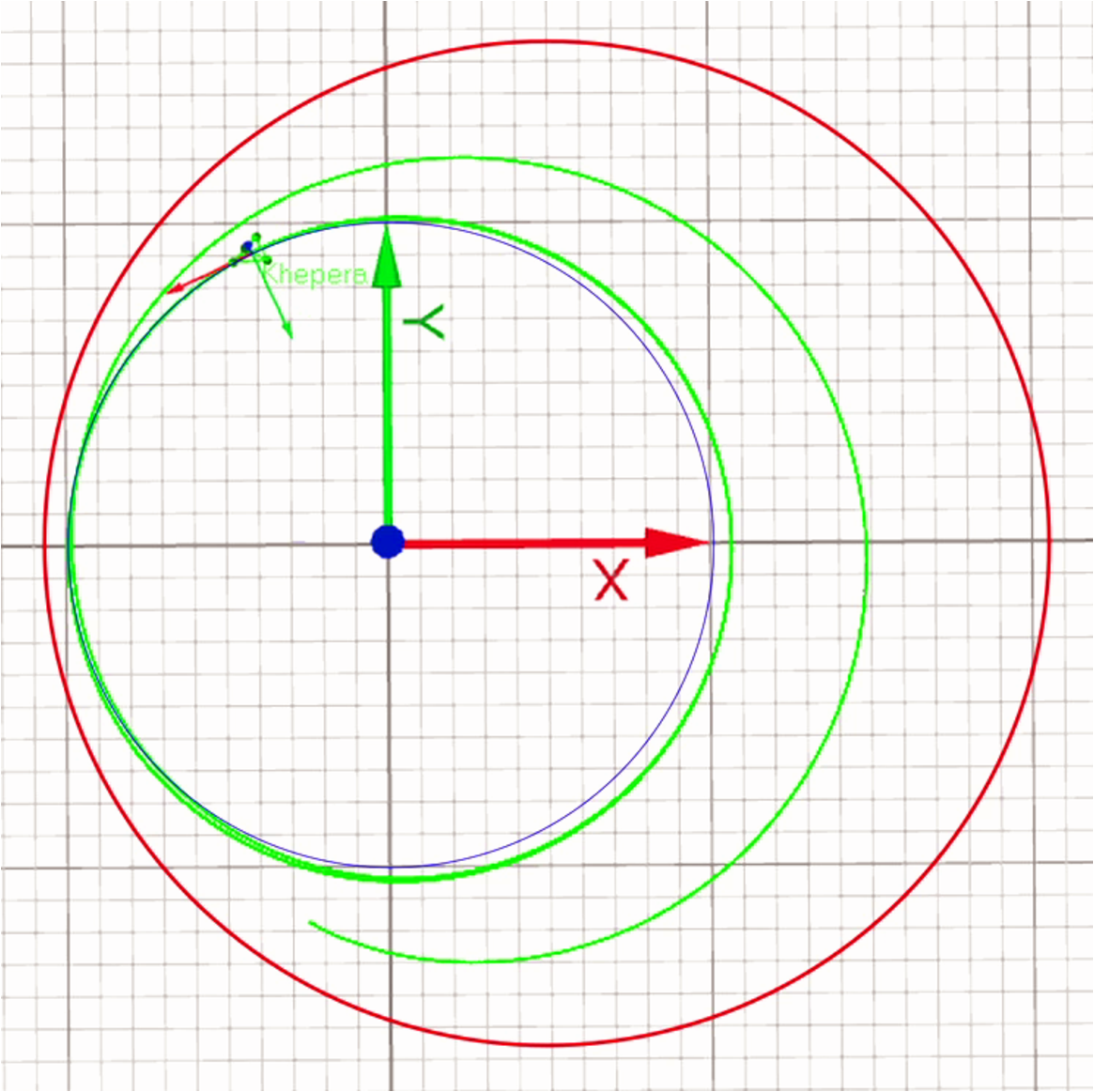}\label{qtm}}\hspace{0.72cm}	
	\subfigure[$\omega$ with $\alpha=\alpha_s$]{\includegraphics[width=4.7cm]{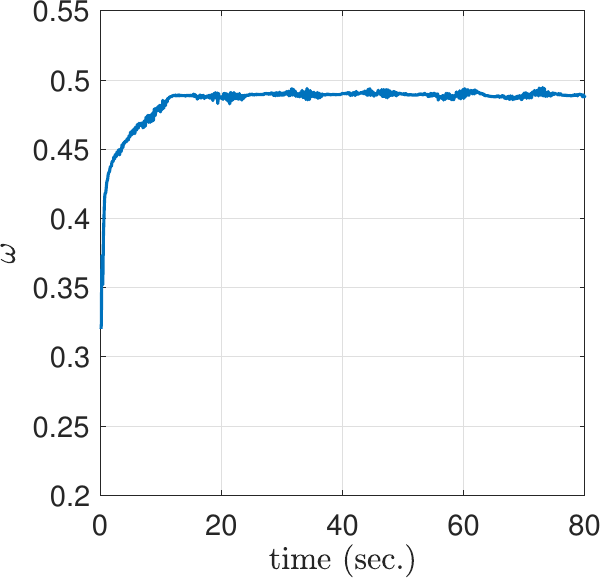}\label{khep_control}}
}
	\caption{Experimental results for Khepera IV robot.}
	\label{plot3}
	\vspace*{-5pt}	
\end{figure*}

\section{Simulation and Experimental Results}\label{section_7}

\subsection{Numerical Simulations}
For illustration, consider the scenario described in Example~\ref{example_1}, where we have nonconcentric circles with $\mu=\sqrt{5/2}$ and $\lambda = 1/2$. Let the robot in the actual plane be characterized by its initial position and heading angle $[x(0), y(0)]^\top = [-0.9, -0.6653]^\top$ and $\theta(0) = -60^\circ$, respectively, and move with constant linear speed $v=1$ m/s. It can be obtained that $|r(0)| = 1.1192$ and $\phi(0) = -143.5274^\circ$. For $\alpha = \alpha_s = 0.5$, $\delta_T = 0.1325$ using \eqref{delta_transformed}. It can be verified that \eqref{initial_conditions} is satisfied, since $\eta_a(0)=-0.045$, $\eta_b(0) = -0.0511$, and $\eta(0) = 0.6428$, with $\sgn(\Delta) = 1$ and $\sigma = 0.5$, resulting in $(\eta_a^2(0)+\eta_b^2(0)  =0.0046)<(\delta_T^2\eta^2(0)=0.0072)$. Using \eqref{position_trnsformed_plane} and \eqref{angle_relation_transformed_plane}, the initial position and heading angle in the transformed plane are obtained as $[\rho_x(0), \rho_y(0)]^\top = [0.0016, -0.6039]^\top$ and $\gamma(0) = 2.3326^\circ$, respectively. It can be further verified that $(|\mathcal{E}(0)|=0.106)<(\delta_T=0.1325)$.

On the other hand, for $\alpha = \alpha_{\ell} = 2$, $\sgn(\Delta)=-1$, $\sigma=-2$ and $\delta_T = 0.4189$. With the same initial conditions in the actual plane, \eqref{initial_conditions} can be verified again, since $\eta_a(0) = -0.5162$, $\eta_b(0) = 0.1741$, and $\eta(0)=1.5526$, resulting in $((\eta_a^2(0)+\eta_b^2(0)) = 0.2968)<(\delta_T^2\eta^2(0) = 0.4229)$. Using \eqref{position_trnsformed_plane} and \eqref{tan_gamma_1}, the position and heading angle in the transformed plane for $\alpha_{\ell}$ are obtained as $[\rho_x(0), \rho_y(0)]^\top = [0.0044, 1.656]^\top$ and $\gamma(0) = 2.0313^\circ$, respectively. One can easily verify that $(|\mathcal{E}(0)| = 0.3509) < (\delta_T = 0.4189)$.

We conducted simulations using control law \eqref{control_law_in_actual_plane_parameters_final} with gain $\kappa = 0.02$ and plotted various attributes in the actual plane as shown in Fig.~\ref{plot1}. It is evident from Fig.~\ref{trj_1} that the robot's trajectory converges to the desired circular orbit $\mathcal{C}$ while being confined within the nonconcentric outer boundary $\mathcal{C}'$. The absolute value of error \eqref{error} converges to the origin, as depicted in Fig.~\ref{error_1}. The control $\omega$ with roots $\alpha_s$ and $\alpha_{\ell}$ is plotted in Fig.~\ref{omega_alpha_s} and Fig.~\ref{omega_alpha_l}, respectively. For both values of $\alpha$, $\omega$ converges to $1$, as desired. Further, Fig.~\ref{plot2} illustrates the robot's motion in the transformed plane and the results are as expected. Please note the difference in the trajectory plots Fig.~\ref{trj_2} (trajectory-constraining) and Fig.~\ref{trj_3} (obstacle-avoidance) obtained for $\alpha_s$ and $\alpha_{\ell}$, respectively. The robot's motion is in the anticlockwise direction in Fig.~\ref{trj_2}, while it is in the clockwise direction in Fig.~\ref{trj_3}, in accordance with Theorem~\ref{thm_sense_of_rotation}.

\subsection{Experimental Results} 
To evaluate the performance of the proposed controller \eqref{control_law_in_actual_plane_parameters_final} in real-world scenarios, we conducted experiments using a Khepera IV\footnote{k-team.com/khepera-iv} differential drive ground robot. The experiment was facilitated by a motion capture (MoCap) validation system comprising overhead cameras. The MoCap system used Qualisys Track Manager (QTM) software for data recording and processing, providing precise feedback on the robot's pose data, including position and heading angle. Control commands were sent to the Khepera IV robot based on measurements obtained from QTM using the Robot Operating System (ROS).

To accommodate the size of Khepera IV robot, we adjusted the initial conditions slightly. The robot commenced its motion from the position $[x(0), y(0)]^\top = [-0.27\text{~m}, -1.15\text{~m}]^\top$ with an initial heading angle of $\theta(0) = -30^\circ$. Given that the Khepera IV is a differential drive robot, we convert the linear and angular velocities of the unicycle model to the individual wheel velocities of the Khepera IV robot as follows:
\begin{equation}\label{khepera}
v_r = v + \left(\frac{d_w}{2}\right) \omega,  \qquad v_{\ell} = v - \left(\frac{d_w}{2}\right) \omega,   
\end{equation}
where $v_r$ and $v_{\ell}$ are the speeds of the right and left wheels, respectively, and $d_w = 10.54$ cm is the distance between the two wheels of the Khepera IV robot. The robot was operated at a constant linear speed of $v = 0.5$ m/s, adhering to its hardware limit for each wheel speed of $0.814$ m/s. Using the control law \eqref{control_law_in_actual_plane_parameters_final}, we derived $v_r$ and $v_{\ell}$ in \eqref{khepera}. Fig.~\ref{khep_tr} shows the top view of the experimental result at $t=21$ seconds where the red dotted line depicts the robot's path. Throughout the experiment, the robot's trajectory was recorded using the QTM software, as depicted in Fig.\ref{qtm}. Here, the $X$-axis and $Y$-axis denote the real and imaginary axes, respectively. Further, Fig.~\ref{khep_control} plots the control law applied during the experiment, demonstrating that the turn rate converges to $\omega\approx0.5$ rad/sec at the steady state.

\begin{figure*}[t!]
	\centering{\hspace*{-0.3cm}
		\subfigure[$\alpha > 0$]{\includegraphics[width=6.0cm]{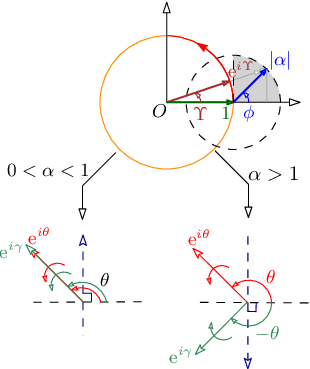}\label{alpha_1st}}\hspace*{1.0cm}
		\subfigure[$-1 < \alpha<  0$]{\includegraphics[width=3.9cm]{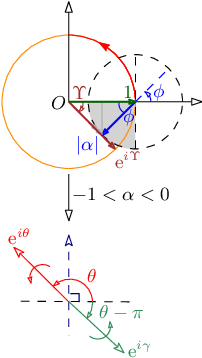}\label{alpha_2nd}}\hspace*{1.0cm}
		\subfigure[$\alpha < -1$]{\includegraphics[width=5.0cm]{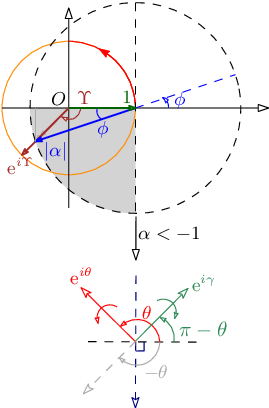}\label{alpha_3rd}}
	}
	\caption{Sense of rotation of the unit vectors ${\rm e}^{i\theta}$ and ${\rm e}^{i\gamma}$ depending on the roots $\alpha$ of \eqref{mobius_roots}.}
	\label{fig_resultant_vector}
	\vspace*{-10pt}	
\end{figure*}

\section{Conclusion and Further Remarks}\label{section_8}
Using the concepts of M\"{o}bius transformation and BLF, we investigated the problem of stabilizing a unicycle robot around a desired circular orbit, while limiting its trajectories within a nonconcentric circular boundary. Such problems find numerous applications in civilian and defense sectors where the motion of a robot is required to be confined within a given geofence. It was shown that the original problem can be equivalently formulated either as a trajectory-constraining problem or an obstacle-avoidance problem in the transformed plane, relying on the roots of \eqref{mobius_roots}. It was proved that a unique control law \eqref{control_law_transformed}, together with the condition $|\mathcal{E}(0)| < \delta_T$ as shown in Fig.~\ref{fig_problem_transformed_plane}, was sufficient to address both the contrasting scenarios in the transformed plane. The key advantage of this transformation lies in converting nonuniform boundary constraints into uniform ones for the robot's motion. Our further efforts have been on establishing a connection between the two control laws $\omega$ and $\Omega$ that was done using both the M\"{o}bius transformation \eqref{position_trnsformed_plane} and its inverse \eqref{inverse_position_in_actual_plane}. In carrying out this analysis, the most crucial relationship was to find a connection between the two heading angles $\theta$ and $\gamma$ in the two planes, which is done in Lemma~\ref{lem_heading_angle_relation_transformed_plane} and Lemma~\ref{lem_heading_angle_relation_actual_plane}, followed by the time-derivatives $\dot{\chi}$ and $\dot{\xi}$ of the transformation-infused angles $\chi$ and $\xi$, as done in Lemma~\ref{lem_chi_dot} and Lemma~\ref{lem_xi_dot}. Finally, we implemented the proposed control approach on the Khepera IV ground robot in a MoCap validation system.  

We would like to highlight an obvious but important fact that while transforming the motion of the robot under the M\"{o}bius transformation \eqref{mobius_transformation}, one should be careful that $\dot{\rho} \neq f(\dot{r})$ and $\mathcal{E} \neq f(e)$. The transformation \eqref{mobius_transformation} is applied only on the position, i.e., $\rho = f(r)$ and other attributes of the robot's motion such as speed and control law must be derived from relation \eqref{position_actual_plane} in the transformed plane. An extension of the problem to the multi-robot framework is an interesting avenue for future research. 


\appendices

\section{Proof of Theorem~\ref{thm_sense_of_rotation}}\label{appendix_A}

\begin{proof}[Proof of Theorem~\ref{thm_sense_of_rotation}]
	To prove the result, we leverage the relation \eqref{e_gamma_final} which connects the unit vectors ${\rm e}^{i\theta}$  and ${\rm e}^{i\gamma}$ along the robot's velocity direction in the actual and the transformed planes, respectively. To facilitate further discussion, we define the unit vector 
	\begin{equation}\label{unit_vector_1+alpha_r}
		\frac{1 + \alpha r}{|1 + \alpha r|} \coloneqq {\rm e}^{i\Upsilon}, \quad  \Upsilon = \arg(1 + \alpha r). 
	\end{equation}
	Using \eqref{unit_vector_1+alpha_r}, \eqref{e_gamma_final} can be expressed as 
	\begin{equation}\label{e_gamma_new}
		{\rm e}^{i\gamma} = \frac{\alpha(1 - \alpha^2)}{|\alpha(1 - \alpha^2)|}{\rm e}^{i(\theta - 2\Upsilon)}.
	\end{equation} 
	Note that \eqref{e_gamma_new} contains two terms $-$ a fraction depending only on $\alpha$ and a unit vector depending on the argument $\theta - 2\Upsilon$. In the following, we analyze both these terms corresponding to the robot's motion in the actual plane and comment on the robot's velocity direction $\gamma$ in the transformed plane. Since our interest is in checking only the sense of rotation, we consider, without loss of generality, the robot moves in the anticlockwise direction on the desired circle $\mathcal{C}: |z| = 1$ in the actual plane and observe its motion in any of the four quadrants. We consider here the first quadrant for simplicity where it holds that $\pi/2 \leq \theta \leq \pi$, $0 \leq \phi \leq \pi/2$ and $\theta - \phi = \pi/2$ (see Fig.~\ref{fig_problem}). Further, note that $1 + \alpha r$ is the resultant of vectors $1$ and $\alpha r$. Geometrically, this can be obtained by drawing a circle of radius $|\alpha r| = |\alpha|$ (since $|r| = 1$) with center at $1 + i0$, as shown in Fig.~\ref{fig_resultant_vector} (the unit circle is plotted in orange). The unit vector ${\rm e}^{i\Upsilon}$ along $1 + \alpha r$ is marked in brown in Fig.~\ref{fig_resultant_vector}. It is clear that the argument $\Upsilon$ of $1 + \alpha r$ varies with $\alpha$. For the robot's motion in the first quadrant (as shown by the red circular arc in Fig.~\ref{fig_resultant_vector}), we now consider the following four cases depending on the roots $\alpha$ of \eqref{mobius_roots}: 
	\begin{enumerate}[leftmargin=*]
		\item[(I)] If $0 < \alpha < 1$. In this case, vectors $1 + \alpha r$ lie in the shaded region in the 1st quadrant as depicted in Fig.~\ref{alpha_1st} and the argument of $1 + \alpha r$ is given by: 
		\begin{equation}\label{argument_1st}
			\Upsilon = \arctan\left[\frac{\alpha \sin \phi}{1 + \alpha \cos\phi}\right].
		\end{equation}
		From \eqref{argument_1st}, one can easily obtain $\Upsilon$ for the critical values of $\alpha$. If $\alpha \to 0$, $\Upsilon \to 0$. If  $\alpha \to 1$, $\Upsilon \to \arctan\left[\frac{\sin \phi}{1 + \cos\phi}\right] = \phi/2$, using the half-angle formula. Combining both cases, $0 < \Upsilon < \phi/2$. Consequently, it holds that $\theta - \phi < \theta - 2\Upsilon < \theta$. Using $\theta - \phi = \pi/2$, the preceding inequality becomes $\pi/2 < \theta - 2\Upsilon < \theta$. Since the fraction $\frac{\alpha(1 - \alpha^2)}{|\alpha(1 - \alpha^2)|}$ in \eqref{e_gamma_new} evaluates to $1$ in this case, ${\rm e}^{i\gamma} = {\rm e}^{i(\theta - 2\Upsilon)}$. Now, it immediately follows that $\pi/2 < \gamma < \theta$. This implies the counterclockwise rotation of the unit vector ${\rm e}^{i\gamma}$ in the transformed plane for the increasing $\theta$ (see Fig.~\ref{alpha_1st}, left figure). 
		
		\item[(II)] If $\alpha > 1$. Again, $\Upsilon$ is given by \eqref{argument_1st}. If $\alpha \to \infty$, $\Upsilon \to \phi$. Consequently, it holds that $\phi/2 < \Upsilon < \phi$, and hence, $\theta - 2\phi < \theta - 2\Upsilon < \theta - \phi$. Using $\theta - \phi = \pi/2$, the preceding inequality results in $\pi - \theta < \theta - 2\Upsilon < \pi/2$. Since the fraction $\frac{\alpha(1 - \alpha^2)}{|\alpha(1 - \alpha^2)|}$ in \eqref{e_gamma_new} evaluates to $-1$ in this case, ${\rm e}^{i\gamma} = -{\rm e}^{i(\theta - 2\Upsilon)} \implies {\rm e}^{i(\pi + \gamma)} = {\rm e}^{i(\theta - 2\Upsilon)}$. Alternatively, it follows that $\pi - \theta < \pi + \gamma < \pi/2 \implies -\theta < \gamma < -\pi/2$. This implies the clockwise rotation of ${\rm e}^{i\gamma}$ in the transformed plane for the increasing $\theta$ (see Fig.~\ref{alpha_1st}, right figure).
		
		\item[(III)] If $-1 < \alpha < 0$. In this case, vectors $1 + \alpha r$ lie in the shaded region in the 4th quadrant as shown in Fig.~\ref{alpha_2nd} and  
		\begin{equation}\label{argument_2nd}
			\Upsilon = -\arctan\left[\frac{|\alpha| \sin \phi}{1 - |\alpha| \cos\phi}\right].
		\end{equation}
		If $\alpha \to 0$, $\Upsilon \to 0$. If  $\alpha \to -1$, $\Upsilon \to -\arctan\left[\frac{\sin \phi}{1 - \cos\phi}\right] = -(\pi - \phi)/2$. This implies that $-(\pi - \phi)/2 < \Upsilon < 0$. Consequently, it can be concluded that $\theta < \theta - 2\Upsilon < 3\pi/2$, following the same steps as in the previous cases. Since $\frac{\alpha(1 - \alpha^2)}{|\alpha(1 - \alpha^2)|}$ in \eqref{e_gamma_new} evaluates to $-1$ in this case, ${\rm e}^{i\gamma} = -{\rm e}^{i(\theta - 2\Upsilon)} \implies {\rm e}^{i(\pi + \gamma)} = {\rm e}^{i(\theta - 2\Upsilon)}$. Alternatively, it follows that $\theta < \pi + \gamma < 3\pi/2 \implies \theta - \pi < \gamma < \pi/2$. This implies the clockwise rotation of ${\rm e}^{i\gamma}$ in the transformed plane for the increasing $\theta$ (see Fig.~\ref{alpha_2nd}).

		\item[(IV)] If $\alpha < -1$. In this case, the fraction $\frac{\alpha(1 - \alpha^2)}{|\alpha(1 - \alpha^2)|}$ in \eqref{e_gamma_new} evaluates to $1$. However, vectors $1 + \alpha r$ may lie in both 3rd and 4th quadrants (and hence, ${\rm e}^{i\Upsilon}$), depending on the magnitude of $\alpha$ (see Fig.~\ref{alpha_3rd}). We now consider the following two sub-cases:
		\begin{itemize}[leftmargin=*]
			\item Let ${\rm e}^{i\Upsilon}$ lies in the 3rd quadrant. In this situation, it must hold that $|\alpha|\cos\phi > 1 \implies \phi \in \left[0, \arccos\left(\frac{1}{|\alpha|}\right)\right)$ and
			\begin{equation}\label{argument_3rd}
				\Upsilon = -\left[\pi - \arctan\left[\frac{|\alpha| \sin \phi}{|\alpha| \cos\phi - 1}\right]\right].
			\end{equation}
			If  $\alpha \to -1$, $\Upsilon \to -\left[\pi - \arctan\left[\frac{\sin \phi}{\cos\phi - 1} \right]\right] = -(\pi -  (\phi - \pi)/2) = -(3\pi - \phi)/2$. If $\alpha \to -\infty$, $\Upsilon \to -(\pi - \phi)$. In other words, $-(\pi - \phi) < \Upsilon < -(3\pi - \phi)/2$. Consequently, it holds that $7\pi/2 < \theta - 2\Upsilon < 3\pi - \theta$, following the same steps as above. Alternatively, $3\pi/2 < \theta - 2\Upsilon -2\pi < \pi - \theta$. This implies the clockwise rotation of ${\rm e}^{i\gamma}$ in the transformed plane for the increasing $\theta$ (see Fig.~\ref{alpha_3rd}).
			
			\item Let ${\rm e}^{i\Upsilon}$ lies in the 4th quadrant. In this situation, it must hold that $|\alpha|\cos\phi < 1 \implies \phi \in \left(\arccos\left(\frac{1}{|\alpha|}\right), \frac{\pi}{2}\right]$ and $\Upsilon$ is given by \eqref{argument_2nd}. If  $\alpha \to -1$, $\Upsilon \to -(\pi - \phi)/2$. If $\alpha \to -\infty$, $\Upsilon \to -(\pi - \phi)$. In other words, $-(\pi - \phi) < \Upsilon < -(\pi - \phi)/2$. Consequently, it holds that $3\pi/2 < \theta - 2\Upsilon < 3\pi - \theta$. Alternatively, $-\pi/2 < \theta - 2\Upsilon -2\pi < \pi - \theta$. This again implies the clockwise rotation of ${\rm e}^{i\gamma}$ in the transformed plane for the increasing $\theta$ (see Fig.~\ref{alpha_3rd}).
		\end{itemize}
	\end{enumerate}
	Here, Case (I) and Case (III) refer to $\alpha = \alpha_s$ and the rest two cases are for $\alpha = \alpha_{\ell}$. This concludes the proof. 	
\end{proof}

\section{Proof of Lemma~\ref{lem_chi_dot}}\label{appendix_B}

Before discussing the proof of Lemma~\ref{lem_chi_dot}, we first present the following intermediate result. 

\begin{lem}[Relationship between $\theta$ and $\dot{\phi}$]\label{lem_phi_dot}
	The actual heading angle $\theta$ and the time-derivative $\dot{\phi}$ of the angle $\phi$ of the position vector $r$ in \eqref{position_actual_plane} are related as
	\begin{align}\label{phi_dot}
		\dot{\phi} = \frac{v\sin(\theta-\phi)}{|r|}.
	\end{align}
\end{lem}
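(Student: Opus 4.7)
The plan is to start from the polar representation $r = |r|{\rm e}^{i\phi}$ in \eqref{position_actual_plane} and differentiate it with respect to time to expose $\dot{\phi}$ as a separate quantity, then match the result against the kinematics $\dot{r} = v{\rm e}^{i\theta}$ from \eqref{dyn}. Concretely, applying the product rule gives
\begin{equation*}
\dot{r} = \frac{d|r|}{dt}{\rm e}^{i\phi} + i|r|\dot{\phi}{\rm e}^{i\phi},
\end{equation*}
so equating this to $v{\rm e}^{i\theta}$ and multiplying both sides by ${\rm e}^{-i\phi}$ yields the single complex equation
\begin{equation*}
\frac{d|r|}{dt} + i|r|\dot{\phi} = v{\rm e}^{i(\theta - \phi)}.
\end{equation*}

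The next step is simply to take imaginary parts of the above identity. Since the left-hand side has real part $d|r|/dt$ and imaginary part $|r|\dot{\phi}$, while the right-hand side has imaginary part $v\sin(\theta-\phi)$, we immediately get $|r|\dot{\phi} = v\sin(\theta - \phi)$. Dividing by $|r|$ (which is nonzero since the robot is away from the origin in the configurations considered in Problem~\ref{problem}) gives the claimed formula \eqref{phi_dot}.

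There is no real obstacle here; the only thing to be slightly careful about is ensuring $|r| \neq 0$, which is guaranteed because $r(t)$ lies inside $\mathcal{Z}$ and, in the steady state of interest, on the unit circle $\mathcal{C}$. As a sanity check, taking real parts simultaneously delivers the companion relation $d|r|/dt = v\cos(\theta-\phi)$, which is the expected radial-velocity expression and confirms the decomposition is correct. Thus a two-line derivation suffices.
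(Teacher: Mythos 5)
Your proof is correct and follows essentially the same route as the paper: differentiate the polar form $r=|r|{\rm e}^{i\phi}$, substitute $\dot{r}=v{\rm e}^{i\theta}$, multiply by ${\rm e}^{-i\phi}$, and compare imaginary parts. The only cosmetic difference is that the paper first computes $\frac{d}{dt}|r| = v\cos(\theta-\phi)$ explicitly via the inner-product identity $\frac{d}{dt}\sqrt{\langle r,r\rangle} = \langle r,\dot{r}\rangle/\sqrt{\langle r,r\rangle}$ before comparing parts, whereas you keep $d|r|/dt$ symbolic and recover that relation as a sanity check from the real part.
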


\begin{proof}
	Following the fact that $\langle r, r \rangle = |r|^2$ and $\sqrt{\langle r, r \rangle} = |r|$, the	time-derivative of \eqref{position_actual_plane} can be obtained using chain rule as:
	\begin{equation}\label{r_dot}
		\dot{r} = \left(\frac{d}{dt} \sqrt{\langle r, r \rangle} \right) {\rm e}^{i\phi} + i|r|{\rm e}^{i\phi}\dot{\phi}.
	\end{equation}
	Here, the time-derivatives of $\langle r, r \rangle$ and $\sqrt{\langle r, r \rangle}$, can be obtained as follows:
	\begin{align}\label{mag_r_std}
		\nonumber \frac{d}{dt}\langle r,r \rangle &= \langle r, \dot{r} \rangle + \langle \dot{r}, r \rangle = 2 \langle r, \dot{r} \rangle\\
		&= 2|r|v\langle {\rm e}^{i\phi}, {\rm e}^{i\theta} \rangle = 2|r|v\cos(\theta-\phi).
	\end{align}
	Similarly,
	\begin{align}\label{mag_r_td}
		\frac{d}{dt}\sqrt{\langle r,r\rangle} &= \frac{2\langle  r , \dot{r} \rangle}{2\sqrt{\langle r,r\rangle}} = \frac{\langle  r , \dot{r} \rangle}{\sqrt{\langle r,r \rangle}} = v\cos(\theta-\phi).
	\end{align}
	Substituting \eqref{mag_r_td} into \eqref{r_dot}, we have
	\begin{align*}
		\dot{r} = v\cos(\theta-\phi){\rm e}^{i\phi} + i|r|{\rm e}^{i\phi}\dot{\phi},
	\end{align*}
	which, upon multiplication by ${\rm e}^{-i\phi}$ on both sides, yields
	\begin{equation*}
		\dot{r}{\rm e}^{-i\phi} = v\cos(\theta-\phi) + i|r|\dot{\phi}.
	\end{equation*}
	Substituting $\dot{r}$ from \eqref{dyn} and comparing the real and imaginary parts, we get \eqref{phi_dot}. 
\end{proof}

\begin{proof}[Proof of Lemma~\ref{lem_chi_dot}]
From \eqref{chi}, the time-derivative of $\chi$ can be obtained as:
\begin{equation} \label{chi_dot}
\dot{\chi} = \frac{f_1\dot{f}_2 - f_2\dot{f}_1}{f_1^2+f_2^2} \coloneqq \frac{\chi_n}{\chi_d}.
\end{equation}
Here, the denominator $\chi_d$ is obtained, using \eqref{f1f2}, as: 
\begin{align}\label{f1f2_square_new}
	\nonumber &f_1^2+f_2^2  = (1+2\alpha|r|\cos\phi+\alpha^2|r|^2\cos2\phi)^2\\
	\nonumber & \qquad +(2\alpha|r|\sin\phi +\alpha^2|r|^2\sin 2 \phi)^2\\
	\nonumber & = (1+2\alpha|r|\cos\phi)^2+\alpha^4|r|^4(\cos^22\phi+\sin^22\phi)\\
	\nonumber & \qquad +2\alpha^2|r|^2\cos2\phi+4\alpha^2|r|^2\sin^2\phi\\
	\nonumber & \qquad +4\alpha^3|r|^3(\cos\phi\cos2\phi+\sin\phi\sin2\phi)\\
	\nonumber & = 1+\alpha^4|r|^4+4\alpha^2|r|^2+4\alpha|r|\cos\phi\\
	\nonumber & \qquad +2\alpha^2|r|^2\cos2\phi+4\alpha^3|r|^3\cos\phi\\
	\nonumber & = 1+\alpha^4|r|^4+4\alpha^2|r|^2+4\alpha|r|\cos\phi\\
	\nonumber & \qquad +2\alpha^2|r|^2(2\cos^2\phi-1)+4\alpha^3|r|^3\cos\phi\\
	\nonumber & =(1+\alpha^2|r|^2)^2+4\alpha|r|\cos\phi(1+\alpha^2|r|^2)+4\alpha^2|r|^2\cos^2\phi\\
	& =(1+\alpha^2|r|^2+2\alpha|r|\cos\phi)^2=|1+\alpha r|^4. 
\end{align}
Further, the numerator $\chi_n$ in \eqref{chi_dot} can be derived exploiting \eqref{mag_r_std} and \eqref{mag_r_td}. Using these relations, the time-derivative of $f_1$ and $f_2$ is obtained from \eqref{f1f2} as:
\begin{subequations}\label{f1f2_dot}
\begin{align}
\nonumber \dot{f}_1 & = 2\alpha v\cos(\theta-\phi)(\cos\phi+\alpha|r|\cos2\phi)\\
& \qquad -2\alpha|r|\dot{\phi}(\sin\phi+\alpha|r|\sin2\phi),\\
\nonumber \dot{f}_2 & = 2\alpha v\cos(\theta-\phi)(\sin\phi+\alpha|r|\sin2\phi)\\
& \qquad +2\alpha|r|\dot{\phi}(\cos\phi+\alpha|r|\cos2\phi).
\end{align}	
\end{subequations}
Again using \eqref{f1f2}, \eqref{f1f2_dot} can be written as
\begin{align*}
\dot{f}_1 & = 2\tau (f_1-1-\alpha|r|\cos\phi)-2\dot{\phi}(f_2-\alpha|r|\sin\phi),\\
\dot{f}_2 & = 2\tau (f_2 - \alpha|r|\sin\phi) + 2\dot{\phi}(f_1-1-\alpha|r|\cos\phi).
\end{align*}
Furthermore, $f_1\dot{f}_2$ and $f_2\dot{f}_1$ are obtained as
\begin{subequations}\label{f1df2}
	\begin{align}
		f_1\dot{f}_2 & = f_1[2\tau (f_2 - \alpha|r|\sin\phi) + 2\dot{\phi}(f_1-1-\alpha|r|\cos\phi)]\\
		f_2\dot{f}_1 & = f_2[2\tau (f_1-1-\alpha|r|\cos\phi)-2\dot{\phi}(f_2-\alpha|r|\sin\phi)],
	\end{align}	
\end{subequations}
where we denote by
\begin{equation}\label{tau_ap}
\tau \coloneqq \frac{v\cos(\theta -\phi)}{|r|}.
\end{equation}
Now, substituting \eqref{f1df2} into \eqref{chi_dot}, $\chi_n$ is given by
\begin{align}\label{chi_n}
\nonumber	\chi_n & = f_1[2\tau (f_2 - \alpha|r|\sin\phi) + 2\dot{\phi}(f_1-1-\alpha|r|\cos\phi)]\\
\nonumber	&  \qquad - f_2[2\tau (f_1-1-\alpha|r|\cos\phi)-2\dot{\phi}(f_2-\alpha|r|\sin\phi)]\\
\nonumber	& = 2\dot{\phi}(f_1^2-f_1-f_1\alpha|r|\cos\phi+f_2^2-f_2\alpha|r|\sin\phi)\\
\nonumber	& \qquad + 2\tau(f_1 f_2 - f_1\alpha|r|\sin\phi - f_1 f_2 + f_2 + f_2\alpha|r|\cos\phi)\\
\nonumber	& = 2\dot{\phi}[f_1^2+f_2^2-f_1-\alpha|r|\underbrace{(f_1\cos\phi+f_2\sin\phi)}_{\vartheta_1}]\\
& \qquad + 2\tau(f_2-\alpha|r|\underbrace{(f_1\sin\phi-f_2\cos\phi)}_{\vartheta_2}),
\end{align}
where $\vartheta_1$ and $\vartheta_2$ can be simplified using \eqref{f1f2} as:
\begin{align}\label{vartheta1}
\nonumber	\vartheta_1 &= \cos\phi + 2\alpha|r|\cos^2\phi + \alpha^2|r|^2\cos\phi\cos2\phi\\
\nonumber	& \qquad +2\alpha|r|\sin^2\phi + \alpha^2|r|^2\sin\phi\sin2\phi\\
\nonumber	& = \cos\phi+2\alpha|r|+\alpha^2|r|^2\cos\phi \\
	& =2\alpha|r|+(1+\alpha^2|r|^2)\cos\phi,
\end{align}
and
\begin{align}\label{vartheta2}
\nonumber \vartheta_2 & = \sin\phi + 2\alpha|r|\cos\phi\sin\phi+\alpha^2|r|^2\cos2\phi\sin\phi\\
\nonumber	& \qquad -2\alpha|r|\cos\phi\sin\phi-\alpha^2|r|^2\cos\phi\sin\phi\\
	& = \sin\phi-\alpha^2|r|^2\sin\phi = (1-\alpha^2|r|^2)\sin\phi.
\end{align}
Now, using \eqref{vartheta1} and \eqref{vartheta2}, \eqref{chi_n} results in
\begin{align}
\nonumber \chi_n	& = 2\dot{\phi}[f_1^2+f_2^2-f_1-\alpha|r|(2\alpha|r|+(1+\alpha^2|r|^2)\cos\phi)]\\
	& \qquad + 2\tau[f_2-\alpha|r|(1-\alpha^2|r|^2)\sin\phi],
\end{align}
that, on further substitution for $f_1^2+f_2^2$ from \eqref{f1f2_square_new} and using \eqref{f1f2} again, yields
\begin{align}\label{chi_num_new}
\nonumber	\chi_n	& = 2\dot{\phi}[(1+\alpha^2|r|^2+2\alpha|r|\cos\phi)^2-1-2\alpha|r|\cos\phi\\
\nonumber	&\qquad -\alpha^2|r|^2\cos2\phi-\alpha|r|(2\alpha|r|+(1+\alpha^2|r|^2)\cos\phi)]\\
\nonumber	&\qquad + 2\tau[2\alpha|r|\sin\phi +\alpha^2|r|^2\sin 2 \phi-\alpha|r|(1-\alpha^2|r|^2)\sin\phi]\\
\nonumber	& = 2\dot{\phi}[(1+\alpha^2|r|^2+2\alpha|r|\cos\phi)^2 - (1+\alpha^2|r|^2+2\alpha|r|\cos\phi)\\
\nonumber	&\qquad -\alpha|r|\cos\phi(1+\alpha^2|r|^2+2\alpha|r|\cos\phi)]\\
\nonumber	&\qquad +2\tau\alpha|r|\sin\phi(1+\alpha^2|r|^2+2\alpha|r|\cos\phi)\\
\nonumber	& =2(1+\alpha^2|r|^2+2\alpha|r|\cos\phi)[\dot{\phi}(1+\alpha^2|r|^2\\
	        & \qquad +2\alpha|r|\cos\phi-1-\alpha|r|\cos\phi) +\tau\alpha|r|\sin\phi].
\end{align}
Next, substituting \eqref{f1f2_square_new} and \eqref{chi_num_new} into \eqref{chi_dot}, gives 
\begin{align*}
\dot{\chi} &= \frac{2(1+\alpha^2|r|^2+2\alpha|r|\cos\phi)\dot{\phi}(\alpha^2|r|^2+\alpha|r|\cos\phi)}{(1+\alpha^2|r|^2+2\alpha|r|\cos\phi)^2}\\
	&~~~+\frac{2(1+\alpha^2|r|^2+2\alpha|r|\cos\phi)\tau\alpha|r|\sin\phi}{(1+\alpha^2|r|^2+2\alpha|r|\cos\phi)^2}\\
	&=\frac{2[\dot{\phi}(\alpha^2|r|^2+\alpha|r|\cos\phi)+\tau\alpha|r|\sin\phi]}{1+\alpha^2|r|^2+2\alpha|r|\cos\phi}.
\end{align*}
Further, substitution of $\tau$ from \eqref{tau_ap}, yields
\begin{align*}
\dot{\chi} & = \frac{2[\dot{\phi}\alpha|r|(\alpha|r|+\cos\phi)+\alpha v \cos(\theta-\phi)\sin\phi]}{1+\alpha^2|r|^2+2\alpha|r|\cos\phi},
\end{align*}
that, after using \eqref{phi_dot}, leads to
\begin{align*}
\dot{\chi} & = \frac{2[\alpha v \sin(\theta-\phi)(\alpha|r|+\cos\phi)+\alpha v \cos(\theta-\phi)\sin\phi]}{1+\alpha^2|r|^2+2\alpha|r|\cos\phi}\\
& = \frac{2\alpha v[\sin\theta+\alpha|r|\sin(\theta-\phi)]}{1+\alpha^2|r|^2+2\alpha|r|\cos\phi}\\
& = \frac{2\alpha v[\sin\theta+\alpha|r|\sin(\theta-\phi)]}{|1+\alpha r|^2}.
\end{align*}
This concludes the proof.
\end{proof}


\bibliographystyle{IEEEtran}
\bibliography{References}

\end{document}